\newcommand{\sign}{\textup{\textsf{sign}}}
\newcommand{\adv}{\mathrm{rob}}
\newcommand{\nat}{\mathrm{nat}}
\newcommand{\boundary}{\mathrm{DB}}
\newcommand{\ERM}{\mathrm{ERM}}
\newcommand{\Appendix}[1]{the full version for}
\newtheorem{theorem}{Theorem}[section]
\newtheorem{lemma}[theorem]{Lemma}
\newtheorem{remark}{Remark}
\newtheorem{assumption}{Assumption}
\renewcommand{\u}{\bm{u}}
\renewcommand{\v}{\bm{v}}
\newcommand{\w}{\bm{w}}
\newcommand{\x}{\bm{x}}
\newcommand{\y}{\bm{y}}
\newcommand{\I}{\mathbf{I}}
\newcommand{\R}{\mathbb{R}}
\newcommand{\X}{\bm{X}}
\newcommand{\Y}{\bm{Y}}
\newcommand{\0}{\mathbf{0}}
\newcommand{\1}{\mathbf{1}}
\renewcommand{\comment}[1]{}
\newcommand{\cA}{\mathcal{A}}
\newcommand{\cD}{\mathcal{D}}
\newcommand{\cF}{\mathcal{F}}
\newcommand{\cG}{\mathcal{G}}
\newcommand{\cL}{\mathcal{L}}
\newcommand{\cN}{\mathcal{N}}
\newcommand{\cP}{\mathcal{P}}
\newcommand{\cR}{\mathcal{R}}
\newcommand{\cX}{\mathcal{X}}
\newcommand{\bbB}{\mathbb{B}}
\newcommand{\bbE}{\mathbb{E}}
\newcommand{\bbS}{\mathbb{S}}
\DeclareMathOperator*{\argmax}{argmax}
\DeclareMathOperator*{\argmin}{argmin}
\title{Theoretically Principled Trade-off between Robustness and Accuracy}
\author{
Hongyang Zhang\thanks{Part of this work was done while H. Z. was visiting Simons Institute
for the Theory of Computing.} \\ CMU \& TTIC \\ \small{hongyanz@cs.cmu.edu} \and
Yaodong Yu\thanks{Part of this work was done while Y. Y. was an intern at Petuum Inc.} \\ University of Virginia \\ \small{yy8ms@virginia.edu} \and
Jiantao Jiao \\ UC Berkeley \\
\small{jiantao@eecs.berkeley.edu} \and
Eric P. Xing \\ CMU \& Petuum Inc. \\
\small{epxing@cs.cmu.edu} \and
Laurent El Ghaoui \\ UC Berkeley \\
\small{elghaoui@berkeley.edu} \\ \and
Michael I. Jordan \\ UC Berkeley \\
\small{jordan@cs.berkeley.edu}
}
\date{}
\begin{document}
\maketitle

\begin{abstract}
We identify a trade-off between robustness and accuracy that serves as a guiding principle in the design of defenses against adversarial examples. Although this problem has been widely studied empirically, much remains unknown concerning the theory underlying this trade-off. In this work, we decompose the prediction error for adversarial examples (robust error) as the sum of the natural (classification) error and boundary error, and provide a differentiable upper bound using the theory of classification-calibrated loss, which is shown to be the tightest possible upper bound uniform over all probability distributions and measurable predictors. Inspired by our theoretical analysis, we also design a new defense method, TRADES, to trade adversarial robustness off against accuracy. Our proposed algorithm performs well experimentally in real-world datasets. The methodology is the foundation of our entry to the NeurIPS 2018 Adversarial Vision Challenge in which we won the 1st place out of \textasciitilde2,000 submissions, surpassing the runner-up approach by $11.41\%$ in terms of mean $\ell_2$ perturbation distance.
\end{abstract}

\section{Introduction}
In response to the vulnerability of deep neural networks to small perturbations around input data~\cite{szegedy2013intriguing}, adversarial defenses have been an imperative object of study in machine learning~\cite{huang2017adversarial}, computer vision~\cite{song2018pixeldefend,xie2017adversarial,meng2017magnet}, natural language processing~\cite{jia2017adversarial}, and many other domains. In machine learning, study of adversarial defenses has led to significant advances in understanding and defending against adversarial threat~\cite{he2017adversarial}. In computer vision and natural language processing, adversarial defenses serve as indispensable building blocks for a range of security-critical systems and applications, such as autonomous cars and speech recognition authorization. The problem of adversarial defenses can be stated as that of learning a classifier with high test accuracy on both natural and \emph{adversarial examples}. The adversarial example for a given labeled data $(\x,y)$ is a data point $\x'$ that causes a classifier $c$ to output a different label on $\x'$ than $y$, but is ``imperceptibly similar'' to $\x$. Given the difficulty of providing an operational definition of ``imperceptible similarity,''  adversarial examples typically come in the form of \emph{restricted attacks} such as $\epsilon$-bounded perturbations~\cite{szegedy2013intriguing}, or \emph{unrestricted attacks} such as adversarial rotations, translations, and deformations~\cite{brown2018unrestricted,engstrom2017rotation,gilmer2018motivating,xiao2018spatially,alaifari2018adef,zhang2018limitations}. The focus of this work is the former setting, though our framework can be generalized to the latter. 

Despite a large literature devoted to improving the robustness of deep-learning models, many fundamental questions remain unresolved. One of the most important questions is how to trade off adversarial robustness against natural accuracy. Statistically, robustness can be be at odds with accuracy~\cite{tsipras2018robustness}. This has led to an empirical line of work on adversarial defense that incorporates various kinds of assumptions~\cite{su2018robustness,kurakin2016adversarial}. On the theoretical front, methods such as \emph{relaxation based defenses}~\cite{kolter2017provable,raghunathan2018certified}  provide provable guarantees for adversarial robustness. They, however, ignore the performance of classifier on the non-adversarial examples, and thus leave open the theoretical treatment of the putative robustness/accuracy trade-off.

The problem of adversarial defense becomes more challenging when computational issues are considered. For example, the straightforward empirical risk minimization (ERM) formulation of robust classification involves minimizing the robust 0-1 loss
$
\max_{\x': \|\x'-\x\|\le\epsilon} \1\{c(\x')\not=y\},
$
a loss which is NP-hard to optimize even if $\epsilon = 0$ in general. Hence, it is natural to expect that some prior work on adversarial defense replaced the 0-1 loss $\1(\cdot)$ with a surrogate loss~\cite{madry2018towards,kurakin2016adversarial,pmlr-v80-uesato18a}. However, there is little theoretical guarantee on the tightness of this approximation. 


\begin{figure}
\centering
\includegraphics[scale=0.6]{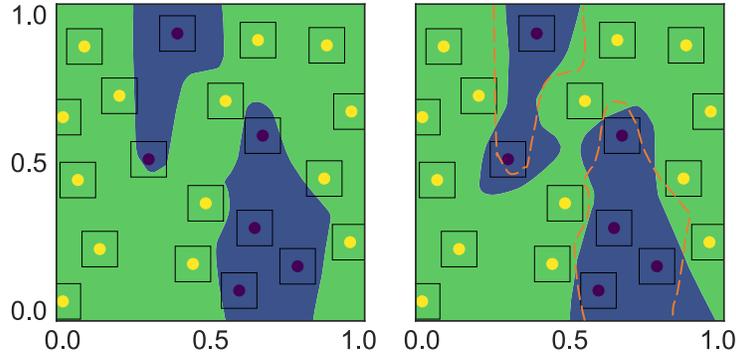}
\caption{\textbf{Left figure:} decision boundary learned by natural training method. \textbf{Right figure:} decision boundary learned by our adversarial training method, where the orange dotted line represents the decision boundary in the left figure. It shows that both methods achieve zero natural training error, while our adversarial training method achieves better robust training error than the natural training method.}
\label{figure: intro}
\end{figure}

\subsection{Our methodology and results}

We begin with an example that illustrates the trade-off between accuracy and adversarial robustness in Section~\ref{section: trade-off between natural and robust errors}. This phenomenon was first theoretically demonstrated by \cite{tsipras2018robustness}. We construct another toy example where the Bayes optimal classifier achieves \emph{natural error} $0\%$ and \emph{robust error} $100\%$, while the trivial all-one classifier achieves both \emph{natural error} and \emph{robust error} $50\%$ (Table~\ref{table: comparison of error}). While a large literature on the analysis of robust error in terms of generalization~\cite{schmidt2018adversarially,NIPS2018_7307,yin2018rademacher} and computational complexity~\cite{bubeck2018adversarial1,bubeck2018adversarial2}, in this work we focus on how to address the trade-off between the natural error and the robust error.

We show that the \emph{robust error} can in general be bounded tightly using two terms: one corresponds to the \emph{natural error} measured by a \emph{surrogate} loss function, and the other corresponds to how likely the input features are close to the $\epsilon$-extension of the decision boundary, termed as the \emph{boundary error}. We then minimize the differentiable upper bound. Our theoretical analysis naturally leads to a new formulation of adversarial defense which has several appealing properties; in particular, it inherits the benefits of scalability to large datasets exhibited by Tiny ImageNet, and the algorithm
achieves state-of-the-art performance on a range of benchmarks while providing theoretical guarantees. For example, while the defenses overviewed in \cite{athalye2018obfuscated} achieve robust accuracy no higher than \textasciitilde$47\%$ under white-box attacks, our method achieves robust accuracy as high as \textasciitilde$57\%$ in the same setting. The methodology is the foundation of our entry to the NeurIPS 2018 Adversarial Vision Challenge where we won first place out of \textasciitilde2,000 submissions, surpassing the runner-up approach by $11.41\%$ in terms of mean $\ell_2$ perturbation distance.



\subsection{Summary of contributions}
Our work tackles the problem of trading accuracy off against robustness and advances the state-of-the-art in multiple ways.
\vspace{-0.1cm}
\begin{itemize}
\item
Theoretically, we characterize the trade-off between accuracy and robustness for classification problems via decomposing the robust error as the sum of the natural error and the boundary error. We provide differentiable upper bounds on both terms using the theory of classification-calibrated loss, which are shown to be the tightest upper bounds uniform over all probability distributions and measurable predictors. 
\item
Algorithmically, inspired by our theoretical analysis, we propose a new formulation of adversarial defense, {TRADES}, as optimizing a regularized surrogate loss. The loss consists of two terms: the term of empirical risk minimization encourages the algorithm to maximize the natural accuracy, while the regularization term encourages the algorithm to push the decision boundary away from the data, so as to improve adversarial robustness (see Figure \ref{figure: intro}).
\item
Experimentally, we show that our proposed algorithm outperforms state-of-the-art methods under both black-box and white-box threat models. In particular, the methodology won the final round of the NeurIPS 2018 Adversarial Vision Challenge.
\end{itemize}

\section{Preliminaries}

We illustrate our methodology using the framework of binary classification, but it can be generalized to other settings as well. 

\subsection{Notations}

We will use \emph{bold capital} letters such as $\X$ and $\Y$ to represent random vector, \emph{bold lower-case} letters such as $\x$ and $\y$ to represent realization of random vector, \emph{capital} letters such as $X$ and $Y$ to represent random variable, and \emph{lower-case} letters such as $x$ and $y$ to represent realization of random variable. Specifically, we denote by $\x\in\cX$ the sample instance, and by $y\in\{-1,+1\}$ the label, where $\cX\subseteq \R^d$ indicates the instance space. $\sign(x)$ represents the sign of scalar $x$ with $\sign(0)=+1$. Denote by $f:\cX\rightarrow \R$ the \emph{score function} which maps an instance to a confidence value associated with being positive. It can be parametrized, e.g., by deep neural networks. The associated binary classifier is $\sign(f(\cdot))$. We will frequently use $\1\{\text{event}\}$, the 0-1 loss, to represent an indicator function that is $1$ if an event happens and $0$ otherwise. For norms, we denote by $\|\x\|$ a generic norm. Examples of norms include $\|\x\|_\infty$, the infinity norm of vector $\x$, and $\|\x\|_2$, the $\ell_2$ norm of vector $\x$. We use $\bbB(\x,\epsilon)$ to represent a neighborhood of $\x$: $\{\x'\in\cX:\|\x'-\x\|\le\epsilon\}$. For a given score function $f$, we denote by $\boundary(f)$ the decision boundary of $f$; that is, the set $\{\x\in\cX:f(\x)=0\}$. The set $\bbB(\boundary(f),\epsilon)$ denotes the neighborhood of the decision boundary of $f$: $\{\x\in\cX:\exists \x'\in\bbB(\x,\epsilon)\text{ s.t. } f(\x)f(\x')\le 0\}$. For a given function $\psi(\u)$, we denote by $\psi^*(\v):=\sup_{\u}\{\u^T\v-\psi(\u)\}$ the conjugate function of $\psi$, by $\psi^{**}$ the bi-conjugate, and by $\psi^{-1}$ the inverse function. We will frequently use $\phi(\cdot)$ to indicate the surrogate of 0-1 loss.

\subsection{Robust (classification) error}

In the setting of adversarial learning, we are given a set of instances $\x_1,...,\x_n\in\cX$ and labels $y_1,...,y_n\in\{-1,+1\}$. We assume that the data are sampled from an unknown distribution $(\X,Y)\sim\cD$. To characterize the robustness of a score function $f: \cX\rightarrow\R$, \cite{schmidt2018adversarially,NIPS2018_7307,bubeck2018adversarial1} defined \emph{robust (classification) error} under the threat model of bounded $\epsilon$ perturbation:
$
\cR_\adv(f):=\bbE_{(\X,Y)\sim\cD}\1\{\exists \X'\hspace{-0.1cm}\in\hspace{-0.1cm}\bbB(\X,\epsilon) \text{ s.t. } f(\X')Y\le 0\}.
$
This is in sharp contrast to the standard measure of  classifier performance---the \emph{natural (classification) error}
$
    \cR_\nat(f):=\bbE_{(\X,Y)\sim\cD}\1\{f(\X)Y\le 0\}.
$
We note that the two errors satisfy $\cR_\adv(f)\ge\cR_\nat(f)$ for all $f$; the robust error is equal to the natural error when $\epsilon=0$. 

\subsection{Boundary error}

We introduce the \emph{boundary error} defined as 
$
    \mathcal{R}_{\text{bdy}}(f) :=\bbE_{(\X,Y)\sim\cD}\1\{\X \in \mathbb{B}(\text{DB}(f),\epsilon), f(\X)Y> 0\}.
$
We have the following decomposition of $\cR_\adv(f)$:
\begin{align} \label{eqn.fundamentalequation}
    \cR_\adv(f) = \cR_\nat(f) +\cR_{\text{bdy}}(f). 
\end{align}

\subsection{Trade-off between natural and robust errors}
\label{section: trade-off between natural and robust errors}

Our study is motivated by the trade-off between natural and robust errors.
\cite{tsipras2018robustness} theoretically showed that training models to be robust may lead to a reduction of standard accuracy by constructing a toy example. To illustrate the phenomenon, we provide another toy example here.

\medskip
\noindent{\textbf{Example.}}
Consider the case $(X,Y)\sim\cD$, where the marginal distribution over the instance space is a uniform distribution over $[0,1]$, and for $k=0,1,...,\lceil\frac{1}{2\epsilon}-1\rceil$,
\begin{equation}
\label{equ: counterexample}
\begin{split}
\eta(x)&:=\Pr(Y=1|X=x)\\
&=
\begin{cases}
0, & x\in[2k\epsilon,(2k+1)\epsilon),\\
1, & x\in((2k+1)\epsilon,(2k+2)\epsilon].
\end{cases}
\end{split}
\end{equation}
See Figure \ref{figure: tradeoff} for the visualization of $\eta(x)$. We consider two classifiers: a) the Bayes optimal classifier $\sign(2\eta(x)-1)$; b) the all-one classifier which always outputs ``positive.'' Table \ref{table: comparison of error} displays the trade-off between natural and robust errors: the minimal natural error is achieved by the Bayes optimal classifier with large robust error, while the optimal robust error is achieved by the all-one classifier with large natural error.

\begin{figure}
\centering
\includegraphics[scale=0.8]{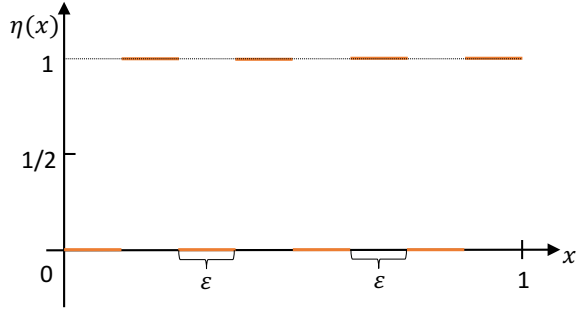}
\caption{Counterexample given by Eqn.~\eqref{equ: counterexample}.}
\label{figure: tradeoff}
\end{figure}

\begin{table}
\caption{Comparisons of natural and robust errors of Bayes optimal classifier and all-one classifier in example \eqref{equ: counterexample}. The Bayes optimal classifier has the optimal natural error while the all-one classifier has the optimal robust error.}
\label{table: comparison of error}
\centering
\begin{tabular}{c|c|c}%
\hline
& Bayes Optimal Classifier & All-One Classifier
\\
\hline
$\cR_\nat$ & 0 (optimal) & 1/2\\
\hline
$\cR_{\text{bdy}}$ & 1 & 0 \\
\hline
$\cR_\adv$ & 1 & 1/2 (optimal)\\
\hline
\end{tabular}
\vspace{-0.4cm}
\end{table}

\medskip
\noindent{\textbf{Our goal.}}
In practice, one may prefer to trade-off between robustness and accuracy by introducing weights in (\ref{eqn.fundamentalequation}) to bias more towards the natural error or the boundary error. Noting that both the natural error and the boundary error involve 0-1 loss functions, our goal is to devise \emph{tight} differentiable upper bounds on both of these terms. Towards this goal, we utilize the theory of classification-calibrated loss.




\subsection{Classification-calibrated surrogate loss}

\noindent{\textbf{Definition.}}
Minimization of the 0-1 loss in the natural and robust errors is computationally intractable and the demands of computational efficiency have led researchers to focus on minimization of a tractable \emph{surrogate loss}, $\cR_\phi(f):=\bbE_{(\X,Y)\sim\cD}\phi(f(\X)Y)$.  We then need to find quantitative relationships between the excess errors associated with $\phi$ and those associated with 0–1 loss. We make a weak assumption on $\phi$: it is \emph{classification-calibrated}~\cite{bartlett2006convexity}.  Formally, for $\eta\in[0,1]$, define the \emph{conditional $\phi$-risk} by
\begin{equation*}
H(\eta):=\inf_{\alpha\in\R} C_\eta(\alpha):=\inf_{\alpha\in\R} \left(\eta\phi(\alpha)+(1-\eta)\phi(-\alpha)\right),
\end{equation*}
and define $H^-(\eta):=\inf_{\alpha(2\eta-1)\le 0} C_\eta(\alpha)$. The classification-calibrated condition requires that imposing the
constraint that $\alpha$ has an inconsistent sign with the Bayes decision rule $\sign(2\eta-1)$ leads to a strictly larger $\phi$-risk:
\begin{assumption}[Classification-Calibrated Loss]
\label{assumption: classification-calibrated}
We assume that the surrogate loss $\phi$ is classification-calibrated, meaning that for any $\eta\not=1/2$, $H^-(\eta)>H(\eta)$.
\end{assumption}
We argue that Assumption \ref{assumption: classification-calibrated} is indispensable for classification problems, since without it the Bayes optimal classifier cannot be the minimizer of the $\phi$-risk.
Examples of classification-calibrated loss include hinge loss, sigmoid loss, exponential loss, logistic loss, and many others (see Table \ref{table: examples of classification-calibrated loss}).

\medskip
\noindent{\textbf{Properties.}} Classification-calibrated loss has many structural properties that one can exploit. We begin by introducing a functional transform of classification-calibrated loss $\phi$ which was proposed by \cite{bartlett2006convexity}.  Define the function $\psi:[0,1]\rightarrow[0,\infty)$ by $\psi=\widetilde\psi^{**}$, where
$\widetilde\psi(\theta):=H^-\left(\frac{1+\theta}{2}\right)-H\left(\frac{1+\theta}{2}\right)$. Indeed, the function $\psi(\theta)$ is the largest convex lower bound on $H^-\left(\frac{1+\theta}{2}\right)-H\left(\frac{1+\theta}{2}\right)$. The value $H^-\left(\frac{1+\theta}{2}\right)-H\left(\frac{1+\theta}{2}\right)$ characterizes how close the surrogate loss $\phi$ is to the class of non-classification-calibrated losses.

Below we state useful properties of the $\psi$-transform. We will frequently use the function $\psi$ to bound $\cR_\adv(f)-\cR_\nat^*$.

\begin{lemma}[\cite{bartlett2006convexity}]
Under Assumption \ref{assumption: classification-calibrated}, the function $\psi$ has the following properties: $\psi$ is non-decreasing, continuous, convex on $[0,1]$ and $\psi(0) = 0$. 
\end{lemma}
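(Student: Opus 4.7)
My plan is to prove the four properties (non-decreasing, continuous, convex, $\psi(0)=0$) by first extracting what comes for free from the biconjugation $\psi = \widetilde\psi^{**}$, and then doing the remaining work on $\widetilde\psi$ itself, exploiting the symmetry of the conditional $\phi$-risk $C_\eta(\alpha) = \eta\phi(\alpha) + (1-\eta)\phi(-\alpha)$ under the simultaneous flip $\alpha \mapsto -\alpha$, $\eta \mapsto 1-\eta$.

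Convexity is immediate: the biconjugate of any function is convex and lower semicontinuous on its effective domain, so $\psi = \widetilde\psi^{**}$ is convex on $[0,1]$. For $\psi(0)=0$, observe that at $\eta = 1/2$ the sign constraint $\alpha(2\eta-1)\le 0$ is vacuous, so $H^-(1/2) = H(1/2)$ and hence $\widetilde\psi(0) = 0$. In general $H^-(\eta) \ge H(\eta)$ by definition, so $\widetilde\psi \ge 0$ pointwise; since the zero function is a convex lower bound on $\widetilde\psi$, we get $\psi = \widetilde\psi^{**} \ge 0$. Combining $\psi(0)\le\widetilde\psi(0) = 0$ with $\psi(0)\ge 0$ yields $\psi(0)=0$.

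The main obstacle is monotonicity, since the biconjugate of a non-decreasing function need not be non-decreasing without further structure. The plan is to first show $\widetilde\psi$ itself is non-decreasing on $[0,1]$. Using the substitution $\alpha \mapsto -\alpha$ in $C_\eta$ one checks the reflection identities $H(\eta)=H(1-\eta)$ and $H^-(\eta)=H^-(1-\eta)$, so $\widetilde\psi$ is an even function of the signed parameter $2\eta-1$. Then for $1/2 \le \eta_1 \le \eta_2 \le 1$ I would argue $H^-(\eta_2)-H(\eta_2) \ge H^-(\eta_1)-H(\eta_1)$ by carefully comparing the unconstrained and sign-restricted infima: the constraint set in $H^-$ (namely $\alpha\le 0$ for $\eta>1/2$) becomes strictly more costly as $\eta$ moves farther from $1/2$, while $H$ varies more mildly because the minimizer is allowed to flip sign. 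Once $\widetilde\psi$ is non-decreasing on $[0,1]$, one can extend $\widetilde\psi$ to all of $\R$ by setting it to $+\infty$ outside $[0,1]$ and to $\widetilde\psi(0)=0$ on $(-\infty,0]$; the largest convex minorant of this extension is exactly $\psi$ on $[0,1]$ and is manifestly non-decreasing on $[0,1]$, giving the monotonicity of $\psi$.

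Finally, continuity on the open interval $(0,1)$ is automatic since a convex function that is finite on an open interval is continuous there. Continuity at the endpoint $0$ follows from $\psi(0)=0$, $\psi\ge 0$, monotonicity, and convexity: these force $\psi(t)\to 0$ as $t\downarrow 0$, because otherwise there would be a jump that violates convexity combined with $\psi(0)=0$. Continuity at $1$ follows from the fact that $\psi$ is convex and bounded on $[0,1]$ (since $\psi \le \widetilde\psi$ and $\widetilde\psi(1)$ is finite under mild assumptions on $\phi$), which again rules out a jump. The hardest step is the monotonicity of $\widetilde\psi$; everything else is a standard convex-analysis bookkeeping exercise.
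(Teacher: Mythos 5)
The paper does not prove this lemma; it simply cites \cite{bartlett2006convexity}, so there is no in-paper proof to compare against. Measured against the original argument in Bartlett, Jordan, and McAuliffe, your treatment of convexity, $\psi(0)=0$, and continuity is essentially right, but the monotonicity step has a genuine gap and, independently, takes an unnecessary detour.

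The gap: you say you would show that $\widetilde\psi$ itself is non-decreasing on $[0,1]$ by arguing that ``the constraint set in $H^-$ becomes strictly more costly as $\eta$ moves farther from $1/2$, while $H$ varies more mildly.'' This is a heuristic, not a proof. Both $H$ and $H^-$ are concave in $\eta$ (each is an infimum of affine functions of $\eta$), and their difference need not be monotone in any obvious way; you would have to control the two infima jointly, and the sentence as written does not do that. Moreover, you then build a hand-tailored extension of $\widetilde\psi$ to all of $\R$ and take its convex minorant, but that is not how $\psi$ is defined (the biconjugate is taken with $\widetilde\psi=+\infty$ outside $[0,1]$), so the identification ``this minorant equals $\psi$ on $[0,1]$'' would itself need justification.

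The detour is unnecessary because you already have everything needed: once you know $\psi$ is convex on $[0,1]$, $\psi\ge 0$, and $\psi(0)=0$, monotonicity is automatic. For $0<\theta_1\le\theta_2\le 1$ write $\theta_1=(1-t)\cdot 0+t\,\theta_2$ with $t=\theta_1/\theta_2\in(0,1]$; convexity gives $\psi(\theta_1)\le(1-t)\psi(0)+t\,\psi(\theta_2)=t\,\psi(\theta_2)\le\psi(\theta_2)$, the last step using $\psi(\theta_2)\ge 0$. No claim about $\widetilde\psi$ being monotone is required, and indeed Bartlett et al.\ do not make one. The same three facts (convex, nonnegative, zero at the origin) together with lower semicontinuity of the biconjugate also give continuity at both endpoints more cleanly than your boundedness argument: convexity with $\psi(0)=0$ forces $\psi(\theta)\le\theta\,\psi(1)$, so $\limsup_{\theta\uparrow 1}\psi(\theta)\le\psi(1)$, while lower semicontinuity gives the reverse inequality. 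I would restructure the proof so that monotonicity and endpoint continuity are derived as corollaries of the three easy facts, rather than attempted directly on $\widetilde\psi$.
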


\begin{table}
\caption{Examples of classification-calibrated loss $\phi$ and associated $\psi$-transform. Here $\psi_{\mathsf{log}}(\theta) =  \frac{1}{2}(1-\theta) \log_2(1-\theta) + \frac{1}{2}(1+\theta)\log_2(1+\theta)$.}
\label{table: examples of classification-calibrated loss}
\centering
\begin{tabular}{c|cc}%
\hline
Loss & $\phi(\alpha)$ & $\psi(\theta)$
\\
\hline
Hinge & $\max\{1-\alpha,0\}$ & $\theta$\\
Sigmoid & $1-\tanh(\alpha)$ & $\theta$\\
Exponential & $\exp(-\alpha)$ & $1-\sqrt{1-\theta^2}$\\
Logistic & $\log_2(1+\exp(-\alpha))$ & $ \psi_{\mathsf{log}}(\theta)$\\
\hline
\end{tabular}
\end{table}

\section{Relating 0-1 loss to Surrogate Loss}
\vspace{-0.05cm}

In this section, we present our main theoretical
contributions for binary classification and compare our results with  prior literature. Binary classification problems have received significant attention in recent years as many competitions evaluate the performance of robust models on binary classification problems~\cite{brown2018unrestricted}. We defer the discussion of multi-class problems to Section \ref{section: algorithmic design for adversarial defenses}.

\vspace{-0.2cm}
\subsection{Upper bound}
\label{section: upper bound}

Our analysis leads to a guarantee on the performance of surrogate loss minimization. Intuitively, by Eqn. \eqref{eqn.fundamentalequation}, $\cR_{\adv}(f)-\cR_{\nat}^*=\cR_\nat(f)-\cR_{\nat}^*+\cR_{\text{bdy}}(f)\le \psi^{-1}(\cR_\phi(f)-\cR_\phi^*)+\cR_{\text{bdy}}(f)$, where the last inequality holds because we choose $\phi$ as a classification-calibrated loss~\cite{bartlett2006convexity}. This leads to the following result.

\begin{theorem}
\label{theorem: surrogate function}
Let $\cR_\phi(f):=\bbE\phi(f(\X)Y)$ and $\cR_\phi^*:=\min_f \cR_\phi(f)$. Under Assumption \ref{assumption: classification-calibrated}, for any non-negative loss function $\phi$ such that $\phi(0)\ge 1$, any measurable $f:\cX\rightarrow \R$, any probability distribution on $\cX\times\{\pm 1\}$, and any $\lambda>0$, we have\footnote{We study the population form of the risk functions, and mention that by incorporating the generalization theory for classification-calibrated losses~\cite{bartlett2006convexity} one can extend the analysis to finite samples. We leave this analysis for future research.}
\begin{equation*}\vspace{-0.3cm}
\begin{split}
\cR_\adv(f)-\cR_\nat^*&\le \psi^{-1}(\cR_\phi(f)\hspace{-0.1cm}-\hspace{-0.1cm}\cR_\phi^*)\hspace{-0.1cm}+\hspace{-0.1cm}\Pr[\X\hspace{-0.1cm}\in\hspace{-0.1cm}\bbB(\boundary(f),\epsilon),f(\X)Y>0]\\
&\le \psi^{-1}(\cR_\phi(f)\hspace{-0.1cm}-\hspace{-0.1cm}\cR_\phi^*)+\bbE \max_{\X'\in\bbB(\X,\epsilon)}\phi(f(\X')f(\X)/\lambda).
\end{split}
\end{equation*}
\end{theorem}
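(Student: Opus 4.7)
The plan is to start from the decomposition $\cR_\adv(f) = \cR_\nat(f) + \cR_{\text{bdy}}(f)$ given in Eqn.~\eqref{eqn.fundamentalequation} and rewrite the excess robust risk as $\cR_\adv(f) - \cR_\nat^* = [\cR_\nat(f) - \cR_\nat^*] + \cR_{\text{bdy}}(f)$, then bound each summand separately. The first summand is a purely non-adversarial quantity, and the second encodes the ``price of perturbation,'' matching the structure of the claimed upper bound.

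For the natural-error term I would quote directly the main inequality of \cite{bartlett2006convexity}: under Assumption~\ref{assumption: classification-calibrated}, any measurable $f$ and any distribution on $\cX \times \{\pm 1\}$ satisfy $\psi(\cR_\nat(f) - \cR_\nat^*) \le \cR_\phi(f) - \cR_\phi^*$. Because $\psi$ is non-decreasing, continuous, convex on $[0,1]$ with $\psi(0)=0$ (the lemma stated in the text), applying the generalized inverse yields $\cR_\nat(f) - \cR_\nat^* \le \psi^{-1}(\cR_\phi(f) - \cR_\phi^*)$. For the boundary term I would simply unfold the definition, $\cR_{\text{bdy}}(f) = \Pr[\X \in \bbB(\boundary(f), \epsilon), f(\X)Y > 0]$. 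Summing the two bounds produces the first inequality of the theorem at once.

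To pass to the second inequality I would establish the pointwise domination $\1\{\x \in \bbB(\boundary(f),\epsilon),\, f(\x)y > 0\} \le \max_{\x' \in \bbB(\x,\epsilon)} \phi(f(\x')f(\x)/\lambda)$ for every $(\x,y)$ and every $\lambda > 0$, then take expectations. First drop the factor $\1\{f(\x)y > 0\}$, which is harmless because an indicator is at most $1$, and recall that $\x \in \bbB(\boundary(f), \epsilon)$ iff there exists $\x' \in \bbB(\x, \epsilon)$ with $f(\x)f(\x') \le 0$. Whenever such an $\x'$ exists, the argument of $\phi$ on the right is non-positive, so the plan is to argue $\phi \ge 1$ on $(-\infty, 0]$, which forces the max on the right to be at least $1$ and hence dominates the indicator. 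On the complementary event the indicator is $0$ while the right-hand side is non-negative, so the bound is trivial. The parameter $\lambda > 0$ only rescales the argument without changing its sign, which is why the step survives any positive $\lambda$; this free knob is what later lets the TRADES objective weight the natural and boundary terms against one another.

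The main obstacle is to justify $\phi(\alpha) \ge 1$ for all $\alpha \le 0$ from the stated hypotheses alone, namely non-negativity, $\phi(0) \ge 1$, and classification-calibration. The cleanest route is through the $H/H^-$ machinery underlying Assumption~\ref{assumption: classification-calibrated}: classification-calibration together with $\phi(0) \ge 1$ can be used to force $\phi$ to majorize the $0$--$1$ loss $\1\{\alpha \le 0\}$, either by a direct argument from the definitions of $H$ and $H^-$ or, in the presence of the mild monotonicity that all surrogates in Table~\ref{table: examples of classification-calibrated loss} satisfy, as an immediate consequence of $\phi$ being non-increasing with $\phi(0) \ge 1$. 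Once this ``surrogate dominates $0$--$1$'' fact is in hand, the two bounds above combine through Eqn.~\eqref{eqn.fundamentalequation} to produce the chain of inequalities in Theorem~\ref{theorem: surrogate function}.
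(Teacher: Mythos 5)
Your proof follows the paper's route exactly: decompose $\cR_\adv(f)-\cR_\nat^*$ via Eqn.~\eqref{eqn.fundamentalequation}, apply the $\psi$-transform comparison inequality of \cite{bartlett2006convexity} to the natural-error part, and dominate the boundary indicator pointwise by $\phi(f(\X')f(\X)/\lambda)$ before taking expectations.

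The one subtlety you flag at the end --- that the pointwise domination requires $\phi(\alpha)\ge 1$ for every $\alpha\le 0$, which is not an obvious consequence of non-negativity, $\phi(0)\ge 1$, and classification-calibration alone --- is real, but it is not a defect of your proposal relative to the paper: the published proof writes $\1\{f(\X')f(\X)/\lambda<0\}\le\phi(f(\X')f(\X)/\lambda)$ with no further justification, so it too is implicitly invoking the majorization $\phi\ge\1\{\cdot\le 0\}$. That majorization is immediate once one assumes, as you suggest, that $\phi$ is non-increasing, a property satisfied by every surrogate in Table~\ref{table: examples of classification-calibrated loss}. With that mild unstated hypothesis, both your argument and the paper's are complete and structurally identical; you have simply been more honest in surfacing the auxiliary assumption.
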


\comment{
\begin{remark}
When $g$ is a linear classifier going through origin and $X$ is drawn from isotropic log-concave distribution, we know that~\cite{awasthi2016learning}
\begin{equation}
\Pr[X\in\bbB(\boundary(g),\epsilon)]=\Theta(\epsilon).
\end{equation}
So linear classifier is robust. The reason why deep neural network is vulnerable to adversarial attack is that $\Pr[X\in\bbB(\boundary(g),\epsilon)]$ is large as the decision boundary is very complicated.
\end{remark}
}

\vspace{-0.05cm}
\medskip
\noindent{\textbf{Quantity governing model robustness.}}
Our result provides a formal justification for the existence of adversarial examples: learning models are vulnerable to small adversarial attacks because the probability that data lie around the decision boundary of the model, $\Pr[\X\in \bbB(\boundary(f),\epsilon),f(\X)Y>0]$, is large. As a result, small perturbations may move the data point to the wrong side of the decision boundary, leading to weak robustness of classification models.


\comment{
On the other hand, consider the example in Section \ref{section: trade-off between natural and robust errors}. According to \cite{bartlett2006convexity}, surrogate loss minimization leads to the Bayes optimal classifier $\widehat f$ given sufficient model capacity. In this case, $\Pr[\X\in \bbB(\boundary(f),\epsilon),c_0(\X)=Y]=1$. We notice that $\cR_\adv(\widehat f)-\cR_\nat^*$ is as large as $1$ as well (see Table \ref{table: comparison of error}), demonstrating a strong trade-off between accuracy and robustness.
}

\comment{
Apart from it, another quantity governing the model robustness is the label noise level $\Pr[\sign(2\eta(\X)-1)\not =Y]$. Low noise level, surprisingly, implies that the difference between $\cR_\adv(\widehat f)$ and $\cR_\nat^*$ is large, provided that $\widehat f$ is the solution of surrogate loss minimization $\min_f \cR_\phi(f)$. Reconsider the example in Section \ref{section: trade-off between natural and robust errors}. According to \cite{bartlett2006convexity}, surrogate loss minimization leads to the Bayes optimal classifier $\widehat f$ given sufficient model capacity.  There is no label noise. However, $\cR_\adv(\widehat f)-\cR_\nat^*$ is as large as $1$ (see Table \ref{table: comparison of error}), demonstrating a strong trade-off between accuracy and robustness.
}

\comment{
\noindent{\textbf{Trade-off Regarding Model Capacity.}}
Theorem \ref{theorem: surrogate function} implies a potential trade-off regarding the model capacity $|\cF|$:
\begin{equation*}
\begin{split}
&\quad\cR_\adv(f)-\cR_\nat^*\\
&\le \psi^{-1}\Big(\cR_\phi(f)-\min_{f\in\cF} \cR_\phi(f)+\underbrace{\min_{f\in\cF} \cR_\phi(f)-\cR_\phi^*}_{\text{term (a)},\ \downarrow\text{ when }|\cF|\ \uparrow}\Big)\\
&\quad+\underbrace{\Pr[\X\in\bbB(\boundary(f),\epsilon)]}_{\text{term (b), }\uparrow\text{ when }|\cF|\ \uparrow}.
\end{split}
\end{equation*}
We note that term (a), the approximation term, shrinks as the model capacity increases. Term (b), however, may grow with the increase of model capacity (see Appendix \ref{section: adversarial vulnerability under log-concave distributions} for an example).
}

\vspace{-0.1cm}
\subsection{Lower bound}
\label{section: lower bound}

We now establish a lower bound on $\cR_\adv(f)-\cR_\nat^*$. Our lower bound matches our analysis of the upper bound in Section \ref{section: upper bound} up to an arbitrarily small constant.

\begin{theorem}
\label{theorem: tightness of surrogate loss}
Suppose that $|\cX|\ge 2$. Under Assumption \ref{assumption: classification-calibrated}, for any non-negative loss function $\phi$ such that $\phi(x)\rightarrow 0$ as $x\rightarrow +\infty$, any $\xi>0$, and any $\theta\in[0,1]$, there exists a probability distribution on $\cX\times \{\pm 1\}$, a function $f:\R^d\rightarrow\R$, and a regularization parameter $\lambda>0$ such that
$\cR_\adv(f)-\cR_\nat^*=\theta$
and
\begin{equation*}
\begin{split}
\psi\Big(\theta-\bbE &\max_{\X'\in\bbB(\X,\epsilon)}\phi(f(\X')f(\X)/\lambda)\Big)\le \cR_\phi(f)-\cR_\phi^*\le \psi\left(\theta-\bbE \max_{\X'\in\bbB(\X,\epsilon)}\phi(f(\X')f(\X)/\lambda)\right)+\xi.
\end{split}
\end{equation*}
\end{theorem}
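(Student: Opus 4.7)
\emph{Overall approach.} The plan is to reduce Theorem~\ref{theorem: tightness of surrogate loss} to the classical tightness of the $\psi$-transform from \cite{bartlett2006convexity}, by constructing an instance where the boundary effects are neutralized. Specifically, I will build a distribution together with $f$ and $\lambda$ so that (i) $\cR_{\text{bdy}}(f)=0$ and hence $\cR_\adv(f)-\cR_\nat^* = \cR_\nat(f)-\cR_\nat^* = \theta$; (ii) the surrogate boundary term $b:=\bbE\max_{\X'\in\bbB(\X,\epsilon)}\phi(f(\X')f(\X)/\lambda)$ is arbitrarily small; and (iii) $\cR_\phi(f)-\cR_\phi^*\le \psi(\theta)+O(\xi)$. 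Continuity of $\psi$ then upgrades $\psi(\theta)$ to $\psi(\theta-b)$ at an $O(\xi)$ cost, while the matching lower bound $\psi(\theta-b)\le\cR_\phi(f)-\cR_\phi^*$ is exactly Theorem~\ref{theorem: surrogate function} applied to the constructed $(f,\lambda)$.

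\emph{Construction.} I would pick two points $\x_0,\x_1\in\cX$ with $\|\x_0-\x_1\|>2\epsilon$ so that their $\epsilon$-balls are disjoint (which $|\cX|\ge 2$ permits in the non-degenerate geometric case). Put mass $q$ at $\x_0$ with $\eta(\x_0)=(1+\theta_1)/2$, and mass $1-q$ at $\x_1$ with $\eta(\x_1)=(1+\theta_2)/2$, for parameters $q\in[0,1]$ and $\theta_1,\theta_2\in[0,1]$ to be chosen. Define $f\equiv\alpha_0$ on $\bbB(\x_0,\epsilon)$ and $f\equiv\alpha_1$ on $\bbB(\x_1,\epsilon)$, with each $\alpha_j$ having sign opposite to $\sign(2\eta(\x_j)-1)$. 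Because $f$ is constant on each $\epsilon$-neighborhood of the support, $f(\X')f(\X)=f(\X)^2\ge 0$ throughout, so the decision boundary is never within $\epsilon$ of the support and $\cR_{\text{bdy}}(f)=0$ automatically. This gives $\cR_\adv(f)-\cR_\nat^* = q\theta_1+(1-q)\theta_2$.

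\emph{Two-point mixture realizing $\psi(\theta)$.} The required identity $\cR_\adv(f)-\cR_\nat^*=\theta$ becomes the linear constraint $q\theta_1+(1-q)\theta_2=\theta$. To also achieve $\cR_\phi(f)-\cR_\phi^*\le \psi(\theta)+O(\xi)$, I would invoke the biconjugate identity $\psi=\widetilde\psi^{**}$ on $[0,1]$: by the supporting-line characterization of the lower convex envelope (Carath\'eodory in dimension one), for any $\xi>0$ one can pick $q,\theta_1,\theta_2$ satisfying the linear constraint with $q\widetilde\psi(\theta_1)+(1-q)\widetilde\psi(\theta_2)\le\psi(\theta)+\xi/3$. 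For each $j$, choose $|\alpha_j|$ so that the wrong-signed $\alpha_j$ approaches the infimum in $H^-(\eta(\x_j))$, ensuring $C_{\eta(\x_j)}(\alpha_j)-H(\eta(\x_j))\le\widetilde\psi(\theta_j)+\xi/3$. Weighting the per-point $\phi$-excesses by $q$ and $1-q$ yields $\cR_\phi(f)-\cR_\phi^*\le\psi(\theta)+\xi/2$.

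\emph{Controlling $b$ and the main obstacle.} Since $f$ is constant on each $\epsilon$-ball of the support, $b=q\phi(\alpha_0^2/\lambda)+(1-q)\phi(\alpha_1^2/\lambda)$. Because $\phi(x)\to 0$ as $x\to\infty$ and each $|\alpha_j|>0$, I take $\lambda$ small enough to drive $b$ arbitrarily small; continuity of $\psi$ at $\theta$ then lets me pick $\lambda$ so that $\psi(\theta)-\psi(\theta-b)\le\xi/2$. Combining with the previous step yields $\cR_\phi(f)-\cR_\phi^*\le\psi(\theta-b)+\xi$, and the lower bound is just Theorem~\ref{theorem: surrogate function} rearranged. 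The hardest single step will be the two-point realization of $\psi(\theta)=\widetilde\psi^{**}(\theta)$ under the linear mean constraint $q\theta_1+(1-q)\theta_2=\theta$; this is standard convex analysis (supporting line of $\widetilde\psi$ at $\theta$ and its two touching points), but the endpoint cases $\theta\in\{0,1\}$ and the fact that the infimum in $H^-(\eta)$ may only be approached rather than attained require separate care that I would absorb into the $\xi/3$ slacks above.
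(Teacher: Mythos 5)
Your proposal is correct and matches the paper's proof essentially step for step: a two-point distribution whose weights and conditional probabilities realize $\psi(\theta)$ via the biconjugate/convex-envelope characterization, a score function with sign everywhere opposite to the Bayes rule (so that $\cR_{\text{bdy}}(f)=0$ and $\cR_\adv(f)-\cR_\nat^*=\bbE|2\eta(\X)-1|=\theta$), a small $\lambda$ to drive the surrogate boundary term below the continuity modulus of $\psi$, and Theorem~\ref{theorem: surrogate function} for the matching lower bound. The only cosmetic difference is that you impose disjoint $\epsilon$-balls whereas the paper simply takes $f$ negative everywhere, which sidesteps any geometric constraint on $\cX$.
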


Theorem \ref{theorem: tightness of surrogate loss} demonstrates that in the presence of extra conditions on the loss function, i.e., $\lim_{x\rightarrow +\infty} \phi(x)=0$, the upper bound in Section \ref{section: upper bound} is tight. The condition holds for all the losses in Table \ref{table: examples of classification-calibrated loss}.

\section{Algorithmic Design for Defenses}
\label{section: algorithmic design for adversarial defenses}

\noindent{\textbf{Optimization.}}
Theorems \ref{theorem: surrogate function} and \ref{theorem: tightness of surrogate loss} shed light on algorithmic designs of adversarial defenses.
In order to minimize $\cR_\adv(f)-\cR_\nat^*$, the theorems suggest minimizing\footnote{For simplicity of implementation, we do not use the function $\psi^{-1}$ and rely on $\lambda$ to approximately reflect the effect of $\psi^{-1}$, the trade-off between the natural error and the boundary error, and the tight approximation of the boundary error using the corresponding surrogate loss function. }
\begin{equation}
\label{equ: new loss}
\min_{f} \bbE \Big\{\underbrace{\phi(f(\X)Y)}_{\text{for accuracy}}+\underbrace{\max_{\X'\in\bbB(\X,\epsilon)} \phi(f(\X)f(\X')/\lambda)}_{\text{regularization for robustness}}\Big\}.
\end{equation}
We name our method \textbf{TRADES} (TRadeoff-inspired Adversarial DEfense via Surrogate-loss minimization).

\medskip
\noindent{\textbf{Intuition behind the optimization.}}
Problem \eqref{equ: new loss}  captures the trade-off between the natural and robust errors: the first term in \eqref{equ: new loss} encourages the natural error to be optimized by minimizing the ``difference'' between $f(\X)$ and $Y$, while the second regularization term encourages the output to be smooth, that is, it pushes the decision boundary of classifier away from the sample instances via minimizing the ``difference'' between the prediction of natural example $f(\X)$ and that of adversarial example $f(\X')$. This is conceptually consistent with the argument that smoothness is an indispensable property of robust models~\cite{cisse2017parseval}. The tuning parameter $\lambda$ plays a critical role on balancing the importance of natural and robust errors. To see how the $\lambda$ affects the solution in the example of Section \ref{section: trade-off between natural and robust errors}, problem \eqref{equ: new loss} tends to the Bayes optimal classifier when $\lambda\rightarrow +\infty$, and tends to the all-one classifier when $\lambda\rightarrow 0$.

\medskip
\noindent{\textbf{Comparisons with prior work.}}
We compare our approach with several related lines of research in the prior literature.  One of the best known algorithms for adversarial defense is based on  \emph{robust optimization}~\cite{madry2018towards,kolter2017provable,wong1805scaling,raghunathan2018certified,raghunathan2018semidefinite}. Most results in this direction involve algorithms that approximately minimize
\begin{equation}
\label{equ: old loss}
\min_{f} \bbE \left\{ \max_{\X'\in\bbB(\X,\epsilon)}\phi(f(\X')Y)\right\},
\end{equation}
where the objective function in problem \eqref{equ: old loss} serves as an upper bound of the robust error $\cR_\adv(f)$. In complex problem domains, however, this objective function might not be tight as an upper bound of the robust error, and may not capture the trade-off between natural and robust errors.

A related line of research is adversarial training by regularization~\cite{miyato2018virtual,kurakin2016adversarial,ross2017improving,zheng2016improving}. There are several key differences between the results in this paper and those of \cite{kurakin2016adversarial,ross2017improving,zheng2016improving}. Firstly, the optimization formulations are different. In the previous works, the regularization term either measures the ``difference'' between $f(\X')$ and $Y$~\cite{kurakin2016adversarial}, or its gradient~\cite{ross2017improving}. In contrast, our regularization term measures the ``difference'' between $f(\X)$ and $f(\X')$. While \cite{zheng2016improving} generated the adversarial example $\X'$ by adding random Gaussian noise to $\X$, our method simulates the adversarial example by solving the inner maximization problem in Eqn. \eqref{equ: new loss}. Secondly, we note that the losses in \cite{miyato2018virtual,kurakin2016adversarial,ross2017improving,zheng2016improving} lack of theoretical guarantees. Our loss, with the presence of the second term in problem \eqref{equ: new loss}, makes our theoretical analysis significantly more subtle. Moreover, our algorithm takes the same computational resources as~\cite{kurakin2016adversarial}, which makes our method scalable to large-scale datasets. We defer the experimental comparisons of various regularization based methods to Table \ref{table: icml best paper defense}.

\medskip
\noindent{\textbf{Differences with Adversarial Logit Pairing.}} We also compare TRADES with Adversarial Logit Pairing (ALP)~\cite{kannan2018adversarial,engstrom2018evaluating}. The algorithm of ALP works as follows: given a fixed network $f$ in each round, the algorithm firstly generates an adversarial example $\X'$ by solving
$\argmax_{\X'\in\bbB(\X,\epsilon)} \phi(f(\X')Y)$; ALP then updates the network parameter by solving a minimization problem
\begin{equation*}
\min_f \bbE \left\{\alpha \phi(f(\X')Y)+(1-\alpha)\phi(f(\X)Y)+\|f(\X)-f(\X')\|_{2}/\lambda\right\},
\end{equation*}
where $0\le\alpha\le 1$ is a regularization parameter; the algorithm finally repeats the above-mentioned procedure until it converges.
We note that there are fundamental differences between TRADES and ALP. While ALP simulates adversarial example $\X'$ by the FGSM$^{k}$ attack, TRADES simulates $\X'$ by solving $\argmax_{\X'\in\bbB(\X,\epsilon)} \phi(f(\X)f(\X')/\lambda)$. Moreover, while ALP uses the $\ell_2$ loss between $f(\X)$ and $f(\X')$ to regularize the training procedure without theoretical guarantees, TRADES uses the classification-calibrated loss according to Theorems \ref{theorem: surrogate function} and \ref{theorem: tightness of surrogate loss}.

\medskip
\noindent{\textbf{Heuristic algorithm.}}
In response to the optimization formulation \eqref{equ: new loss}, we use two heuristics to achieve more general defenses: a) extending to multi-class problems by involving multi-class calibrated loss; b) approximately solving the minimax problem via alternating gradient descent.
For multi-class problems, a surrogate loss is \emph{calibrated} if minimizers of the surrogate risk are also minimizers of the 0-1
risk~\cite{pires2016multiclass}. Examples of multi-class calibrated loss include cross-entropy loss. Algorithmically, we extend problem \eqref{equ: new loss} to the case of multi-class classifications by replacing $\phi$ with a multi-class calibrated loss $\cL(\cdot,\cdot)$:
\begin{equation}
\label{equ: multi-class loss}
\min_{f} \bbE \left\{\cL(f(\X),\Y)+\hspace{-0.2cm}\max_{\X'\in\bbB(\X,\epsilon)} \cL(f(\X),f(\X'))/\lambda\right\},
\end{equation}
where $f(\X)$ is the output vector of learning model (with softmax operator in the top layer for the cross-entropy loss $\cL(\cdot,\cdot)$), $\Y$ is the label-indicator vector, and $\lambda>0$ is the regularization parameter. One can also exchange $f(\X)$ and $f(\X')$ in the second term of \eqref{equ: multi-class loss}. The pseudocode of adversarial training procedure, which aims at minimizing the empirical form of problem \eqref{equ: multi-class loss}, is displayed in Algorithm \ref{algorithm: adversarial training of network}.

\begin{algorithm}[t]
\caption{Adversarial training by TRADES}
\label{algorithm: adversarial training of network}
\begin{algorithmic}[1]
\STATE {\bfseries Input:} Step sizes $\eta_1$ and $\eta_2$, batch size $m$, number of iterations $K$ in inner optimization, network architecture parametrized by $\theta$
\STATE {\bfseries Output:} Robust network $f_\theta$
\STATE{Randomly initialize network $f_\theta$, or initialize network with pre-trained configuration}
\REPEAT
\STATE{Read mini-batch $B=\{\x_1,...,\x_m\}$ from training set}
\FOR{$i=1,...,m$ (in parallel)}
\STATE{$\x_i'\leftarrow \x_i+0.001\cdot\cN(\0,\I)$, where $\cN(\0,\I)$ is the Gaussian distribution with zero mean and identity variance\label{dddd}}
\FOR{$k=1,...,K$}
\STATE{$\x_i'\leftarrow \Pi_{\bbB(\x_i,\epsilon)}(\eta_1\sign(\nabla_{\x_i'} \cL(f_\theta(\x_i),f_\theta(\x_i')))+\x_i')$, where $\Pi$ is the projection operator}
\ENDFOR
\ENDFOR
\STATE{$\theta\leftarrow \theta-\eta_2 \sum_{i=1}^m \nabla_\theta[\cL(f_\theta(\x_i),\y_i)+\cL(f_\theta(\x_i),f_\theta(\x_i'))/\lambda]/m$}
\UNTIL{training converged}
\end{algorithmic}
\end{algorithm}

The key ingredient of the algorithm is to approximately solve the linearization of inner maximization in problem \eqref{equ: multi-class loss} by the \emph{projected gradient descent} (see Step 7). We note that $\x_i$ is a global minimizer with zero gradient to the objective function $g(\x'):=\cL(f(\x_{i}),f(\x'))$ in the inner problem. Therefore, we initialize $\x_i'$ by adding a small, random perturbation around $\x_i$ in Step 5 to start the inner optimizer. More exhaustive approximations of the inner maximization problem in terms of either optimization formulations or solvers would lead to better defense performance.

\medskip
\noindent{\textbf{Semi-supervised learning.}}
We note that TRADES problem \eqref{equ: multi-class loss} can be straightforwardly applied to the semi-supervised learning framework, as the second term in problem \eqref{equ: multi-class loss} does not depend on the label $\Y$. Therefore, with more unlabeled data points, one can approximate the second term (in the expectation form) better by the empirical loss minimization. There are many interesting recent works which explore the benefits of invloving unlabeled data~\cite{carmon2019unlabeled,stanforth2019labels,zhai2019adversarially}.

\medskip
\noindent{\textbf{Acceleration.}}
Adversarial training is typically more than 10x slower than natural training. To resolve this issue for TRADES, \cite{shafahi2019adversarial,zhang2019you} proposed new algorithms to solve problem \eqref{equ: multi-class loss} at negligible additional cost compared to natural training.

\comment{
\section{Justification of Minimax Formulation}
\begin{theorem}
For hinge loss function $\phi$, any probability distribution on $\cX\times\{\pm 1\}$, and any measurable $g:\cX\rightarrow \R$ such that $\lambda(g)\ge \epsilon^{-1}\Pr[X\in\bbB(\boundary(g),\epsilon),\sign(\eta(X)-1/2)=Y]$, we have
\begin{equation}
R_\adv(g)-R_\nat^*\le \bbE\max_{X'\in\bbS(X,\epsilon)}\phi(Yg(X'))-R_\phi^*,
\end{equation}
where $\lambda(g)$ on the mapping $g:\R^d\rightarrow\R^t$ is defined as $\lambda(g):=\min_{x\in\cX} \|g(x)\|/\|x\|$.
\end{theorem}

\begin{proof}
When $\phi$ is the hinge loss function, $\psi$ is an identity mapping~\cite{bartlett2006convexity}.
So by Theorem \ref{theorem: surrogate function}, we have
\begin{equation}
R_\adv(g)-R_\nat^*\le R_\phi(g)-R_\phi^*+\Pr[X\in\bbB(\boundary(g),\epsilon),\sign(\eta(X)-1/2)=Y].
\end{equation}
Thus it suffices to show that
\begin{equation}
R_\phi(g)+\Pr[X\in\bbB(\boundary(g),\epsilon),\sign(\eta(X)-1/2)=Y]\le \bbE\max_{X'\in\bbS(X,\epsilon)}\phi(Yg(X')).
\end{equation}
To this end, we note that
\begin{equation}
\begin{split}
&\bbE\max_{X'\in\bbS(X,\epsilon)}\phi(Yg(X'))-R_\phi(g)\\
&=\bbE\left[\max_{X'\in\bbS(X,\epsilon)}\phi(Yg(X'))-\phi(Yg(X))\right]\\
&=\int \max_{X'\in\bbB(X,\epsilon)}\left\{\eta(X)[\phi(g(X'))-\phi(g(X))]+(1-\eta(X))[\phi(-g(X'))-\phi(-g(X))]\right\}d\Pr(X)\\
&\ge \int_{X\in\bbB(\boundary(g),\epsilon)} \max_{X'\in\bbB(X,\epsilon)}\left\{\eta(X)[\phi(g(X'))-\phi(g(X))]+(1-\eta(X))[\phi(-g(X'))-\phi(-g(X))]\right\}d\Pr(X)\\
&\ge\int_{X\in\bbB(\boundary(g),\epsilon)} \max_{X'\in\bbS(X,\epsilon)}\phi(Yg(X'))-\phi(Yg(X))d\Pr(X)\\
&\ge \bbE [\lambda(g)\|X'-X\|]\quad\text{(by the definition of $\lambda(g)$)}\\
&= \epsilon\lambda(g)\quad\text{(because $X'\in\bbS(X,\epsilon)$)}\\
&\ge \Pr[X\in\bbB(\boundary(g),\epsilon),\sign(\eta(X)-1/2)=Y],
\end{split}
\end{equation}
as desired.
\end{proof}

}

\comment{
\section{Lower Bound}

\begin{theorem}
Suppose that $|\cX|\ge 2$. For any non-negative loss function $\phi$, any $\epsilon>0$ and any $\theta\in[0,1]$, there is a probability distribution on $\cX\times\{\pm 1\}$ and a function $g:\cX\rightarrow \R$ such that
\begin{equation}
R_\adv(g)-R_\nat^*=\theta
\end{equation}
and
\begin{equation}
\psi(\theta-P)\le R_\phi(g)-R_\phi^*\le\psi(\theta-P)+\epsilon,
\end{equation}
where $P=\Pr[X\in\bbB(\boundary(g),\epsilon),\sign(\eta(X)-1/2)=Y]$.
\end{theorem}
}

\comment{
\section{Excess Risk}

Let $g^*(x)$ be the optimal classifier that minimizes $R_\adv(g)$. By Eqn. \eqref{equ: R_adv}, we have
\begin{equation}
\begin{split}
&R_\adv(g)-R_\adv(g^*)\\
&=\Pr(X\in \bbB(\boundary(g),\epsilon))+\int_{\bbB(\boundary(g),\epsilon)^\perp} [\1\{\sign(g(x))=-1\}(2\eta(x)-1)+(1-\eta(x))]d \Pr\nolimits_X(x)\\
&\quad -\Pr(X\in \bbB(\boundary(g^*),\epsilon))-\int_{\bbB(\boundary(g^*),\epsilon)^\perp} [\1\{\sign(g^*(x))=-1\}(2\eta(x)-1)+(1-\eta(x))]d \Pr\nolimits_X(x)\\
&\le \Pr(X\in \bbB(\boundary(g),\epsilon))+\int_{\bbB(\boundary(g),\epsilon)^\perp} [\1\{\sign(g(x))=-1\}(2\eta(x)-1)+(1-\eta(x))]d \Pr\nolimits_X(x)\\
&\quad -\Pr(X\in \bbB(\boundary(g^*),\epsilon),X\in\bbB(\boundary(g),\epsilon)^\perp)\\
&\quad-\Pr(X\in \bbB(\boundary(g^*),\epsilon),X\in\bbB(\boundary(g),\epsilon))\\
&\quad -\int_{\bbB(\boundary(g^*),\epsilon)^\perp\cap \bbB(\boundary(g),\epsilon)^\perp} [\1\{\sign(g^*(x))=-1\}(2\eta(x)-1)+(1-\eta(x))]d \Pr\nolimits_X(x)\\
&\le \Pr(X\in\bbB(\boundary(g),\epsilon))-\Pr(X\in\bbB(\boundary(g^*),\epsilon),X\in\bbB(\boundary(g),\epsilon))\\
&\quad+\int_{\bbB(\boundary(g),\epsilon)^\perp} [\1\{\sign(g(x))=-1\}(2\eta(x)-1)+(1-\eta(x))]d \Pr\nolimits_X(x)\\
&\quad -\int_{\bbB(\boundary(g),\epsilon)^\perp} [\1\{\sign(g^*(x))=-1\}(2\eta(x)-1)+(1-\eta(x))]d \Pr\nolimits_X(x)\\
&\le\Pr(X\in\bbB(\boundary(g),\epsilon))-\Pr(X\in\bbB(\boundary(g^*),\epsilon)^\perp)\\
&\quad+\bbE[\1\{\sign(g(X))\not=\sign(g^*(X)),X\in\bbB(\boundary(g),\epsilon)^\perp\}|2\eta(X)-1|]
\end{split}
\end{equation}
}

\comment{
\section{Uniform Convergence}

\begin{lemma}[Theorem 26.5, \cite{shalev2014understanding}]
Assume that for all $\x$ and $g\in\cG$ we have that $|\phi(g(\x))|\le c$. Let $\phi$ be $\rho$-Lipschitz. Then with probability of at least $1-\delta$,
\begin{equation}
R_\phi(\ERM_\cG(S))-\inf_{g\in\cG} R_\phi(g)\le 2\rho\cR(\cG\circ S)+5c\sqrt{\frac{2\ln(8/\delta)}{m}},
\end{equation}
where $S$ is the sample set with $|S|=m$ and $\cR(\cG\circ S)$ is the Rademacher complexity.
\end{lemma}

Plugging this into Theorem \ref{theorem: surrogate function}, we have
\begin{theorem}
Assume that for all $\x$ and $g\in\cG$ we have that $|\phi(g(\x))|\le c$. For any non-negative loss function $\phi$, any measurable $g:\cX\rightarrow \R$, and any probability distribution on $\cX\times\{\pm 1\}$, with probability of at least $1-\delta$,
\begin{equation}
\begin{split}
&R_\adv(\ERM_\cG(S))-R_\nat^*\\
&\le\psi^{-1}\left(2\rho\cR(\cG\circ S)+5c\sqrt{\frac{2\ln(8/\delta)}{m}}+\inf_{g\in\cG} R_\phi(g)-R_\phi^*\right)\\
&\quad +\Pr[X\in\bbB(\boundary(\ERM_\cG(S)),\epsilon),\sign(\eta(X)-1/2)=Y],
\end{split}
\end{equation}
where the function $\psi$ is defined as in \cite{bartlett2006convexity} which is independent of $g$.
\end{theorem}
}

\vspace{-0.1cm}
\section{Experimental Results}\label{sec:experiments}
In this section, we verify the effectiveness of TRADES by numerical experiments. We denote by $\cA_\adv(f) = 1- \cR_\adv(f)$ the robust accuracy, and by $\cA_\nat(f) = 1- \cR_\nat(f)$ the natural accuracy on test dataset. We release our code and trained models at \url{https://github.com/yaodongyu/TRADES}.

\vspace{-0.2cm}
\subsection{Optimality of Theorem \ref{theorem: surrogate function}}

We verify the tightness of the established upper bound in Theorem \ref{theorem: surrogate function} for binary classification problem on MNIST dataset. The negative examples are `1' and the positive examples are `3'. Here we use a Convolutional Neural Network (CNN) with two convolutional layers, followed by two fully-connected layers. The output size of the last layer is 1. To learn the robust classifier, we minimize the regularized surrogate loss in Eqn. \eqref{equ: new loss}, and use the hinge loss in Table \ref{table: examples of classification-calibrated loss} as the surrogate loss $\phi$, where the associated $\psi$-transform is $\psi(\theta) = \theta$.

To verify the tightness of our upper bound, we calculate the left hand side in Theorem \ref{theorem: surrogate function}, i.e., \vspace{-0.2cm}$$\Delta_{\text{LHS}}=\cR_\adv(f)-\cR_\nat^*,$$ \vspace{-0.2cm}
and the right hand side, i.e., $$\Delta_{\text{RHS}} = (\cR_\phi(f)-\cR_\phi^*)+\bbE \max_{\X'\in\bbB(\X,\epsilon)}\phi(f(\X')f(\X)/\lambda).$$ 
As we cannot have access to the unknown distribution $\cD$, we approximate the above expectation terms by test dataset. We first use natural training method to train a classifier so as to approximately estimate 
$\cR_\nat^*$ and $\cR_\phi^*$, where we find that the naturally trained classifier can achieve natural error $\cR_\nat^*=0\%$, and loss value $\cR_\phi^*=0.0$ for the binary classification problem. Next,  we optimize problem~\eqref{equ: new loss} to train a robust classifier $f$. We take perturbation $\epsilon = 0.1$, number of iterations $K = 20$ and run $30$ epochs on the training dataset. Finally, to approximate the second term in $\Delta_{\text{RHS}}$, we use FGSM$^k$ (white-box) attack (a.k.a. PGD attack)~\cite{kurakin2016adversarial} with $20$ iterations to approximately calculate the worst-case perturbed data $\bm{X}^{\prime}$. 

\begin{table}
	\caption{Theoretical verification on the optimality of Theorem \ref{theorem: surrogate function}.}
	\label{table: theoretical upper bound tightness verification}
	\centering
	\begin{tabular}{c|ccc}%
		\hline
		$\lambda$ & $\cA_\adv(f)$  $(\%)$ & $\cR_\phi(f)$ & $\Delta = \Delta_{\text{RHS}} - \Delta_{\text{LHS}}$
		\\
		\hline
		2.0 & 99.43 & 0.0006728 & 0.006708\\
		3.0 & 99.41 & 0.0004067 & 0.005914\\
		4.0 & 99.37 & 0.0003746 & 0.006757\\
		5.0 & 99.34 & 0.0003430 & 0.005860\\
		\hline
	\end{tabular}
\end{table}
The results in Table \ref{table: theoretical upper bound tightness verification} show the tightness of our upper bound in Theorem \ref{theorem: surrogate function}. It shows that the differences between $\Delta_{\text{RHS}}$ and $\Delta_{\text{LHS}}$ under various $\lambda$'s are very small.

\vspace{-0.2cm}
\subsection{Sensitivity of regularization hyperparameter $\lambda$}
\vspace{-0.2cm}

The regularization parameter $\lambda$ is an important hyperparameter in our proposed method. We show how the regularization parameter affects the performance of our robust classifiers by numerical experiments on two datasets, MNIST and CIFAR10. For both datasets, we minimize the loss in Eqn. \eqref{equ: multi-class loss} to learn robust classifiers for multi-class problems, where we choose $\cL$ as the cross-entropy loss. 

\medskip
\noindent{\textbf{MNIST setup.}} We use the CNN which has two convolutional layers, followed by two fully-connected layers. The output size of the last layer is 10. We set perturbation $\epsilon = 0.1$, perturbation step size $\eta_1 = 0.01$, number of iterations $K = 20$, learning rate $\eta_2 = 0.01$, batch size $m = 128$, and run $50$ epochs on the training dataset. To evaluate the robust error, we apply FGSM$^k$ (white-box) attack with $40$ iterations and $0.005$ step size. The results are in Table \ref{table: sensitivity analysis}.

\medskip
\noindent{\textbf{CIFAR10 setup.}} We apply ResNet-18~\cite{he2016deep} for classification. The output size of the last layer is 10. We set perturbation $\epsilon = 0.031$, perturbation step size $\eta_1 = 0.007$, number of iterations $K = 10$, learning rate $\eta_2 = 0.1$, batch size $m = 128$, and run $100$ epochs on the training dataset. To evaluate the robust error, we apply FGSM$^k$ (white-box) attack with $20$ iterations and the step size is $0.003$. The results are in Table \ref{table: sensitivity analysis}.

\begin{table*}
	\caption{Sensitivity of regularization hyperparameter $\lambda$ on MNIST and CIFAR10 datasets.}
	\label{table: sensitivity analysis}
	\centering
	\begin{tabular}{c|cc||cc}%
		\hline
		& \multicolumn{2}{c}{MNIST} & \multicolumn{2}{c}{CIFAR10}\\
		\hline
		$1/\lambda$ & $\cA_\adv(f)$ $(\%)$   & $\cA_\nat(f)$  $(\%)$ & $\cA_\adv(f)$ $(\%)$ & $\cA_\nat(f)$  $(\%)$\\
		\hline
		0.1 & 91.09 $\pm$ 0.0385 & 99.41 $\pm$ 0.0235 & 26.53 $\pm$ 1.1698 & 91.31 $\pm$ 0.0579 \\
		0.2 & 92.18 $\pm$ 0.0450 & 99.38 $\pm$ 0.0094 & 37.71 $\pm$ 0.6743 & 89.56 $\pm$ 0.2154\\
		0.4 & 93.21 $\pm$ 0.0660 & 99.35 $\pm$ 0.0082 & 41.50 $\pm$ 0.3376 & 87.91 $\pm$ 0.2944\\
		0.6 & 93.87 $\pm$ 0.0464 & 99.33 $\pm$ 0.0141 & 43.37 $\pm$ 0.2706 & 87.50 $\pm$ 0.1621\\
		0.8 & 94.32 $\pm$ 0.0492 & 99.31 $\pm$ 0.0205 & 44.17 $\pm$ 0.2834 & 87.11 $\pm$ 0.2123\\
		1.0 & 94.75 $\pm$ 0.0712 & 99.28 $\pm$ 0.0125 & 44.68 $\pm$ 0.3088 & 87.01 $\pm$ 0.2819\\
		2.0 & 95.45 $\pm$ 0.0883 & 99.29 $\pm$ 0.0262 & 48.22 $\pm$ 0.0740 & 85.22 $\pm$ 0.0543\\
		3.0 & 95.57 $\pm$ 0.0262 & 99.24 $\pm$ 0.0216 & 49.67 $\pm$ 0.3179 & 83.82 $\pm$ 0.4050\\
		4.0 & 95.65 $\pm$ 0.0340 & 99.16 $\pm$ 0.0205 & 50.25 $\pm$ 0.1883 & 82.90 $\pm$ 0.2217\\
		5.0 & 95.65 $\pm$ 0.1851 & 99.16 $\pm$ 0.0403 & 50.64 $\pm$ 0.3336 & 81.72 $\pm$ 0.0286\\
		\hline
	\end{tabular}
\end{table*}

\comment{
\begin{table}
	\caption{Sensitivity analysis on regularization parameter $\lambda$ on CIFAR10 dataset.}
	\label{table: sensitivity analysis - CIFAR10}
	\centering
	\begin{tabular}{c|cc}%
		\hline
		$1/\lambda$ & $\cR_\adv(f)$ $(\%)$ & $\cR_\nat(f)$ $(\%)$
		\\
		\hline
		0.1 & 26.53 $\pm$ 1.3686 & 91.31 $\pm$ 0.0033 \\
		0.2 & 37.71 $\pm$ 0.4547 & 89.56 $\pm$ 0.0464 \\
		0.4 & 41.50 $\pm$ 0.1140 & 87.91 $\pm$ 0.0866\\
		0.6 & 43.37 $\pm$ 0.0732 & 87.50 $\pm$ 0.0262\\
		0.8 & 44.17 $\pm$ 0.0802 & 87.11 $\pm$ 0.0450\\
		1.0 & 44.68 $\pm$ 0.0953 & 87.01 $\pm$ 0.0794\\
		2.0 & 48.22 $\pm$ 0.0054 & 85.22 $\pm$ 0.0029\\
		3.0 & 49.67 $\pm$ 0.1010 & 83.82 $\pm$ 0.1640\\
		4.0 & 50.25 $\pm$ 0.0354 & 82.90 $\pm$ 0.0491\\
		5.0 & 50.64 $\pm$ 0.1112 & 81.72 $\pm$ 0.0008\\
		\hline
	\end{tabular}
\end{table}
}

We observe that as the regularization parameter $1/\lambda$ increases, the natural accuracy $\cA_\nat(f)$ decreases while the robust accuracy $\cA_\adv(f)$ increases, which verifies our theory on the trade-off between robustness and accuracy. Note that for MNIST dataset, the natural accuracy does not decrease too much as the regularization term $1/\lambda$ increases, which is different from the results of CIFAR10. This is probably because the classification task for MNIST is easier.  Meanwhile, our proposed method is not very sensitive to the choice of $\lambda$. Empirically, when we set the hyperparameter $1/\lambda$ in $[1, 10]$,  our method is able to learn classifiers with both high robustness and high accuracy.  We will set $1/\lambda$ as either 1 or 6 in the following experiments.

\vspace{-0.3cm}
\subsection{Adversarial defenses under various attacks}
\vspace{-0.2cm}
Previously, \cite{athalye2018obfuscated} showed that 7 defenses in ICLR 2018 which relied on obfuscated
gradients may easily break down. In this section, we verify the effectiveness of our method with the same experimental setup under both white-box and black-box threat models.

\medskip
\noindent{\textbf{MNIST setup.}} We use the CNN architecture in~\cite{carlini2017towards} with four convolutional layers, followed by three fully-connected layers. We set perturbation $\epsilon = 0.3$, perturbation step size $\eta_1 = 0.01$, number of iterations $K = 40$, learning rate $\eta_2 = 0.01$, batch size $m = 128$, and run $100$ epochs on the training dataset. 

\medskip
\noindent{\textbf{CIFAR10 setup.}} We use the same neural network architecture as~\cite{madry2018towards}, i.e., the wide residual network WRN-34-10~\cite{zagoruyko2016wide}. We set perturbation $\epsilon = 0.031$, perturbation step size $\eta_1 = 0.007$, number of iterations $K = 10$, learning rate $\eta_2 = 0.1$, batch size $m = 128$, and run $100$ epochs on the training dataset. 

\vspace{-0.2cm}
\subsubsection{White-box attacks}\label{subsec:white-box}
\vspace{-0.2cm}
We summarize our results in Table \ref{table: icml best paper defense} together with the results from~\cite{athalye2018obfuscated}. We also implement methods in~\cite{zheng2016improving,kurakin2016adversarial,ross2017improving} on the CIFAR10 dataset as they are also regularization based methods. For MNIST dataset, we apply FGSM$^k$ (white-box) attack with $40$ iterations and the step size is $0.01$. For CIFAR10 dataset, we apply FGSM$^k$ (white-box) attack with $20$ iterations and the step size is $0.003$, under which the defense model in~\cite{madry2018towards} achieves $47.04\%$ robust accuracy. Table \ref{table: icml best paper defense} shows that our proposed defense method can significantly improve the robust accuracy of models, which is able to achieve robust accuracy as high as $56.61\%$. We also evaluate our robust model on MNIST dataset under the same threat model as in~\cite{samangouei2018defense} (C\&W white-box attack~\cite{carlini2017towards}), and the robust accuracy is $99.46\%$. See appendix for detailed information of models in Table~\ref{table: icml best paper defense}.

\begin{table*}
	\caption{Comparisons of TRADES with prior defense models under white-box attacks.}
	\label{table: icml best paper defense}
	\centering
	\begin{tabular}{c||c|c|c|c|c|c}%
		\hline
		Defense & Defense type & Under which attack & Dataset & Distance & $\cA_\nat(f)$   & $\cA_\adv(f)$  
		\\
		\hline
		\hline
		\cite{buckman2018thermometer} & gradient mask & \cite{athalye2018obfuscated} & CIFAR10 & $0.031$ ($\ell_\infty$) & - & 0\% \\
		\cite{ma2018characterizing} & gradient mask & \cite{athalye2018obfuscated} & CIFAR10 &  $0.031$ ($\ell_\infty$) & - & 5\% \\
		\cite{dhillon2018stochastic} & gradient mask & \cite{athalye2018obfuscated} & CIFAR10 &  $0.031$ ($\ell_\infty$) & - & 0\% \\
		\cite{song2018pixeldefend} & gradient mask & \cite{athalye2018obfuscated} & CIFAR10 & $0.031$ ($\ell_\infty$) & - & 9\% \\ 
		\cite{na2017cascade} & gradient mask & \cite{athalye2018obfuscated} & CIFAR10 & $0.015$ ($\ell_\infty$) & - & 15\% \\ 
		\cite{wong1805scaling} & robust opt. & FGSM$^{20}$ (PGD) & CIFAR10 & $0.031$ ($\ell_\infty$) & 27.07\% & 23.54\% \\
		\cite{madry2018towards} & robust opt. & FGSM$^{20}$ (PGD) & CIFAR10 & $0.031$ ($\ell_\infty$) & 87.30\% & \textbf{47.04\%} \\
		\cite{zheng2016improving} & regularization & FGSM$^{20}$ (PGD) & CIFAR10 & $0.031$ ($\ell_\infty$) & 94.64\% & 0.15\% \\
		\cite{kurakin2016adversarial} & regularization & FGSM$^{20}$ (PGD) & CIFAR10 & $0.031$ ($\ell_\infty$) & 85.25\% & 45.89\% \\
		\cite{ross2017improving} & regularization & FGSM$^{20}$ (PGD) & CIFAR10 & $0.031$ ($\ell_\infty$) & 95.34\% & 0\% \\
		{TRADES} ($1/\lambda=1$) & regularization & FGSM$^{1,000}$ (PGD)  & CIFAR10 &  $0.031$ ($\ell_\infty$) & 88.64\% & 48.90\% \\
		{TRADES} ($1/\lambda=6$) & regularization & FGSM$^{1,000}$ (PGD) & CIFAR10 &  $0.031$ ($\ell_\infty$) & 84.92\% & \textbf{56.43\%} \\
		{TRADES} ($1/\lambda=1$) & regularization & FGSM$^{20}$ (PGD)  & CIFAR10 &  $0.031$ ($\ell_\infty$) & 88.64\% & 49.14\% \\
		{TRADES} ($1/\lambda=6$) & regularization & FGSM$^{20}$ (PGD) & CIFAR10 &  $0.031$ ($\ell_\infty$) & 84.92\% & \textbf{56.61\%} \\
		{TRADES} ($1/\lambda=1$) & regularization & DeepFool ($\ell_\infty$) & CIFAR10 &  $0.031$ ($\ell_\infty$) & 88.64\% & 59.10\% \\
		{TRADES} ($1/\lambda=6$) & regularization & DeepFool ($\ell_\infty$) & CIFAR10 &  $0.031$ ($\ell_\infty$) & 84.92\% & 61.38\% \\
		{TRADES} ($1/\lambda=1$) & regularization & LBFGSAttack  & CIFAR10 &  $0.031$ ($\ell_\infty$) & 88.64\% & 84.41\% \\
		{TRADES} ($1/\lambda=6$) & regularization & LBFGSAttack  & CIFAR10 &  $0.031$ ($\ell_\infty$) & 84.92\% & 81.58\% \\
		{TRADES} ($1/\lambda=1$) & regularization & MI-FGSM  & CIFAR10 &  $0.031$ ($\ell_\infty$) & 88.64\% & 51.26\% \\
		{TRADES} ($1/\lambda=6$) & regularization & MI-FGSM & CIFAR10 &  $0.031$ ($\ell_\infty$) & 84.92\% & 57.95\% \\
		{TRADES} ($1/\lambda=1$) & regularization & C\&W & CIFAR10 &  $0.031$ ($\ell_\infty$) & 88.64\% & 84.03\% \\
		{TRADES} ($1/\lambda=6$) & regularization & C\&W & CIFAR10 &  $0.031$ ($\ell_\infty$) & 84.92\% & 81.24\% \\
		\hline
		\cite{samangouei2018defense} & gradient mask & \cite{athalye2018obfuscated} & MNIST & $0.005$ ($\ell_2$) & - & 55\% \\
		\cite{madry2018towards} & robust opt. & FGSM$^{40}$ (PGD) & MNIST & $0.3$ ($\ell_\infty$) & 99.36\% & 96.01\% \\
		{TRADES} ($1/\lambda=6$) & regularization & FGSM$^{1,000}$ (PGD) & MNIST &  $0.3$ ($\ell_\infty$) & 99.48\% & 95.60\% \\
		{TRADES} ($1/\lambda=6$) & regularization & FGSM$^{40}$ (PGD) & MNIST &  $0.3$ ($\ell_\infty$) & 99.48\% & 96.07\% \\
		{TRADES} ($1/\lambda=6$) & regularization & C\&W & MNIST &  $0.005$ ($\ell_2$) & 99.48\% & 99.46\% \\
		\hline
	\end{tabular}
\end{table*}

\vspace{-0.2cm}
\subsubsection{Black-box attacks}\label{subsec:black-box}

We verify the robustness of our models under black-box attacks. We first train models without using adversarial training on the MNIST and CIFAR10 datasets. We use the same network architectures that are specified in the beginning of this section, i.e., the CNN architecture in~\cite{carlini2017towards} and the WRN-34-10 architecture in~\cite{zagoruyko2016wide}. We denote these models by naturally trained models (\emph{Natural}). The accuracy of the naturally trained CNN model is $99.50\%$ on the MNIST dataset. The accuracy of the naturally trained WRN-34-10 model is $95.29\%$ on the CIFAR10 dataset. We also implement the method proposed in~\cite{madry2018towards} on both datasets. We denote these models by Madry's models (\emph{Madry}). The accuracy of~\cite{madry2018towards}'s CNN model is $99.36\%$ on the MNIST dataset. The accuracy of~\cite{madry2018towards}'s WRN-34-10 model is $85.49\%$ on the CIFAR10 dataset.

For both datasets, we use FGSM$^{k}$ (black-box) method to attack various defense models. For MNIST dataset, we set perturbation $\epsilon=0.3$ and apply FGSM$^k$ (black-box) attack with $40$ iterations and the step size is $0.01$. For CIFAR10 dataset, we set $\epsilon=0.031$ and apply FGSM$^k$ (black-box) attack with $20$ iterations and the step size is $0.003$. Note that the setup is the same as the setup specified in Section~\ref{subsec:white-box}. We summarize our results in Table \ref{table: MNIST FGSM-40 black-box defense} and Table \ref{table: CIFAR10 FGSM-20 black-box defense}. In both tables, we use two source models (noted in the parentheses) to generate adversarial perturbations: we compute the perturbation directions according to the gradients of the source models on the input images. It shows that our models are more robust against black-box attacks transfered from naturally trained models and \cite{madry2018towards}'s models. Moreover, our models can generate stronger adversarial examples for black-box attacks compared with naturally trained models and \cite{madry2018towards}'s models.

\begin{table}
	\caption{Comparisons of TRADES with prior defenses under black-box FGSM$^{40}$ attack on the MNIST dataset. The models inside parentheses are source models which provide gradients to adversarial attackers. We provide the average cross-entropy loss value $\cL(f(\X),\Y)$ of each defense model in the bracket. The defense model `Madry' is the same model as in the antepenultimate line of Table~\ref{table: icml best paper defense}. The defense model `TRADES' is the same model as in the penultimate line of Table~\ref{table: icml best paper defense}.}
	\vspace{+0.2cm}
	\label{table: MNIST FGSM-40 black-box defense}
	\centering
	\begin{tabular}{c||cc}%
		\hline
		Defense Model & \multicolumn{2}{c}{Robust Accuracy $\cA_\adv(f)$}
		\\
		\hline
		{Madry} & 97.43\% [0.0078484]  & (Natural)   
		\\ 
		\hline
		{TRADES} & \textbf{97.63\%} [0.0075324] &  (Natural)
		\\
		\hline
		{Madry} & 97.38\%  [0.0084962] & (Ours) \\ 
		\hline
		{TRADES} &  \textbf{97.66\%}  [0.0073532] &  (Madry)  
		\\
		\hline
	\end{tabular}
\end{table}

\vspace{-0.3cm}
\begin{table}
	\caption{Comparisons of TRADES with prior defenses under black-box FGSM$^{20}$ attack on the CIFAR10 dataset. The models inside parentheses are source models which provide gradients to adversarial attackers.  We provide the average cross-entropy loss value of each defense model in the bracket. The defense model `Madry' is implemented based on \cite{madry2018towards}, and  the defense model `TRADES' is the same model as in the 11th line of Table~\ref{table: icml best paper defense}.}
	\vspace{+0.2cm}
	\label{table: CIFAR10 FGSM-20 black-box defense}
	\centering
	\begin{tabular}{c||cc}%
		\hline
		Defense Model & \multicolumn{2}{c}{Robust Accuracy $\cA_\adv(f)$}
		\\
		\hline
		{Madry} & 84.39\% [0.0519784] & (Natural) \\ 
		\hline
		{TRADES} & \textbf{87.60\%} [0.0380258] & (Natural)  \\
		\hline
		{Madry} & 66.00\% [0.1252672]  &(Ours) \\ 
		\hline
		{TRADES} &  \textbf{70.14}\% [0.0885364] & (Madry)  \\
		\hline
	\end{tabular}
\end{table}

\subsection{Case study: NeurIPS 2018 Adversarial Vision Challenge}

\medskip
\noindent{\textbf{Competition settings.}}
In the adversarial competition, the adversarial attacks and defenses are under the black-box setting. The dataset in this competition is Tiny ImageNet, which consists of 550,000 data (with our data augmentation) and 200 classes. The robust models only return label predictions instead of explicit gradients and confidence scores.
The task for robust models is to defend against adversarial examples that are generated by the top-5 submissions in the un-targeted attack track. The score for each defense model is evaluated by the smallest perturbation distance that makes the defense model fail to output correct labels.

\medskip
\noindent{\textbf{Competition results.}}
The methodology in this paper was applied to the competition, where our entry ranked the 1st place. We implemented our method to train ResNet models.
We report the mean $\ell_2$ perturbation distance of the top-6 entries in Figure \ref{figure: NIPS competition}. It shows that our method outperforms other approaches with a large margin. In particular, we surpass the runner-up submission by $11.41\%$ in terms of mean $\ell_2$ perturbation distance.

\begin{figure}
\centering
\includegraphics[scale=0.8]{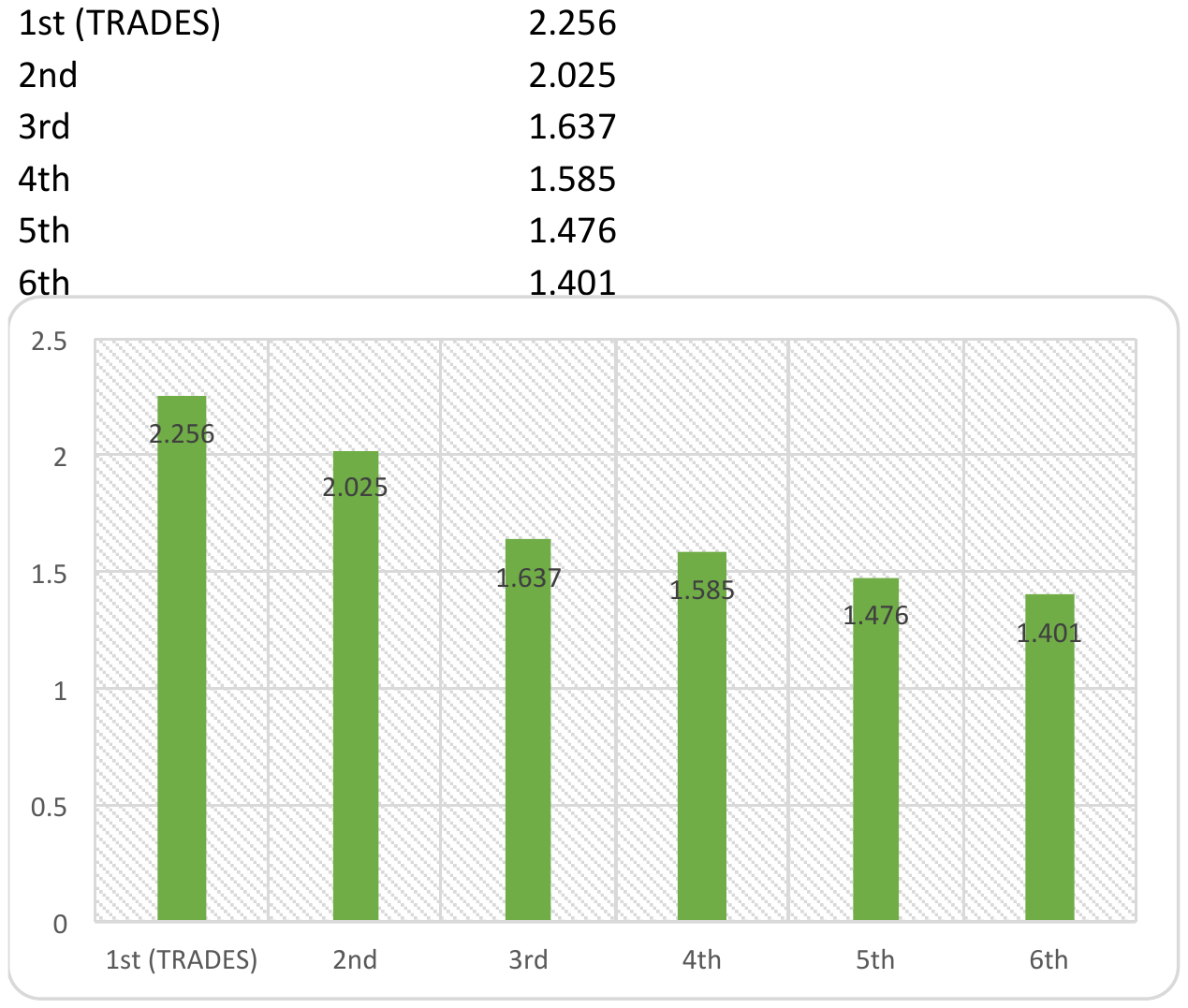}
\caption{Top-6 results (out of \textasciitilde2,000 submissions) in the NeurIPS 2018 Adversarial Vision Challenge. The vertical axis represents the mean $\ell_2$ perturbation distance that makes robust models fail to output correct labels.}
\label{figure: NIPS competition}
\end{figure}

\section{Conclusions}

In this paper, we study the problem of adversarial defenses against structural perturbations around input data. We focus on the trade-off between robustness and accuracy, and show an upper bound on the gap between robust error and optimal natural error. Our
result advances the state-of-the-art work and matches the lower bound in the worst-case scenario. The bounds motivate us to minimize a new form of regularized surrogate loss, TRADES, for adversarial training. Experiments on real datasets and adversarial competition demonstrate the effectiveness of our proposed algorithms.
It would be interesting to combine our methods with other related line of research on adversarial defenses, e.g., feature denoising technique~\cite{xie2018feature} and network architecture design~\cite{cisse2017parseval}, to achieve more robust learning systems.

\medskip
\noindent\textbf{Acknowledgements.} We thank Maria-Florina Balcan, Avrim Blum, Zico Kolter, and Aleksander Mądry for valuable comments and discussions.

\bibliographystyle{alpha}
\bibliography{reference}

\newcommand{\etalchar}[1]{$^{#1}$}
\begin{thebibliography}{XWvdM{\etalchar{+}}18}

\bibitem[AAG19]{alaifari2018adef}
Rima Alaifari, Giovanni~S Alberti, and Tandri Gauksson.
\newblock {ADef}: an iterative algorithm to construct adversarial deformations.
\newblock In {\em International Conference on Learning Representations}, 2019.

\bibitem[ACW18]{athalye2018obfuscated}
Anish Athalye, Nicholas Carlini, and David Wagner.
\newblock Obfuscated gradients give a false sense of security: Circumventing
  defenses to adversarial examples.
\newblock In {\em International Conference on Machine Learning}, 2018.

\bibitem[Bar01]{barthe2001extremal}
Franck Barthe.
\newblock Extremal properties of central half-spaces for product measures.
\newblock {\em Journal of Functional Analysis}, 182(1):81--107, 2001.

\bibitem[BCZ{\etalchar{+}}18]{brown2018unrestricted}
Tom~B Brown, Nicholas Carlini, Chiyuan Zhang, Catherine Olsson, Paul
  Christiano, and Ian Goodfellow.
\newblock Unrestricted adversarial examples.
\newblock {\em arXiv preprint arXiv:1809.08352}, 2018.

\bibitem[BJM06]{bartlett2006convexity}
Peter~L Bartlett, Michael~I Jordan, and Jon~D McAuliffe.
\newblock Convexity, classification, and risk bounds.
\newblock {\em Journal of the American Statistical Association},
  101(473):138--156, 2006.

\bibitem[BLPR18]{bubeck2018adversarial2}
S{\'e}bastien Bubeck, Yin~Tat Lee, Eric Price, and Ilya Razenshteyn.
\newblock Adversarial examples from cryptographic pseudo-random generators.
\newblock {\em arXiv preprint arXiv:1811.06418}, 2018.

\bibitem[BPR18]{bubeck2018adversarial1}
S{\'e}bastien Bubeck, Eric Price, and Ilya Razenshteyn.
\newblock Adversarial examples from computational constraints.
\newblock {\em arXiv preprint arXiv:1805.10204}, 2018.

\bibitem[BRB18]{brendel2017decision}
Wieland Brendel, Jonas Rauber, and Matthias Bethge.
\newblock Decision-based adversarial attacks: Reliable attacks against
  black-box machine learning models.
\newblock In {\em International Conference on Learning Representations}, 2018.

\bibitem[BRRG18]{buckman2018thermometer}
Jacob Buckman, Aurko Roy, Colin Raffel, and Ian Goodfellow.
\newblock Thermometer encoding: One hot way to resist adversarial examples.
\newblock In {\em International Conference on Learning Representations}, 2018.

\bibitem[CBG{\etalchar{+}}17]{cisse2017parseval}
Moustapha Cisse, Piotr Bojanowski, Edouard Grave, Yann Dauphin, and Nicolas
  Usunier.
\newblock Parseval networks: Improving robustness to adversarial examples.
\newblock In {\em International Conference on Machine Learning}, 2017.

\bibitem[CBM18]{NIPS2018_7307}
Daniel Cullina, Arjun~Nitin Bhagoji, and Prateek Mittal.
\newblock {PAC}-learning in the presence of adversaries.
\newblock In {\em Advances in Neural Information Processing Systems}, pages
  228--239, 2018.

\bibitem[CRS{\etalchar{+}}19]{carmon2019unlabeled}
Yair Carmon, Aditi Raghunathan, Ludwig Schmidt, Percy Liang, and John~C Duchi.
\newblock Unlabeled data improves adversarial robustness.
\newblock {\em arXiv preprint arXiv:1905.13736}, 2019.

\bibitem[CW17]{carlini2017towards}
Nicholas Carlini and David Wagner.
\newblock Towards evaluating the robustness of neural networks.
\newblock In {\em IEEE Symposium on Security and Privacy}, pages 39--57, 2017.

\bibitem[DAL{\etalchar{+}}18]{dhillon2018stochastic}
Guneet~S Dhillon, Kamyar Azizzadenesheli, Zachary~C Lipton, Jeremy Bernstein,
  Jean Kossaifi, Aran Khanna, and Anima Anandkumar.
\newblock Stochastic activation pruning for robust adversarial defense.
\newblock {\em arXiv preprint arXiv:1803.01442}, 2018.

\bibitem[DLP{\etalchar{+}}18]{dong2018boosting}
Yinpeng Dong, Fangzhou Liao, Tianyu Pang, Hang Su, Jun Zhu, Xiaolin Hu, and
  Jianguo Li.
\newblock Boosting adversarial attacks with momentum.
\newblock In {\em IEEE Conference on Computer Vision and Pattern Recognition},
  pages 9185--9193, 2018.

\bibitem[EIA18]{engstrom2018evaluating}
Logan Engstrom, Andrew Ilyas, and Anish Athalye.
\newblock Evaluating and understanding the robustness of adversarial logit
  pairing.
\newblock {\em arXiv preprint arXiv:1807.10272}, 2018.

\bibitem[ETT{\etalchar{+}}17]{engstrom2017rotation}
Logan Engstrom, Brandon Tran, Dimitris Tsipras, Ludwig Schmidt, and Aleksander
  Madry.
\newblock A rotation and a translation suffice: Fooling {CNNs} with simple
  transformations.
\newblock {\em arXiv preprint arXiv:1712.02779}, 2017.

\bibitem[FFF18]{fawzi2018adversarial}
Alhussein Fawzi, Hamza Fawzi, and Omar Fawzi.
\newblock Adversarial vulnerability for any classifier.
\newblock In {\em Advances in Neural Information Processing Systems}, pages
  1186--1195, 2018.

\bibitem[GAG{\etalchar{+}}18]{gilmer2018motivating}
Justin Gilmer, Ryan~P Adams, Ian Goodfellow, David Andersen, and George~E Dahl.
\newblock Motivating the rules of the game for adversarial example research.
\newblock {\em arXiv preprint arXiv:1807.06732}, 2018.

\bibitem[GSS15]{goodfellow6572explaining}
Ian~J Goodfellow, Jonathon Shlens, and Christian Szegedy.
\newblock Explaining and harnessing adversarial examples.
\newblock In {\em International Conference on Learning Representations}, 2015.

\bibitem[HPG{\etalchar{+}}17]{huang2017adversarial}
Sandy Huang, Nicolas Papernot, Ian Goodfellow, Yan Duan, and Pieter Abbeel.
\newblock Adversarial attacks on neural network policies.
\newblock {\em arXiv preprint arXiv:1702.02284}, 2017.

\bibitem[HWC{\etalchar{+}}17]{he2017adversarial}
Warren He, James Wei, Xinyun Chen, Nicholas Carlini, and Dawn Song.
\newblock Adversarial example defenses: Ensembles of weak defenses are not
  strong.
\newblock {\em arXiv preprint arXiv:1706.04701}, 2017.

\bibitem[HXSS15]{huang2015learning}
Ruitong Huang, Bing Xu, Dale Schuurmans, and Csaba Szepesv{\'a}ri.
\newblock Learning with a strong adversary.
\newblock {\em arXiv preprint arXiv:1511.03034}, 2015.

\bibitem[HZRS16]{he2016deep}
Kaiming He, Xiangyu Zhang, Shaoqing Ren, and Jian Sun.
\newblock Deep residual learning for image recognition.
\newblock In {\em IEEE conference on computer vision and pattern recognition},
  pages 770--778, 2016.

\bibitem[JL17]{jia2017adversarial}
Robin Jia and Percy Liang.
\newblock Adversarial examples for evaluating reading comprehension systems.
\newblock In {\em Empirical Methods in Natural Language Processing}, 2017.

\bibitem[KGB17]{kurakin2016adversarial}
Alexey Kurakin, Ian Goodfellow, and Samy Bengio.
\newblock Adversarial machine learning at scale.
\newblock In {\em International Conference on Learning Representations}, 2017.

\bibitem[KKG18]{kannan2018adversarial}
Harini Kannan, Alexey Kurakin, and Ian Goodfellow.
\newblock Adversarial logit pairing.
\newblock {\em arXiv preprint arXiv:1803.06373}, 2018.

\bibitem[KW18]{kolter2017provable}
J~Zico Kolter and Eric Wong.
\newblock Provable defenses against adversarial examples via the convex outer
  adversarial polytope.
\newblock In {\em International Conference on Machine Learning}, 2018.

\bibitem[MC17]{meng2017magnet}
Dongyu Meng and Hao Chen.
\newblock Magnet: a two-pronged defense against adversarial examples.
\newblock In {\em ACM SIGSAC Conference on Computer and Communications
  Security}, pages 135--147, 2017.

\bibitem[MDFF16]{moosavi2016deepfool}
Seyed-Mohsen Moosavi-Dezfooli, Alhussein Fawzi, and Pascal Frossard.
\newblock Deepfool: a simple and accurate method to fool deep neural networks.
\newblock In {\em IEEE Conference on Computer Vision and Pattern Recognition},
  pages 2574--2582, 2016.

\bibitem[MLW{\etalchar{+}}18]{ma2018characterizing}
Xingjun Ma, Bo~Li, Yisen Wang, Sarah~M Erfani, Sudanthi Wijewickrema, Michael~E
  Houle, Grant Schoenebeck, Dawn Song, and James Bailey.
\newblock Characterizing adversarial subspaces using local intrinsic
  dimensionality.
\newblock {\em arXiv preprint arXiv:1801.02613}, 2018.

\bibitem[MMIK18]{miyato2018virtual}
Takeru Miyato, Shin-ichi Maeda, Shin Ishii, and Masanori Koyama.
\newblock Virtual adversarial training: a regularization method for supervised
  and semi-supervised learning.
\newblock {\em IEEE Transactions on Pattern Analysis and Machine Intelligence},
  2018.

\bibitem[MMS{\etalchar{+}}18]{madry2018towards}
Aleksander Madry, Aleksandar Makelov, Ludwig Schmidt, Dimitris Tsipras, and
  Adrian Vladu.
\newblock Towards deep learning models resistant to adversarial attacks.
\newblock In {\em International Conference on Learning Representations}, 2018.

\bibitem[NKM17]{na2017cascade}
Taesik Na, Jong~Hwan Ko, and Saibal Mukhopadhyay.
\newblock Cascade adversarial machine learning regularized with a unified
  embedding.
\newblock {\em arXiv preprint arXiv:1708.02582}, 2017.

\bibitem[PS16]{pires2016multiclass}
Bernardo~{\'A}vila Pires and Csaba Szepesv{\'a}ri.
\newblock Multiclass classification calibration functions.
\newblock {\em arXiv preprint arXiv:1609.06385}, 2016.

\bibitem[RBB17]{rauber2017foolbox}
Jonas Rauber, Wieland Brendel, and Matthias Bethge.
\newblock Foolbox v0. 8.0: A python toolbox to benchmark the robustness of
  machine learning models.
\newblock {\em arXiv preprint arXiv:1707.04131}, 2017.

\bibitem[RDV17]{ross2017improving}
Andrew~Slavin Ross and Finale Doshi-Velez.
\newblock Improving the adversarial robustness and interpretability of deep
  neural networks by regularizing their input gradients.
\newblock {\em arXiv preprint arXiv:1711.09404}, 2017.

\bibitem[RSL18a]{raghunathan2018certified}
Aditi Raghunathan, Jacob Steinhardt, and Percy Liang.
\newblock Certified defenses against adversarial examples.
\newblock In {\em International Conference on Learning Representations}, 2018.

\bibitem[RSL18b]{raghunathan2018semidefinite}
Aditi Raghunathan, Jacob Steinhardt, and Percy~S Liang.
\newblock Semidefinite relaxations for certifying robustness to adversarial
  examples.
\newblock In {\em Advances in Neural Information Processing Systems}, pages
  10899--10909, 2018.

\bibitem[SFK{\etalchar{+}}19]{stanforth2019labels}
Robert Stanforth, Alhussein Fawzi, Pushmeet Kohli, et~al.
\newblock Are labels required for improving adversarial robustness?
\newblock {\em arXiv preprint arXiv:1905.13725}, 2019.

\bibitem[SKC18]{samangouei2018defense}
Pouya Samangouei, Maya Kabkab, and Rama Chellappa.
\newblock Defense-gan: Protecting classifiers against adversarial attacks using
  generative models.
\newblock {\em arXiv preprint arXiv:1805.06605}, 2018.

\bibitem[SKN{\etalchar{+}}18]{song2018pixeldefend}
Yang Song, Taesup Kim, Sebastian Nowozin, Stefano Ermon, and Nate Kushman.
\newblock Pixeldefend: Leveraging generative models to understand and defend
  against adversarial examples.
\newblock In {\em International Conference on Learning Representations}, 2018.

\bibitem[SND18]{sinha2017certifiable}
Aman Sinha, Hongseok Namkoong, and John Duchi.
\newblock Certifiable distributional robustness with principled adversarial
  training.
\newblock In {\em International Conference on Learning Representations}, 2018.

\bibitem[SNG{\etalchar{+}}19]{shafahi2019adversarial}
Ali Shafahi, Mahyar Najibi, Amin Ghiasi, Zheng Xu, John Dickerson, Christoph
  Studer, Larry~S Davis, Gavin Taylor, and Tom Goldstein.
\newblock Adversarial training for free!
\newblock {\em arXiv preprint arXiv:1904.12843}, 2019.

\bibitem[SST{\etalchar{+}}18]{schmidt2018adversarially}
Ludwig Schmidt, Shibani Santurkar, Dimitris Tsipras, Kunal Talwar, and
  Aleksander M{\k{a}}dry.
\newblock Adversarially robust generalization requires more data.
\newblock In {\em Advances in Neural Information Processing Systems 31}, pages
  5019--5031, 2018.

\bibitem[SYN15]{shaham2015understanding}
Uri Shaham, Yutaro Yamada, and Sahand Negahban.
\newblock Understanding adversarial training: Increasing local stability of
  neural nets through robust optimization.
\newblock {\em arXiv preprint arXiv:1511.05432}, 2015.

\bibitem[SZC{\etalchar{+}}18]{su2018robustness}
Dong Su, Huan Zhang, Hongge Chen, Jinfeng Yi, Pin-Yu Chen, and Yupeng Gao.
\newblock Is robustness the cost of accuracy? --- a comprehensive study on the
  robustness of 18 deep image classification models.
\newblock In {\em European Conference on Computer Vision}, 2018.

\bibitem[SZS{\etalchar{+}}13]{szegedy2013intriguing}
Christian Szegedy, Wojciech Zaremba, Ilya Sutskever, Joan Bruna, Dumitru Erhan,
  Ian Goodfellow, and Rob Fergus.
\newblock Intriguing properties of neural networks.
\newblock {\em arXiv preprint arXiv:1312.6199}, 2013.

\bibitem[TKP{\etalchar{+}}18]{tramer2017ensemble}
Florian Tram{\`e}r, Alexey Kurakin, Nicolas Papernot, Ian Goodfellow, Dan
  Boneh, and Patrick McDaniel.
\newblock Ensemble adversarial training: Attacks and defenses.
\newblock In {\em International Conference on Learning Representations}, 2018.

\bibitem[TSE{\etalchar{+}}19]{tsipras2018robustness}
Dimitris Tsipras, Shibani Santurkar, Logan Engstrom, Alexander Turner, and
  Aleksander Madry.
\newblock Robustness may be at odds with accuracy.
\newblock In {\em International Conference on Learning Representations}, 2019.

\bibitem[TV16]{tabacof2016exploring}
Pedro Tabacof and Eduardo Valle.
\newblock Exploring the space of adversarial images.
\newblock In {\em International Joint Conference on Neural Networks}, pages
  426--433, 2016.

\bibitem[UOKvdO18]{pmlr-v80-uesato18a}
Jonathan Uesato, Brendan O'Donoghue, Pushmeet Kohli, and Aaron van~den Oord.
\newblock Adversarial risk and the dangers of evaluating against weak attacks.
\newblock In {\em International Conference on Machine Learning}, pages
  5025--5034, 2018.

\bibitem[VNS{\etalchar{+}}18]{NIPS2018_7779}
Riccardo Volpi, Hongseok Namkoong, Ozan Sener, John~C Duchi, Vittorio Murino,
  and Silvio Savarese.
\newblock Generalizing to unseen domains via adversarial data augmentation.
\newblock In {\em Advances in Neural Information Processing Systems}, pages
  5339--5349, 2018.

\bibitem[WSMK18]{wong1805scaling}
E~Wong, F~Schmidt, JH~Metzen, and JZ~Kolter.
\newblock Scaling provable adversarial defenses.
\newblock In {\em Advances in Neural Information Processing Systems}, 2018.

\bibitem[XWvdM{\etalchar{+}}18]{xie2018feature}
Cihang Xie, Yuxin Wu, Laurens van~der Maaten, Alan Yuille, and Kaiming He.
\newblock Feature denoising for improving adversarial robustness.
\newblock {\em arXiv preprint arXiv:1812.03411}, 2018.

\bibitem[XWZ{\etalchar{+}}17]{xie2017adversarial}
Cihang Xie, Jianyu Wang, Zhishuai Zhang, Yuyin Zhou, Lingxi Xie, and Alan
  Yuille.
\newblock Adversarial examples for semantic segmentation and object detection.
\newblock In {\em International Conference on Computer Vision}, 2017.

\bibitem[XZL{\etalchar{+}}18]{xiao2018spatially}
Chaowei Xiao, Jun-Yan Zhu, Bo~Li, Warren He, Mingyan Liu, and Dawn Song.
\newblock Spatially transformed adversarial examples.
\newblock In {\em International Conference on Learning Representations}, 2018.

\bibitem[YRB18]{yin2018rademacher}
Dong Yin, Kannan Ramchandran, and Peter Bartlett.
\newblock Rademacher complexity for adversarially robust generalization.
\newblock {\em arXiv preprint arXiv:1810.11914}, 2018.

\bibitem[ZCH{\etalchar{+}}19]{zhai2019adversarially}
Runtian Zhai, Tianle Cai, Di~He, Chen Dan, Kun He, John Hopcroft, and Liwei
  Wang.
\newblock Adversarially robust generalization just requires more unlabeled
  data.
\newblock {\em arXiv preprint arXiv:1906.00555}, 2019.

\bibitem[ZCS{\etalchar{+}}19]{zhang2018limitations}
Huan Zhang, Hongge Chen, Zhao Song, Duane Boning, Inderjit~S Dhillon, and
  Cho-Jui Hsieh.
\newblock The limitations of adversarial training and the blind-spot attack.
\newblock In {\em International Conference on Learning Representations}, 2019.

\bibitem[Zha02]{zhang2002covering}
Tong Zhang.
\newblock Covering number bounds of certain regularized linear function
  classes.
\newblock {\em Journal of Machine Learning Research}, 2:527--550, 2002.

\bibitem[ZK16]{zagoruyko2016wide}
Sergey Zagoruyko and Nikos Komodakis.
\newblock Wide residual networks.
\newblock In {\em British Machine Vision Conference}, 2016.

\bibitem[ZSLG16]{zheng2016improving}
Stephan Zheng, Yang Song, Thomas Leung, and Ian Goodfellow.
\newblock Improving the robustness of deep neural networks via stability
  training.
\newblock In {\em IEEE Conference on Computer Vision and Pattern Recognition},
  pages 4480--4488, 2016.

\bibitem[ZSS19]{zhang2019deep}
Hongyang Zhang, Junru Shao, and Ruslan Salakhutdinov.
\newblock Deep neural networks with multi-branch architectures are
  intrinsically less non-convex.
\newblock In {\em International Conference on Artificial Intelligence and
  Statistics}, pages 1099--1109, 2019.

\bibitem[ZXJ{\etalchar{+}}18]{zhang2018stackelberg}
Hongyang Zhang, Susu Xu, Jiantao Jiao, Pengtao Xie, Ruslan Salakhutdinov, and
  Eric~P Xing.
\newblock Stackelberg {GAN}: Towards provable minimax equilibrium via
  multi-generator architectures.
\newblock {\em arXiv preprint arXiv:1811.08010}, 2018.

\bibitem[ZZL{\etalchar{+}}19]{zhang2019you}
Dinghuai Zhang, Tianyuan Zhang, Yiping Lu, Zhanxing Zhu, and Bin Dong.
\newblock You only propagate once: Accelerating adversarial training via
  maximal principle.
\newblock {\em arXiv preprint arXiv:1905.00877}, 2019.

\end{thebibliography}

\newpage
\onecolumn
\appendix

\section{Other Related Works}

\medskip
\noindent{\textbf{Attack methods.}} Although deep neural networks have achieved great progress in various areas~\cite{zhang2019deep,zhang2018stackelberg}, they are brittle to adversarial attacks. Adversarial attacks have been extensively studied in the recent
years. One of the baseline attacks to deep nerual networks is the \emph{Fast Gradient Sign Method} (FGSM)~\cite{goodfellow6572explaining}. FGSM computes an adversarial example as
\begin{equation*}
\x':=\x+\epsilon\sign(\nabla_{\x} \phi(f(\x)y)),
\end{equation*}
where $\x$ is the input instance, $y$ is the label, $f:\cX\rightarrow \R$ is the \emph{score function} (parametrized by deep nerual network for example) which maps an instance to its confidence value of being positive, and $\phi(\cdot)$ is a surrogate of 0-1 loss. A more powerful yet natural extension of FGSM is the multi-step variant FGSM$^k$ (also known as PGD attack)~\cite{kurakin2016adversarial}. FGSM$^k$ applies \emph{projected gradient descent} by $k$ times:
\begin{equation*}
\x_{t+1}':=\Pi_{\bbB(\x,\epsilon)}(\x_t'+\epsilon\sign(\nabla_{\x} \phi(f(\x_t')y))),
\end{equation*}
where $\x_t'$ is the $t$-th iteration of the algorithm with $\x_0':=\x$ and $\Pi_{\bbB(\x,\epsilon)}$ is the projection operator onto the ball $\bbB(\x,\epsilon)$.
Both FGSM and FGSM$^k$ are approximately solving (the linear approximation of) maximization problem:
\begin{equation*}
\max_{\x'\in\bbB(\x,\epsilon)} \phi(f(\x')y).
\end{equation*}
They can be adapted to the purpose of black-box attacks by running the algorithms on another similar network which is white-box to the algorithms~\cite{tramer2017ensemble}. Though defenses that cause obfuscated
gradients defeat iterative optimization based attacks, \cite{athalye2018obfuscated} showed that defenses relying on this
effect can be circumvented. Other attack methods include MI-FGSM~\cite{dong2018boosting} and LBFGS attacks~\cite{tabacof2016exploring}.

\medskip
\noindent{\textbf{Robust optimization based defenses.}} Compared with attack methods, adversarial defense methods are relatively fewer. Robust optimization based defenses are inspired by the above-mentioned attacks. Intuitively, the methods train a network by fitting its parameters to the adversarial examples:
\begin{equation}
\label{equ: old loss app}
\min_{f} \bbE \left\{ \max_{\X'\in\bbB(\X,\epsilon)}\phi(f(\X')Y)\right\}.
\end{equation}
Following this framework, \cite{huang2015learning,shaham2015understanding}  considered one-step adversaries, while \cite{madry2018towards} worked with multi-step methods for the inner maximization problem.
There are, however, two critical differences between the robust optimization based defenses
and the present paper. Firstly, robust optimization based defenses lack of theoretical guarantees. Secondly, such methods do not consider the trade-off between accuracy and robustness.

\medskip
\noindent{\textbf{Relaxation based defenses.}} We mention another related line of research in adversarial defenses---relaxation based defenses. Given that the inner maximization in problem \eqref{equ: old loss app} might be hard to solve due to the non-convexity nature of deep neural networks, \cite{kolter2017provable} and \cite{raghunathan2018certified} considered a convex outer approximation of the set of activations reachable through a norm-bounded perturbation for one-hidden-layer neural networks. \cite{wong1805scaling} later scaled the methods to larger models, and \cite{raghunathan2018semidefinite} proposed a tighter convex approximation. \cite{sinha2017certifiable,NIPS2018_7779} considered a Lagrangian penalty formulation of perturbing the underlying data distribution in a Wasserstein ball. These approaches, however, do not apply when the activation function is ReLU.

\medskip
\noindent{\textbf{Theoretical progress.}} Despite a large amount of empirical works on adversarial defenses, many fundamental questions remain open in theory. There are a few preliminary explorations in recent years. \cite{fawzi2018adversarial} derived upper bounds on the robustness to perturbations of any classification function, under the assumption that the data is
generated with a smooth generative model. From computational aspects, \cite{bubeck2018adversarial1,bubeck2018adversarial2} showed that adversarial examples in machine learning are likely not due to information-theoretic limitations, but rather it could be due to computational hardness. From statistical aspects, \cite{schmidt2018adversarially} showed that the sample complexity of robust training can be significantly larger than that of standard training. This gap holds irrespective of the training algorithm or the model family. \cite{NIPS2018_7307} and \cite{yin2018rademacher} studied the uniform convergence of robust error $\cR_\adv(f)$ by extending the classic VC and Rademacher arguments to the case of adversarial learning, respectively. A recent work demonstrates the existence of trade-off between accuracy and robustness~\cite{tsipras2018robustness}, without providing a practical algorithm to address it.

\section{Proofs of Main Results}

In this section, we provide the proofs of our main results.

\subsection{Proof of Theorem \ref{theorem: surrogate function}}

\medskip
\noindent\textbf{Theorem~\ref{theorem: surrogate function} (restated).}
\emph{Let $\cR_\phi(f):=\bbE\phi(f(\X)Y)$ and $\cR_\phi^*:=\min_f \cR_\phi(f)$. Under Assumption \ref{assumption: classification-calibrated}, for any non-negative loss function $\phi$ such that $\phi(0)\ge 1$, any measurable $f:\cX\rightarrow \R$, any probability distribution on $\cX\times\{\pm 1\}$, and any $\lambda>0$, we have
\begin{equation*}
\begin{split}
\quad\cR_\adv(f)-\cR_\nat^*&\le \psi^{-1}(\cR_\phi(f)-\cR_\phi^*)+\Pr[\X\in\bbB(\boundary(f),\epsilon),f(\X)Y>0]\\
&\le \psi^{-1}(\cR_\phi(f)-\cR_\phi^*)+\bbE \max_{\X'\in\bbB(\X,\epsilon)}\phi(f(\X')f(\X)/\lambda).
\end{split}
\end{equation*}
}

\begin{proof}
By Eqn. \eqref{eqn.fundamentalequation}, $\cR_{\adv}(f)-\cR_{\nat}^*=\cR_\nat(f)-\cR_{\nat}^*+\cR_{\text{bdy}}(f)\le \psi^{-1}(\cR_\phi(f)-\cR_\phi^*)+\cR_{\text{bdy}}(f)$, where the last inequality holds because we choose $\phi$ as a classification-calibrated loss~\cite{bartlett2006convexity}. This leads to the first inequality.

Also, notice that
\begin{equation*}
\begin{split}
\Pr[\X\in\bbB(\boundary(f),\epsilon),f(\X)Y>0]&\le \Pr[\X\in\bbB(\boundary(f),\epsilon)]\\
&= \bbE \max_{\X'\in\bbB(\X,\epsilon)}\1\{f(\X')\not=f(\X)\}\\
&= \bbE \max_{\X'\in\bbB(\X,\epsilon)}\1\{f(\X')f(\X)/\lambda<0\}\\
&\le \bbE \max_{\X'\in\bbB(\X,\epsilon)}\phi(f(\X')f(\X)/\lambda).
\end{split}
\end{equation*}
This leads to the second inequality.
\end{proof}

\subsection{Proof of Theorem \ref{theorem: tightness of surrogate loss}}

\medskip
\noindent\textbf{Theorem~\ref{theorem: tightness of surrogate loss} (restated).}
\emph{Suppose that $|\cX|\ge 2$. Under Assumption \ref{assumption: classification-calibrated}, for any non-negative loss function $\phi$ such that $\phi(x)\rightarrow 0$ as $x\rightarrow +\infty$, any $\xi>0$, and any $\theta\in[0,1]$, there exists a probability distribution on $\cX\times \{\pm 1\}$, a function $f:\R^d\rightarrow\R$, and a regularization parameter $\lambda>0$ such that
$\cR_\adv(f)-\cR_\nat^*=\theta$
and
\begin{equation*}
\psi\Big(\theta-\bbE \max_{\X'\in\bbB(\X,\epsilon)}\phi(f(\X')f(\X)/\lambda)\Big)\le \cR_\phi(f)-\cR_\phi^*\le \psi\left(\theta-\bbE \max_{\X'\in\bbB(\X,\epsilon)}\phi(f(\X')f(\X)/\lambda)\right)+\xi.
\end{equation*}}

\begin{proof}
The first inequality follows from Theorem \ref{theorem: surrogate function}. Thus it suffices to prove the second inequality.

Fix $\epsilon>0$ and $\theta\in[0,1]$. By the definition of $\psi$ and its continuity, we can choose $\gamma,\alpha_1,\alpha_2\in[0,1]$ such that $\theta=\gamma\alpha_1+(1-\gamma)\alpha_2$ and $\psi(\theta)\ge \gamma\tilde{\psi}(\alpha_1)+(1-\gamma)\tilde{\psi}(\alpha_2)-\epsilon/3$. For two distinct points $\x_1,\x_2\in \cX$, we set $\cP_\cX$ such that $\Pr[\X=\x_1]=\gamma$, $\Pr[\X=\x_2]=1-\gamma$, $\eta(\x_1)=(1+\alpha_1)/2$, and $\eta(\x_2)=(1+\alpha_2)/2$. By the definition of $H^-$, we choose function $f:\R^d\rightarrow\R$ such that $f(\x)< 0$ for all $\x\in\cX$, $C_{\eta(\x_1)}(f(\x_1))\le H^-(\eta(\x_1))+\epsilon/3$, and $C_{\eta(\x_2)}(f(\x_2))\le H^-(\eta(\x_2))+\epsilon/3$. By the continuity of $\psi$, there is an $\epsilon'>0$ such that $\psi(\theta)\le \psi(\theta-\epsilon_0)+\epsilon/3$ for all $0\le \epsilon_0<\epsilon'$. We also note that there exists an $\lambda_0>0$ such that for any $0<\lambda<\lambda_0$, we have
\begin{equation*}
0\le \bbE \max_{\X'\in\bbB(\X,\epsilon)}\phi(f(\X')f(\X)/\lambda)<\epsilon'.
\end{equation*}
Thus, we have
\begin{equation*}
\begin{split}
\cR_\phi(f)-\cR_\phi^*&=\bbE\phi(Yf(\X))-\inf_f \bbE\phi(Yf(\X))\\
&=\gamma [C_{\eta(\x_1)}(f(\x_1))-H(\eta(\x_1))]+(1-\gamma)[C_{\eta(\x_2)}(f(\x_2))-H(\eta(\x_2))]\\
&\le \gamma [H^-(\eta(\x_1))-H(\eta(\x_1))]+(1-\gamma)[H^-(\eta(\x_2))-H(\eta(\x_2))]+\epsilon/3\\
&=\gamma \tilde\psi(\alpha_1)+(1-\gamma)\tilde{\psi}(\alpha_2)+\epsilon/3\\
&\le \psi(\theta)+2\epsilon/3\\
&\le \psi\left(\theta-\bbE \max_{\X'\in\bbB(\X,\epsilon)}\phi(f(\X')f(\X)/\lambda)\right)+\epsilon.
\end{split}
\end{equation*}
Furthermore, by Lemma \ref{lemma: risk equality},
\begin{equation*}
\begin{split}
\cR_\adv(f)-\cR_\nat^*&=\bbE[\1\{\sign(f(\X))\not=\sign(f^*(\X)),\X\in\bbB(\boundary(f),\epsilon)^\perp\}|2\eta(\X)-1|]\\
&\quad +\Pr[\X\in \bbB(\boundary(f),\epsilon),\sign(f^*(\X))=Y]\\
&=\bbE|2\eta(\X)-1|\\
&=\gamma (2\eta(\x_1)-1)+(1-\gamma)(2\eta(\x_2)-1)\\
&=\theta,
\end{split}
\end{equation*}
where $f^*$ is the Bayes optimal classifier which outputs ``positive'' for all data points.
\end{proof}

\section{Extra Theoretical Results}
In this section, we provide extra theoretical results for adversarial defenses.

\subsection{Adversarial vulnerability under log-concave distributions}

Theorem \ref{theorem: surrogate function} states that for any classifier $f$, the value
$\Pr[\X\in\bbB(\boundary(f),\epsilon)]$
characterizes the robustness of the classifier.
In this section, we show that among all classifiers such that $\Pr[\sign(f(\X))=+1]=1/2$, linear classifier minimizes
\begin{equation}
\label{equ: boundary measure}
\liminf_{\epsilon\rightarrow+0}\frac{\Pr[\X\in\bbB(\boundary(f),\epsilon)]}{\epsilon},
\end{equation}
provided that the marginal distribution over $\cX$ is products of log-concave measures. A measure is \emph{log-concave} if the logarithm of its density is a concave function. The class of log-concave measures contains many well-known (classes of) distributions as special cases, such as Gaussian and uniform measure over ball.

Our results are inspired by the isoperimetric inequality of log-concave distributions by the work of \cite{barthe2001extremal}. Intuitively, the isoperimetric problem consists in finding subsets of prescribed measure, such that its measure increases the less under enlargement. Our analysis leads to the following guarantee on the quantity \eqref{equ: boundary measure}.

\begin{theorem}
\label{theorem: vulnerability}
Let $\mu$ be an absolutely continuous log-concave probability measure on $\R$ with even density function and let $\mu^{\otimes d}$ be the products of $\mu$ with dimension $d$. Denote by $d\mu=e^{-M(x)}$, where $M:\R\rightarrow[0,\infty]$ is convex. Assume that $M(0)=0$. If $\sqrt{M(x)}$ is a convex function, then for every integer $d$ and any classifier $f$ with $\Pr[\sign(f(\X))=+1]=1/2$, we have
\begin{equation*}
\liminf_{\epsilon\rightarrow+0}\frac{\Pr_{\X\sim \mu^{\otimes d}}[\X\in\bbB(\boundary(f),\epsilon)]}{\epsilon}\ge c
\end{equation*}
for an absolute constant $c>0$. Furthermore, among all such probability measures and classifiers, the linear classifier over products of Gaussian measure with mean $0$ and variance $1/(2\pi)$ achieves the lower bound.
\end{theorem}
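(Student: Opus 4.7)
The plan is to translate the liminf into a Minkowski-type boundary measure, invoke a Gaussian-type isoperimetric inequality for log-concave product measures satisfying the convex-$\sqrt{M}$ hypothesis, and evaluate the extremal case explicitly to read off the constant.

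First I would rewrite the quantity of interest geometrically. Let $A = \{\x \in \R^d : f(\x) > 0\}$, so the hypothesis $\Pr[\sign(f(\X))=+1] = 1/2$ becomes $\mu^{\otimes d}(A) = 1/2$. By the definition of $\bbB(\boundary(f),\epsilon)$, this set is exactly the $\epsilon$-tubular neighborhood $\{\x : d(\x,\partial A) \le \epsilon\}$ of the decision boundary. Hence $\liminf_{\epsilon \to 0^+} \Pr[\X \in \bbB(\boundary(f),\epsilon)]/\epsilon$ equals (twice) the lower Minkowski content of $\partial A$ with respect to $\mu^{\otimes d}$, which I will denote $\mu^{\otimes d}_+(A)$.

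Second I would invoke Barthe's isoperimetric inequality for products of symmetric log-concave measures whose potential $M$ has convex square root. This is the precise hypothesis under which $\mu^{\otimes d}$ satisfies a Gaussian-type isoperimetric inequality: among all Borel sets with $\mu^{\otimes d}(A) = 1/2$, the boundary measure $\mu^{\otimes d}_+(A)$ is minimized by a central half-space $H = \{\x : x_1 \ge 0\}$, which has measure $1/2$ by the evenness of $\mu$. For an arbitrary measurable $f$, I would extend the bound from smooth-boundary sets to the lower Minkowski content via standard approximation.

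Third I would evaluate the boundary measure of the extremal set $H$ by Fubini: $\mu^{\otimes d}(\bbB(\partial H,\epsilon)) = \mu([-\epsilon,\epsilon]) = \int_{-\epsilon}^{\epsilon} e^{-M(t)}\,dt$, and dividing by $\epsilon$ and letting $\epsilon \to 0^+$ gives $2 e^{-M(0)} = 2$ since $M(0)=0$. Therefore the universal constant $c = 2$ works, uniformly in the dimension $d$ and in the choice of measure satisfying the hypothesis. For the Gaussian with variance $1/(2\pi)$, the density is $e^{-\pi x^2}$, so $M(x) = \pi x^2$: this is convex with convex square root, vanishes at $0$, and integrates to $1$; and the linear classifier $f(\x) = x_1$ is a central half-space, so the bound $c = 2$ is attained, establishing the extremality claim.

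The main obstacle will be the isoperimetric step: one must verify that Barthe-type Gaussian isoperimetry genuinely follows from the convex-$\sqrt{M}$ hypothesis (rather than the more standard Bakry--Emery curvature condition $\mathrm{Hess}(M) \succeq c \mathbf{I}$) and holds for the products $\mu^{\otimes d}$ uniformly in $d$, and that the bound survives the passage from smooth-boundary sets to the lower Minkowski content of an arbitrary Borel $A$. Both are technical but follow standard semigroup and rearrangement arguments; the heavy analytic lifting sits in Barthe's original functional inequality, which I would invoke as a black box from the cited reference.
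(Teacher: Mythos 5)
Your proposal follows essentially the same route as the paper: recast the quantity as a boundary/Minkowski-content problem for the set $A=\{f>0\}$ of measure $1/2$, invoke Barthe's isoperimetric inequality for products of symmetric log-concave measures with convex $\sqrt{M}$ (the paper cites the same Theorem~9 of \cite{barthe2001extremal}), and identify the half-space/linear-classifier as the extremizer; the paper makes the two-sided tube explicit by splitting $\bbB(\boundary(f),\epsilon)$ into the $\{\sign f=+1\}$ and $\{\sign f=-1\}$ halves and applying the one-sided isoperimetric bound to each, which is exactly your ``twice the lower Minkowski content'' observation. Your additional computation that the sharp constant is $c=2e^{-M(0)}=2$ goes slightly beyond the paper, which leaves $c$ unspecified, but it is consistent with the same argument.
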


Theorem \ref{theorem: vulnerability} claims that under the products of log-concave distributions, the quantity $\Pr[\X\in\bbB(\boundary(f),\epsilon)]$ increases with rate at least $\Omega(\epsilon)$ for all classifier $f$, among which the linear classifier achieves the minimal value.

\begin{figure}[t]
\centering
\subfigure{
\includegraphics[width=6.5cm]{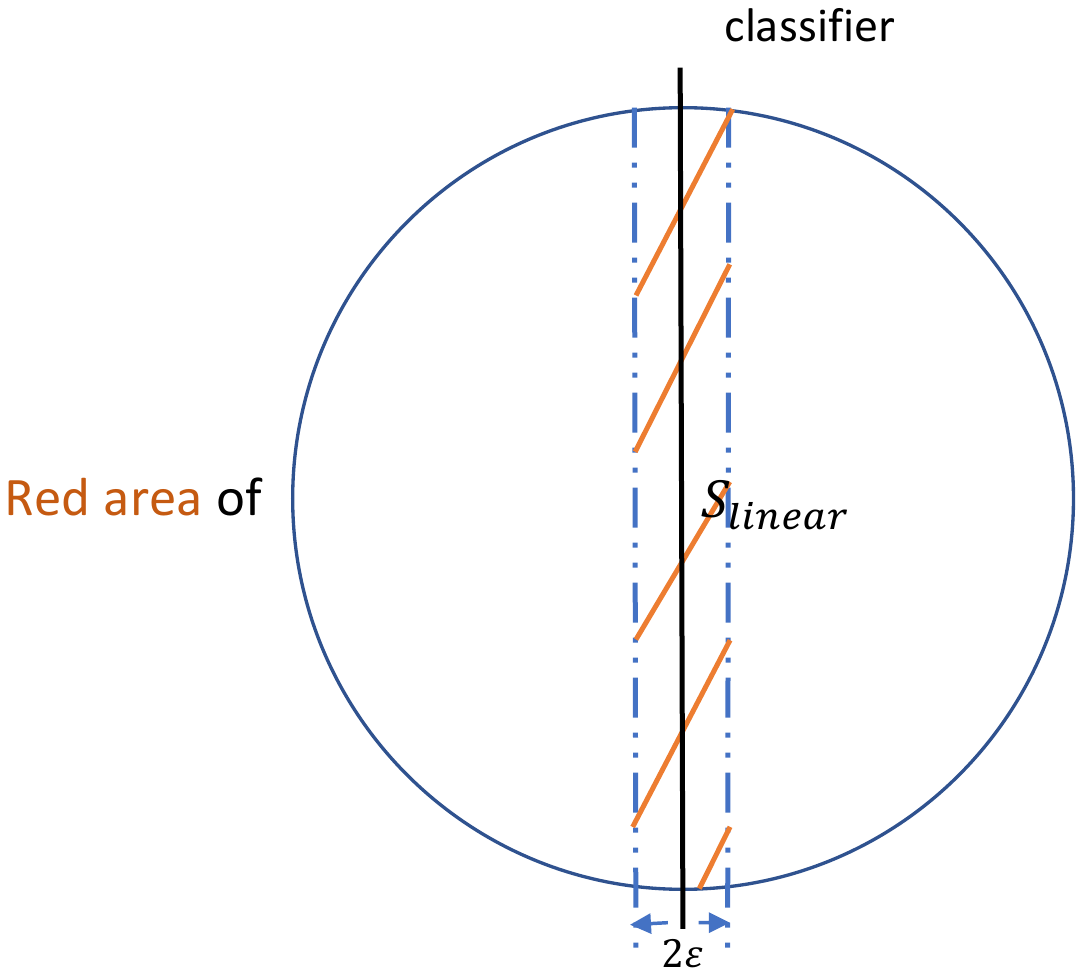}}
\subfigure{
\includegraphics[width=6.8cm]{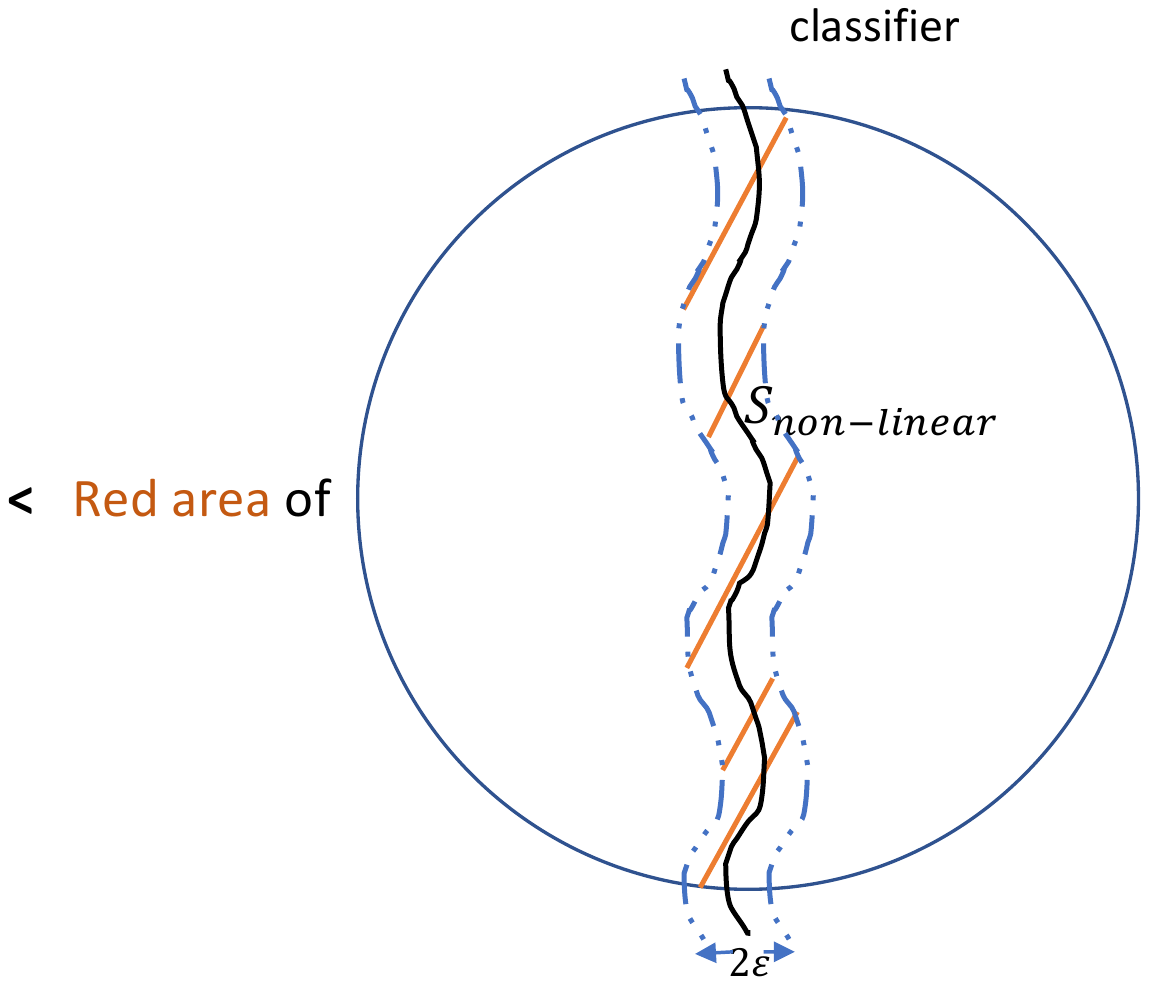}}
\caption{\textbf{Left figure:} boundary neighborhood of linear classifier. \textbf{Right figure:} boundary neighborhood of non-linear classifier. Theorem \ref{theorem: vulnerability} shows that the mass of $S_{\text{linear}}$ is smaller than the mass of $S_{\text{non-linear}}$, provided that the underlying distribution over the instance space is the products of log-concave distribution on the real line.}
\label{figure: isoperimetry}
\end{figure}

\subsubsection{Proofs of Theorem \ref{theorem: vulnerability}}

For a Borel set $\cA$ and for $\epsilon>0$, denote by $\cA_\epsilon=\{\x:d(\x,\cA)\le\epsilon\}$. The boundary measure of $\cA$ is then defined as
\begin{equation*}
\mu^+(\cA)=\liminf_{\epsilon\rightarrow+0}\frac{\mu(\cA_\epsilon)-\mu(\cA)}{\epsilon}.
\end{equation*}
The isoperimetric function is
\begin{equation}
\label{equ: isoperimetric problem}
I_\mu=\inf\{\mu^+(\cA):\mu(\cA)=1/2\}.
\end{equation}

Before proceeding, we cite the following results from \cite{barthe2001extremal}.
\begin{lemma}[Theorem 9, \cite{barthe2001extremal}]
\label{lemma: isoperimetry}
Let $\mu$ be an absolutely continuous log-concave probability measure on $\R$ with even density function. Denote by $d\mu=e^{-M(x)}$, where $M:\R\rightarrow[0,\infty]$ is convex. Assume that $M(0)=0$. If $\sqrt{M(x)}$ is a convex function, then for every integer $d$, we have $I_{\mu^{\otimes d}}\ge I_{\gamma^{\otimes d}}$, where $\gamma$ is the Gaussian measure with mean $0$ and variance $1/(2\pi)$. In particular, among sets of measure $1/2$ for $\mu^{\otimes d}$, the halfspace $[0,\infty)\times \R^{d-1}$ is solution to the isoperimetric problem \eqref{equ: isoperimetric problem}.
\end{lemma}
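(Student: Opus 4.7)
The plan is to establish the isoperimetric comparison $I_{\mu^{\otimes d}} \ge I_{\gamma^{\otimes d}}$ by first proving a pointwise comparison of the full isoperimetric \emph{profiles} in one dimension, and then tensorizing. For each $t \in (0,1)$, let $\cI_\nu(t) := \inf\{\nu^+(A) : \nu(A) = t\}$ denote the isoperimetric profile of a Borel probability measure $\nu$; the statement of the lemma is the specialization of the profile comparison to $t=1/2$, but the proof necessarily proceeds through the full profile.

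\medskip
\noindent\textbf{One-dimensional comparison.} Since $\mu$ is a symmetric log-concave probability measure on $\R$ with density $e^{-M}$, a classical rearrangement / ``bunching'' argument (Borell, refined by Bobkov) shows that the isoperimetric minimizers of mass $t$ are half-lines: $\cI_\mu(t)$ is attained by $(-\infty, F_\mu^{-1}(t)]$, yielding the explicit formula $\cI_\mu(t) = e^{-M(F_\mu^{-1}(t))}$, where $F_\mu$ is the CDF of $\mu$. The same holds for the Gaussian reference $\gamma = \cN(0,1/(2\pi))$, whose potential is $M_\gamma(x) = \pi x^2$. The key inequality to establish is
\begin{equation*}
\cI_\mu(t) \ge \cI_\gamma(t) \qquad \forall\, t \in (0,1).
\end{equation*}
The normalization $M(0)=0$ and the specific Gaussian variance $1/(2\pi)$ give the boundary equality $\cI_\mu(1/2) = \cI_\gamma(1/2) = 1$, and the evenness of both densities makes the profiles symmetric about $t=1/2$. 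To propagate this boundary equality into a pointwise inequality, I would differentiate $\cI_\nu(t)$ in $t$, express the derivative in terms of the potential evaluated at the quantile, and invoke the hypothesis that $\sqrt{M}$ is convex. Since $\sqrt{M_\gamma(x)} = \sqrt{\pi}\,|x|$ is itself convex (with the Gaussian on the boundary of the admissible class), the hypothesis quantitatively forces $M$ to grow at least as fast as $M_\gamma$ away from the origin, and a Grönwall-type comparison of the associated ODEs for $F_\mu^{-1}$ and $F_\gamma^{-1}$ delivers the pointwise dominance of the profiles.

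\medskip
\noindent\textbf{Tensorization and conclusion.} To lift the one-dimensional inequality to $\mu^{\otimes d}$, I would use Bobkov's functional reformulation of the Gaussian isoperimetric inequality: the pointwise profile bound $\cI_\mu \ge \cI_\gamma$ is equivalent to the functional inequality
\begin{equation*}
\cI_\gamma\!\left(\textstyle\int f\, d\mu\right) \le \int \sqrt{\cI_\gamma(f)^2 + |f'|^2}\, d\mu
\end{equation*}
for all smooth $f:\R\to[0,1]$. Such functional inequalities tensorize by a standard one-coordinate-at-a-time induction, so $\mu^{\otimes d}$ satisfies the analogous $d$-dimensional functional inequality, which is in turn equivalent to $\cI_{\mu^{\otimes d}} \ge \cI_\gamma$. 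Combined with Borell's theorem that $\cI_{\gamma^{\otimes d}} = \cI_\gamma$ (Gaussian isoperimetric minimizers are halfspaces in every dimension), this yields $\cI_{\mu^{\otimes d}} \ge \cI_{\gamma^{\otimes d}}$. Restricting to $t=1/2$ gives $I_{\mu^{\otimes d}} \ge I_{\gamma^{\otimes d}}$; the halfspace $[0,\infty)\times\R^{d-1}$ attains this bound for $\gamma^{\otimes d}$ because its Gaussian boundary measure equals the one-dimensional Gaussian density at $0$, which is $1$ under our choice of variance.

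\medskip
\noindent\textbf{Main obstacle.} The technical crux is the one-dimensional pointwise comparison: converting the rigid hypothesis ``$\sqrt{M}$ is convex'' into the analytic dominance $\cI_\mu \ge \cI_\gamma$ requires delicate calculus relating the growth of $M$ to that of $M_\gamma$ through the quantile map $F_\mu^{-1}$. Once this one-dimensional profile bound is secured, the tensorization step is a largely mechanical application of Bobkov's framework, and the identification of halfspaces as extremizers for the Gaussian is a classical result that can be invoked as a black box.
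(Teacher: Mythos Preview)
The paper does not prove this lemma at all: it is quoted verbatim as Theorem~9 of Barthe (2001) and used as a black box in the proof of Theorem~\ref{theorem: vulnerability}. So there is no ``paper's own proof'' to compare against.

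That said, your outline is essentially the strategy of Barthe's original argument: a one-dimensional profile comparison $\cI_\mu \ge \cI_\gamma$ driven by the convexity of $\sqrt{M}$, followed by tensorization via Bobkov's functional inequality, and then the identification $\cI_{\gamma^{\otimes d}} = \cI_\gamma$ from the Borell--Sudakov--Tsirel'son theorem. The structure is correct and the ``main obstacle'' you identify is the right one.

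One small gap to flag in your final paragraph: the ``in particular'' clause of the lemma asserts that the halfspace $[0,\infty)\times\R^{d-1}$ solves the isoperimetric problem for $\mu^{\otimes d}$, not for $\gamma^{\otimes d}$. Your argument only checks that the halfspace attains the bound for the Gaussian product. To close the loop you need to compute the $\mu^{\otimes d}$-boundary measure of that halfspace directly: by Fubini it equals the $\mu$-density at $0$, namely $e^{-M(0)}=1$, which matches the lower bound $\cI_{\mu^{\otimes d}}(1/2) \ge \cI_\gamma(1/2) = 1$ and hence certifies optimality for $\mu^{\otimes d}$. This is an easy fix, but as written your conclusion addresses the wrong measure.
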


Now we are ready to prove Theorem \ref{theorem: vulnerability}.

\begin{proof}
We note that
\begin{equation*}
\begin{split}
&\quad\Pr[\X\in\bbB(\boundary(f),\epsilon)]\\
&=\Pr[\X\in\bbB(\boundary(f),\epsilon),\sign(f(\X))=+1]+\Pr[\X\in\bbB(\boundary(f),\epsilon),\sign(f(\X))=-1].
\end{split}
\end{equation*}
To apply Lemma \ref{lemma: isoperimetry}, we set the $\cA$ in Lemma \ref{lemma: isoperimetry} as the event $\{\sign(f(\X))=+1\}$. Therefore, the set
\begin{equation*}
\cA_\epsilon=\{\X\in\bbB(\boundary(f),\epsilon),\sign(f(\X))=-1\}.
\end{equation*}
By Lemma \ref{lemma: isoperimetry}, we know that for linear classifier $f_0$ which represents the halfspace $[0,\infty)\times \R^{d-1}$, and any classifier $f$,
\begin{equation}
\label{equ: middle 1}
\begin{split}
&\liminf_{\epsilon\rightarrow+0} \frac{\Pr_{\X\sim \mu^{\otimes d}}[\X\in\bbB(\boundary(f),\epsilon),\sign(f(\X))=-1]-\Pr[\sign(f(\X))=+1]}{\epsilon}\\
&\ge \liminf_{\epsilon\rightarrow+0} \frac{\Pr_{\X\sim \gamma^{\otimes d}}[\X\in\bbB(\boundary(f_0),\epsilon),\sign(f_0(\X))=-1]-\Pr[\sign(f_0(\X))=+1]}{\epsilon}.
\end{split}
\end{equation}
Similarly, we have
\begin{equation}
\label{equ: middle 2}
\begin{split}
&\liminf_{\epsilon\rightarrow+0} \frac{\Pr_{\X\sim \mu^{\otimes d}}[\X\in\bbB(\boundary(f),\epsilon),\sign(f(\X))=+1]-\Pr[\sign(f(\X))=-1]}{\epsilon}\\
&\ge \liminf_{\epsilon\rightarrow+0} \frac{\Pr_{\X\sim \gamma^{\otimes d}}[\X\in\bbB(\boundary(f_0),\epsilon),\sign(f_0(\X))=+1]-\Pr[\sign(f_0(\X))=-1]}{\epsilon}.
\end{split}
\end{equation}
Adding both sides of Eqns. \eqref{equ: middle 1} and \eqref{equ: middle 2}, we have
\begin{equation*}
\begin{split}
\liminf_{\epsilon\rightarrow+0} \frac{\Pr_{\X\sim \mu^{\otimes d}}[\X\in\bbB(\boundary(f),\epsilon)]}{\epsilon}&\ge \liminf_{\epsilon\rightarrow+0} \frac{\Pr_{\X\sim \gamma^{\otimes d}}[\X\in\bbB(\boundary(f_0),\epsilon)]}{\epsilon}\ge c.
\end{split}
\end{equation*}
\end{proof}

\subsection{Margin based generalization bounds}

Before proceeding, we first cite a useful lemma. We say that function $f_1:\R\rightarrow\R$ and $f_2:\R\rightarrow\R$ have a $\gamma$ separator if there exists a function $f_3:\R\rightarrow\R$ such that $|h_1-h_2|\le\gamma$ implies $f_1(h_1)\le f_3(h_2)\le f_2(h_1)$. For any given function $f_1$ and $\gamma>0$, one can always construct $f_2$ and $f_3$ such that $f_1$ and $f_2$ have a $\gamma$-separator $f_3$ by setting $f_2(h)=\sup_{|h-h'|\le2\gamma}f_1(h')$ and $f_3(h)=\sup_{|h-h'|\le\gamma}f_1(h')$.

\begin{lemma}[Corollary 1, \cite{zhang2002covering}]
\label{lemma: general margin bound}
Let $f_1$ be a function $\R\rightarrow\R$. Consider a family of functions $f_2^\gamma:\R\rightarrow\R$, parametrized by $\gamma$, such that $0\le f_1\le f_2^\gamma\le 1$. Assume that for all $\gamma$, $f_1$ and $f_2^\gamma$ has a $\gamma$ separator. Assume also that $f_2^\gamma(z)\ge f_2^{\gamma'}(z)$ when $\gamma\ge\gamma'$. Let $\gamma_1>\gamma_2>...$ be a decreasing sequence of parameters, and $p_i$ be a sequence of positive numbers such that $\sum_{i=1}^\infty p_i=1$, then for all $\eta>0$, with probability of at least $1-\delta$ over data:
\begin{equation*}
\bbE_{(\X,Y)\sim\cD} f_1(\cL(\w,\X,Y))\le \frac{1}{n}\sum_{i=1}^n f_2^\gamma(\cL(\w,\x_i,y_i))+\sqrt{\frac{32}{n}\left(\ln 4\cN_\infty(\cL,\gamma_i,\x_{1:n})+\ln\frac{1}{p_i \delta}\right)}
\end{equation*}
for all $\w$ and $\gamma$, where for each fixed $\gamma$, we use $i$ to denote the smallest index such that $\gamma_i\le\gamma$.
\end{lemma}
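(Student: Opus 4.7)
The plan is to combine a covering-number bound at each scale with a union bound over the discretization $\{\gamma_i\}$. Fix $\gamma>0$ and let $i=i(\gamma)$ be the smallest index with $\gamma_i\le\gamma$. By definition of $\cN_\infty(\cL,\gamma_i,\x_{1:n})$ there is a set $\cC_i$ of cardinality at most $N_i:=\cN_\infty(\cL,\gamma_i,\x_{1:n})$ such that, for every $\w$, some $\widehat\cL\in\cC_i$ satisfies $\max_{k\le n}|\cL(\w,\x_k,y_k)-\widehat\cL(\x_k,y_k)|\le\gamma_i$. Applying the $\gamma_i$-separator hypothesis to the pair $(f_1,f_2^{\gamma_i})$ produces an intermediate function $f_3^{\gamma_i}$ with $f_1(\cL(\w,\cdot))\le f_3^{\gamma_i}(\widehat\cL(\cdot))\le f_2^{\gamma_i}(\cL(\w,\cdot))\le f_2^{\gamma}(\cL(\w,\cdot))$ on each sample point, where the last inequality uses the stated monotonicity of $f_2^\gamma$ in $\gamma$.

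Next, for each fixed $\widehat\cL\in\cC_i$, the variable $f_3^{\gamma_i}(\widehat\cL(\X,Y))$ lies in $[0,1]$, so Hoeffding's inequality controls the two-sided deviation between its population and empirical means by a term of order $\sqrt{\log(1/\delta')/n}$ with probability $1-\delta'$. A union bound over the $N_i$ cover elements at per-scale budget $p_i\delta$ gives uniform control over $\cC_i$ by a quantity proportional to $\sqrt{(1/n)(\log N_i+\log(1/(p_i\delta)))}$. Chaining this with the sandwich from the previous paragraph yields $\bbE_{(\X,Y)\sim\cD} f_1(\cL(\w,\X,Y))\le \tfrac1n\sum_k f_2^{\gamma}(\cL(\w,\x_k,y_k))+\sqrt{(32/n)(\log 4\cN_\infty(\cL,\gamma_i,\x_{1:n})+\log(1/(p_i\delta)))}$ for all $\w$ and the fixed $\gamma$. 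A further union bound over $i$, which is free because $\sum_i p_i=1$, upgrades the statement to one that holds simultaneously over all $\gamma>0$ (with $i=i(\gamma)$ as defined).

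The main obstacle is the bookkeeping around the separator: to move from $f_1(\cL(\w,\cdot))$ on the population side to $f_2^\gamma(\cL(\w,\cdot))$ on the empirical side one has to cross the three-step comparison $f_1\le f_3^{\gamma_i}\le f_2^{\gamma_i}\le f_2^\gamma$ with a \emph{single} concentration inequality applied to $f_3^{\gamma_i}\circ\widehat\cL$, and the constants (in particular the prefactor $32$ and the $\log 4$ inside the square root) have to be tracked carefully through the two-sided Hoeffding step and the two nested union bounds. Beyond this accounting there is no deep difficulty; since the statement is Corollary~1 of \cite{zhang2002covering}, I would appeal to that reference for the exact constants, recovering the margin-based chaining argument given there.
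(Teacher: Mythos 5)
The paper does not prove this lemma; it is stated verbatim as a citation of Corollary~1 in \cite{zhang2002covering}, so there is no in-paper argument to compare your sketch against. Your sketch assembles the right high-level ingredients: multi-scale empirical $\ell_\infty$ covers, the $\gamma_i$-separator sandwich $f_1\le f_3^{\gamma_i}\circ\widehat\cL\le f_2^{\gamma_i}\le f_2^\gamma$, and a weighted union bound over the discretization scales via $\sum_i p_i=1$.

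However, the concentration step has a genuine gap. The cover $\cC_i$ is an empirical $\ell_\infty$ cover at scale $\gamma_i$ built from the training sample $\x_{1:n}$, so its elements $\widehat\cL$ are data-dependent objects (indeed, they are only vectors of values on the $n$ sample points, with no canonical extension to the whole instance space). You therefore cannot apply Hoeffding's inequality to $f_3^{\gamma_i}(\widehat\cL(\X,Y))$ as if $\widehat\cL$ were a fixed function chosen before seeing the data, and the inequality $f_1(\cL(\w,\X,Y))\le f_3^{\gamma_i}(\widehat\cL(\X,Y))$ holds only at the sample points where the $\gamma_i$-closeness is guaranteed by the cover, not for a fresh draw $(\X,Y)\sim\cD$, so there is no population quantity $\bbE f_3^{\gamma_i}(\widehat\cL(\X,Y))$ to compare against. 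The standard repair (and the route the cited reference takes) is symmetrization with a ghost sample: compare the empirical mean on $\x_{1:n}$ with that on an independent ghost sample, pass to a cover of the doubled sample (which is legitimate once you condition on the $2n$ points), and then run the union bound and concentration step over that now-fixed finite cover; this is precisely where the factor $4$ inside $\ln(4\cN_\infty)$ and the constant $32$ come from. Your closing appeal to \cite{zhang2002covering} for the exact constants is reasonable, and matches what the paper itself does, but the Hoeffding-on-a-data-dependent-cover step as written would not survive scrutiny without inserting the symmetrization argument.
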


\begin{lemma}[Theorem 4, \cite{zhang2002covering}]
\label{lemma: covering number}
If $\|\x\|_p\le b$ and $\|\w\|_q\le a$, where $2\le p<\infty$ and $1/p+1/q=1$, then $\forall\gamma>0$,
\begin{equation*}
\log_2 \cN_\infty(\cL,\gamma,n)\le 36(p-1)\frac{a^2b^2}{\gamma^2}\log_2[2\lceil 4ab/\gamma+2\rceil+1].
\end{equation*}
\end{lemma}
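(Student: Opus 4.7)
The plan is to prove the covering-number bound via the Maurey--Jones--Barron sparsification technique applied to the $\ell_q$-constrained weight class, combined with a careful count of distinct discretized prediction patterns. The overall strategy is: (i) show that every $\w$ with $\|\w\|_q\le a$ admits a $k$-sparse random approximation $\hat\W$ whose per-sample prediction errors $|(\hat\W-\w)^T\x_i|$ are uniformly small in expectation; (ii) derandomize via the probabilistic method to obtain a deterministic $\hat\w$; (iii) count the number of distinct prediction vectors at resolution $\gamma$ to bound the covering number.

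For step (i), given $\w$ with $\|\w\|_q\le a$, one constructs $\hat\W = \tfrac{1}{k}\sum_{j=1}^{k} V_j$, where $V_1,\dots,V_k$ are iid scaled signed basis ``atoms'' whose distribution is tailored to the coordinates of $\w$ so that $\bbE V_j = \w$. For $q=p/(p-1)\in(1,2]$, the atoms are of the form $c_i\sigma e_i$ with amplitudes $c_i$ and a sampling law depending on $(|\w_i|,\sign \w_i)$, chosen so that each $\|V_j\|_p$ and each $|V_j^T\x_i|$ is controlled by $ab$ via H\"older. The central estimate is the moment bound
$$
\bbE\max_{i\le n}\bigl|(\hat\W-\w)^T\x_i\bigr|\ \le\ C\sqrt{(p-1)/k}\cdot ab,
$$
where the factor $\sqrt{p-1}$ arises from the Rademacher type-2 constant of $\ell_p$ for $p\ge 2$ (Tomczak--Jaegermann / Ball--Carlen--Lieb). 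Picking $k=\lceil 36(p-1)a^2b^2/\gamma^2\rceil$ makes the expected error at most $\gamma/2$, and Markov's inequality yields a deterministic realization $\hat\w$ with $\max_i|(\w-\hat\w)^T\x_i|\le \gamma/2$.

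For step (iii), note that each prediction value $\hat\w^T\x_i$ lies in $[-ab,ab]$ (H\"older), so rounding to precision $\gamma/2$ leaves at most $N=2\lceil 4ab/\gamma+2\rceil+1$ distinct quantized values per coordinate. Because $\hat\w$ is an average of $k$ atoms whose per-coordinate inner products $V_j^T\x_i$ take only finitely many quantized values (determined by the atomic structure, not by the ambient dimension $d$ or the sample size $n$), the total number of distinct quantized prediction patterns $(\hat\w^T\x_1,\dots,\hat\w^T\x_n)$ across the sample is at most $N^k$. Combining with the choice of $k$ gives
$$
\log_2 \cN_\infty(\cL,\gamma,n)\ \le\ k\log_2 N\ \le\ 36(p-1)\frac{a^2b^2}{\gamma^2}\log_2\bigl[2\lceil 4ab/\gamma+2\rceil+1\bigr].
$$

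The principal difficulty is step (i): one must obtain the $\sqrt{p-1}$ factor rather than the $\sqrt{\log n}$ that a naive Hoeffding plus union-bound argument would produce. This requires exploiting the type-2 property of $\ell_p$ for $p\ge 2$ (equivalently, a Rosenthal/Pisier-type moment inequality with the sharp constant) to bound the $\ell_p^n$-moment of the sparsification error and then transferring to an $\ell_\infty^n$ bound. A secondary subtlety is exhibiting an atomic family for which $\bbE V_j = \w$ holds while keeping $\max_{i,j}|V_j^T\x_i|$ uniformly bounded by $O(ab)$, matched to the $\ell_q$-geometry of the weight ball; this is an explicit but routine construction.
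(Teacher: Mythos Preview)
The paper does not prove this lemma; it is quoted directly from \cite{zhang2002covering} and used as a black box. Your proposal to establish it via Maurey--Jones--Barron sparsification is exactly the technique Zhang employs, and your steps (i)--(ii) are a faithful outline: the random $k$-term approximation with $\bbE V_j=\w$, the moment bound picking up the type-2 constant $\sqrt{p-1}$ of $\ell_p$ for $p\ge 2$ (rather than the $\sqrt{\log n}$ that a crude Hoeffding-plus-union-bound argument would yield), and derandomization by the probabilistic method.

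The gap is in step (iii). You assert that the total number of distinct quantized prediction patterns is at most $N^k$, justified only by the remark that the atoms' inner products ``take only finitely many quantized values (determined by the atomic structure, not by the ambient dimension $d$ or the sample size $n$)''. But in the construction you describe, each atom $V_j$ is a scaled signed basis vector $c_l\sigma e_l$: once $V_j$ is fixed, the whole prediction vector $(V_j^T\x_1,\dots,V_j^T\x_n)$ is fixed, yet the number of distinct atoms depends on the coordinate index $l\in[d]$ and on the (continuous, $w$-dependent) amplitude $c_l$. Discretizing the amplitudes to a grid of size $N$ still leaves $O(Nd)$ distinct atoms, while discretizing the values $V_j^T\x_i$ coordinate-wise gives up to $N^n$ distinct atom prediction vectors. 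Neither route yields the $n,d$-free count $N^k$ you claim. This counting step is precisely where Zhang's argument is delicate; in fact, Zhang's Theorem~4 as stated in the original carries a factor of $n$ inside the logarithm arising from exactly this step (the present paper's transcription omits it). Your sketch would need to make explicit how the atom prediction vectors are enumerated and why the count is independent of $d$---and, if you are aiming for the paper's $n$-free version, why it is independent of $n$ as well.
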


\begin{theorem}
Suppose that the data is 2-norm bounded by $\|\x\|_2\le b$. Consider the family $\Gamma$ of linear classifier $\w$ with $\|\w\|_2=1$. Let $\cR_\adv(\w):=\bbE_{(\X,Y)\sim\cD}\1[\exists \X^\adv\in\bbB_2(\X,\epsilon) \text{ such that } Y\w^T\X^\adv\le 0]$. Then with probability at least $1-\delta$ over $n$ random samples $(\x_i,y_i)\sim\cD$, for all margin width $\gamma>0$ and $\w\in\Gamma$, we have
\begin{equation*}
\cR_\adv(\w)\le \frac{1}{n}\sum_{i=1}^n\1(\exists\x_i^\adv\in\bbB(\x_i,\epsilon)\text{ s.t. }y_i\w^T\x_i^\adv\le2\gamma)+\sqrt{\frac{C}{n}\left(\frac{b^2}{\gamma^2}\ln n+\ln\frac{1}{\delta}\right)}.
\end{equation*}
\end{theorem}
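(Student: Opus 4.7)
\textbf{Step 1: Reduce adversarial error to a margin quantity on the clean input.} Because $\|\w\|_2 = 1$, the Cauchy--Schwarz inequality gives
\begin{equation*}
\min_{\x^\adv \in \bbB_2(\x,\epsilon)} y\,\w^T \x^\adv \;=\; y\,\w^T \x \;-\; \epsilon\|\w\|_2 \;=\; y\,\w^T \x - \epsilon,
\end{equation*}
with the minimum attained at $\x^\adv = \x - y\epsilon\w$. Consequently, for every threshold $t$, the event $\{\exists\, \x^\adv \in \bbB_2(\x,\epsilon)\text{ s.t. } y\w^T \x^\adv \le t\}$ coincides with $\{y\w^T\x \le t + \epsilon\}$. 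Define $\cL(\w,\x,y) := y\w^T\x$, and the two cost functions $f_1(z) := \1\{z \le \epsilon\}$ and $f_2^\gamma(z) := \1\{z \le \epsilon + 2\gamma\}$. Then $\cR_\adv(\w) = \bbE f_1(\cL(\w,\X,Y))$ and the empirical term in the theorem equals $\frac{1}{n}\sum_{i=1}^n f_2^\gamma(\cL(\w,\x_i,y_i))$.

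\textbf{Step 2: Verify the hypotheses of Lemma \ref{lemma: general margin bound}.} Take $f_3(z) := \1\{z \le \epsilon + \gamma\}$ as the candidate $\gamma$-separator. If $|z - z'| \le \gamma$, then $z \le \epsilon$ forces $z' \le \epsilon + \gamma$ so that $f_1(z) \le f_3(z')$; conversely $z' \le \epsilon + \gamma$ forces $z \le \epsilon + 2\gamma$ so that $f_3(z') \le f_2^\gamma(z)$. The nesting $0 \le f_1 \le f_2^\gamma \le 1$ and the monotonicity $f_2^\gamma \ge f_2^{\gamma'}$ for $\gamma \ge \gamma'$ are immediate from the definitions.

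\textbf{Step 3: Invoke the covering-number bound.} Since $\|\x\|_2 \le b$ and $\|\w\|_2 = 1$, Lemma \ref{lemma: covering number} with $p = q = 2$, $a = 1$ yields
\begin{equation*}
\log_2 \cN_\infty(\cL,\gamma,n) \;\le\; \frac{36\, b^2}{\gamma^2}\,\log_2\!\bigl(2\lceil 4b/\gamma + 2\rceil + 1\bigr) \;=\; O\!\left(\frac{b^2}{\gamma^2}\,\log(b/\gamma)\right).
\end{equation*}
For margins $\gamma < b/n$ the claimed bound is already trivial (the square-root term exceeds $1$), so we may restrict to $\gamma \ge b/n$, for which $\log(b/\gamma) = O(\log n)$ and the covering term becomes $O\bigl((b^2/\gamma^2)\log n\bigr)$.

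\textbf{Step 4: Apply Lemma \ref{lemma: general margin bound} on a dyadic grid of margins.} Set $\gamma_i := 2^{-i}$ and $p_i := 6/(\pi^2 i^2)$, so that $\sum_{i\ge 1} p_i = 1$. For any requested margin $\gamma$, the smallest index with $\gamma_i \le \gamma$ satisfies $i = \lceil \log_2(1/\gamma)\rceil$, hence $\ln(1/p_i) = O(\log\log(1/\gamma)) = O(\log\log n)$, which is absorbed into $\log n$. Plugging the bound from Step~3 into Lemma \ref{lemma: general margin bound} and using Steps~1--2 yields, with probability at least $1-\delta$, simultaneously for every $\w \in \Gamma$ and every $\gamma > 0$,
\begin{equation*}
\cR_\adv(\w) \;\le\; \frac{1}{n}\sum_{i=1}^n \1\!\left(\exists\, \x_i^\adv \in \bbB(\x_i,\epsilon)\text{ s.t. } y_i\,\w^T\x_i^\adv \le 2\gamma\right) \;+\; \sqrt{\frac{C}{n}\left(\frac{b^2}{\gamma^2}\ln n + \ln\frac{1}{\delta}\right)},
\end{equation*}
which is the claim.

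\textbf{Main obstacle.} The only nontrivial bookkeeping is the union-bound over margins in Step~4: one must choose the weights $\{p_i\}$ so that their entropy $\ln(1/p_i)$ and the logarithmic $\log(b/\gamma)$ inside the covering number are both absorbed into the single $\log n$ factor appearing in the statement. Everything else is a clean reduction: the $\ell_2$ Cauchy--Schwarz identity in Step~1 turns the adversarial indicator into a classical margin indicator on clean data, after which the standard margin-based generalization machinery from \cite{zhang2002covering} applies verbatim.
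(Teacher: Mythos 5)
Your proof is correct and takes essentially the same approach as the paper: both reduce the statement to Lemmas~\ref{lemma: general margin bound} and~\ref{lemma: covering number} with appropriate choices of $\cL$, $f_1$, $f_2^\gamma$, $\gamma_i$, and $p_i$. The only differences are cosmetic bookkeeping — you absorb the $-\epsilon$ shift from the $\ell_2$ Cauchy--Schwarz identity into the indicator thresholds and use $p_i \propto 1/i^2$, whereas the paper keeps $\epsilon$ inside $\cL(\w,\x,y)=\min_{\x^\adv\in\bbB(\x,\epsilon)}y\w^T\x^\adv$ and uses $\gamma_i=b/2^i$, $p_i=1/2^i$; both choices lead to the same bound.
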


\begin{proof}
The theorem is a straightforward result of Lemmas \ref{lemma: general margin bound} and \ref{lemma: covering number} with
$$\cL(\w,\x,y)=\min_{\x^\adv\in\bbB(\x,\epsilon)}y\w^T\x^\adv,$$
$$f_1(g)=\1(g\le 0)\quad\mbox{and}\quad f_2^\gamma(h)=\sup_{|g-h|<2\gamma} f_1(g)=f_1(g-2\gamma)=\1(g\le 2\gamma),$$
and $\gamma_i=b/2^i$ and $p_i=1/2^i$.
\end{proof}

We note that for the $\ell_2$ ball $\bbB_2(\x,\epsilon)=\{\x':\|\x-\x'\|_2\le\epsilon\}$, we have
\begin{equation*}
\1(\exists\x_i^\adv\in\bbB(\x_i,\epsilon)\text{ s.t. }y_i\w^T\x_i^\adv\le2\gamma)=\max_{\x_i^\adv\in\bbB(\x_i,\epsilon)}\1(y_i\w^T\x_i^\adv\le2\gamma)=\1(y_i\w^T\x_i\le2\gamma+\epsilon).
\end{equation*}
Therefore, we can design the following algorithm---Algorithm \ref{algorithm: adversarial training of linear separator via structural risk minimization}.

\begin{algorithm}[ht]
\caption{Adversarial Training of Linear Separator via Structural Risk Minimization}
\begin{algorithmic}
\label{algorithm: adversarial training of linear separator via structural risk minimization}
\STATE {\bfseries Input:} Samples $(\x_{1:n},y_{1:n})\sim\cD$, a bunch of margin parameters $\gamma_1,...,\gamma_T$.
\STATE {\bfseries 1:} \textbf{For} $k=1,2,...,T$
\STATE {\bfseries 2:} \quad Solve the minimax optimization problem:
\begin{equation*}
\begin{split}
\cL_k(\w_k^*,\x_{1:n},y_{1:n})&=\min_{\w\in\bbS(\0,1)} \frac{1}{n}\sum_{i=1}^n \max_{\x_i^\adv\in\bbB(\x_i,\epsilon)}\1(y_i\w^T\x_i^\adv\le2\gamma_k)\\
&=\min_{\w\in\bbS(\0,1)} \frac{1}{n}\sum_{i=1}^n \1(y_i\w^T\x_i\le2\gamma_k+\epsilon).
\end{split}
\end{equation*}
\STATE {\bfseries 3:} \textbf{End For}
\STATE {\bfseries 4:} $k^*=\argmin_{k} \cL_k(\w_k^*,\x_{1:n},y_{1:n})+\sqrt{\frac{C}{n}\left(\frac{b^2}{\gamma_k^2}\ln n+\ln\frac{1}{\delta}\right)}$.
\STATE {\bfseries Output:} Hypothesis $\w_{k^*}$.
\end{algorithmic}
\end{algorithm}

\subsection{A lemma}

We denote by $f^*(\cdot):=2\eta(\cdot)-1$ the Bayes decision rule throughout the proofs.

\begin{lemma}
\label{lemma: risk equality}
For any classifier $f$, we have
\begin{equation*}
\begin{split}
\cR_\adv(f)-\cR_\nat^*=&\bbE[\1\{\sign(f(\X))\not=\sign(f^*(\X)),\X\in\bbB(\boundary(f),\epsilon)^\perp\}|2\eta(\X)-1|]\\
&+\Pr[\X\in \bbB(\boundary(f),\epsilon),\sign(f^*(\X))=Y].
\end{split}
\end{equation*}
\end{lemma}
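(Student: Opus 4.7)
The plan is to start from the fundamental decomposition $\cR_\adv(f) = \cR_\nat(f) + \cR_{\text{bdy}}(f)$ from Eqn.~\eqref{eqn.fundamentalequation}, rewrite $\cR_\nat(f)-\cR_\nat^*$ via the classical pointwise excess-risk formula for 0-1 loss, and then split both terms according to whether $\X$ lies in the $\epsilon$-neighborhood of $\boundary(f)$ or in its complement. The key identity I will invoke is
\begin{equation*}
\cR_\nat(f)-\cR_\nat^* = \bbE\bigl[\1\{\sign(f(\X))\neq\sign(f^*(\X))\}\,|2\eta(\X)-1|\bigr],
\end{equation*}
which follows by conditioning on $\X$ and noting that $\cR_\nat^*$ is realized by $f^*=2\eta-1$. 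Splitting the indicator as $\1\{\cdot,\X\in\bbB(\boundary(f),\epsilon)^\perp\}+\1\{\cdot,\X\in\bbB(\boundary(f),\epsilon)\}$ immediately produces the first summand on the target right-hand side, so the task reduces to proving that the ``near-boundary'' part of the excess-risk term, added to $\cR_{\text{bdy}}(f)$, collapses to $\Pr[\X\in\bbB(\boundary(f),\epsilon),\,\sign(f^*(\X))=Y]$.

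For that collapse, the plan is a short pointwise computation conditional on $\X$ with $\X\in\bbB(\boundary(f),\epsilon)$. Using $\Pr[Y=\sign(f^*(\X))\mid\X]=\max(\eta(\X),1-\eta(\X))$ and $\Pr[Y\neq\sign(f^*(\X))\mid\X]=\min(\eta(\X),1-\eta(\X))$, I will split into two cases:
\begin{itemize}
\item If $\sign(f(\X))=\sign(f^*(\X))$, then $f(\X)Y>0\iff Y=\sign(f^*(\X))$, so the contribution to $\cR_{\text{bdy}}(f)$ is $\max(\eta,1-\eta)$, while the near-boundary excess-risk term contributes nothing.
\item If $\sign(f(\X))\neq\sign(f^*(\X))$, then $f(\X)Y>0\iff Y\neq\sign(f^*(\X))$, so the contribution to $\cR_{\text{bdy}}(f)$ is $\min(\eta,1-\eta)$, and the near-boundary excess-risk contribution is $|2\eta-1|=\max(\eta,1-\eta)-\min(\eta,1-\eta)$; summed, the two again give $\max(\eta,1-\eta)$.
\end{itemize}
In either case the pointwise sum equals $\max(\eta(\X),1-\eta(\X))=\Pr[Y=\sign(f^*(\X))\mid\X]$, and integrating against $\1\{\X\in\bbB(\boundary(f),\epsilon)\}\,d\Pr_\X$ yields exactly the second summand of the target identity.

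The only non-routine step is the two-case verification above; the rest of the argument is bookkeeping on indicators. I expect the main obstacle to be purely notational, namely making sure $\sign$ conventions (in particular $\sign(0)=+1$) and the distinction between $f(\X)Y>0$ and $\sign(f(\X))=Y$ are handled consistently, and that the ``on the boundary'' case $f(\X)=0$ (measure-theoretically absorbed into $\bbB(\boundary(f),\epsilon)$) does not produce any leakage between the two summands. Once these are pinned down, chaining the three displays---$\cR_\adv(f)-\cR_\nat^*=(\cR_\nat(f)-\cR_\nat^*)+\cR_{\text{bdy}}(f)$, the pointwise excess-risk formula, and the pointwise case analysis---gives the claimed equality directly.
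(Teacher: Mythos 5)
Your proposal is correct and reaches the identity by a genuinely different route than the paper. The paper works directly from the definition of $\cR_\adv(f)$: it computes the conditional robust error $\Pr(\exists \X'\in\bbB(\X,\epsilon)\text{ s.t. }\sign(f(\X'))\ne Y\mid\X=\x)$ pointwise, shows it equals $1$ when $\x\in\bbB(\boundary(f),\epsilon)$ and equals the ordinary conditional 0-1 risk otherwise, integrates, and then subtracts the Bayes risk $\cR_\nat(f^*)$ split over the same two regions. You instead start from the decomposition $\cR_\adv(f)=\cR_\nat(f)+\cR_{\text{bdy}}(f)$ from Eqn.~\eqref{eqn.fundamentalequation}, invoke the classical excess-risk formula $\cR_\nat(f)-\cR_\nat^*=\bbE[\1\{\sign(f(\X))\neq\sign(f^*(\X))\}|2\eta(\X)-1|]$ as a black box, split it by proximity to $\boundary(f)$, and prove by a two-case pointwise analysis that the near-boundary excess-risk piece plus $\cR_{\text{bdy}}(f)$ collapses to $\Pr[\X\in\bbB(\boundary(f),\epsilon),\sign(f^*(\X))=Y]$. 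Your route is more modular: it reuses the fundamental equation rather than implicitly re-deriving it, and the only new computation is the $\max(\eta,1-\eta)$ case analysis, which you verified correctly. The paper's route is self-contained (it never formally proves Eqn.~\eqref{eqn.fundamentalequation}, so its direct computation effectively supplies that content) but longer. Both arguments share exactly the same measure-zero convention slack between $\{f(\X)Y\le 0\}$ and $\{\sign(f(\X))\ne Y\}$, and between $\{f(\X)Y>0\}$ and $\{\sign(f(\X))=Y\}$ when $f(\X)=0$; you flag this correctly, and the paper is no more careful about it, so it is not a gap in your proposal relative to the target.
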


\begin{proof}
For any classifier $f$, we have
\begin{equation*}
\begin{split}
&\quad \Pr(\exists \X'\in\bbB(\X,\epsilon) \text{ s.t. } \sign(f(\X'))\not=Y|\X=\x)\\
&=\Pr(Y=1,\exists \X'\in\bbB(\X,\epsilon) \text{ s.t. }\sign(f(\X'))=-1|\X=\x)\\
&\quad+\Pr(Y=-1,\exists \X'\in\bbB(\X,\epsilon) \text{ s.t. }\sign(f(\X'))=1|\X=\x)\\
&=\bbE[\1\{Y=1\}\1\{\exists \X'\in\bbB(\X,\epsilon) \text{ s.t. }\sign(f(\X'))=-1\}|\X=\x]\\
&\quad +\bbE[\1\{Y=-1\}\1\{\exists \X'\in\bbB(\X,\epsilon) \text{ s.t. }\sign(f(\X'))=1\}|\X=\x]\\
&=\1\{\exists \x'\in\bbB(\x,\epsilon) \text{ s.t. }\sign(f(\x'))=-1\}\bbE\1\{Y=1|\X=\x\}\\
&\quad+\1\{\exists \x'\in\bbB(\x,\epsilon) \text{ s.t. }\sign(f(\x'))=1\}\bbE\1\{Y=-1|\X=\x\}\\
&=\1\{\exists \x'\in\bbB(\x,\epsilon) \text{ s.t. }\sign(f(\x'))=-1\}\eta(\x)+\1\{\exists \x'\in\bbB(\x,\epsilon) \text{ s.t. }\sign(f(\x'))=1\}(1-\eta(\x))\\
&=
\begin{cases}
1, & \x\in \bbB(\boundary(f),\epsilon),\\
\1\{\sign(f(\x))=-1\}(2\eta(\x)-1)+(1-\eta(\x)), & \text{otherwise}.
\end{cases}
\end{split}
\end{equation*}

Therefore,
\begin{equation*}
\begin{split}
&\quad \cR_\adv(f)\\
&=\int_{\cX} \Pr[\exists \X'\in\bbB(\X,\epsilon) \text{ s.t. }\sign(f(\X'))\not=Y|\X=\x] d \Pr\nolimits_{\X}(\x)\\
&=\int_{\bbB(\boundary(f),\epsilon)} \Pr[\exists \X'\in\bbB(\X,\epsilon) \text{ s.t. }\sign(f(\X'))\not=Y|\X=\x] d \Pr\nolimits_{\X}(\x)\\
&\quad +\int_{\bbB(\boundary(f),\epsilon)^\perp} \Pr[\exists \X'\in\bbB(\X,\epsilon) \text{ s.t. }\sign(f(\X'))\not=Y|\X=\x] d \Pr\nolimits_{\X}(\x)\\
&=\Pr(\X\in \bbB(\boundary(f),\epsilon))\\
&\quad +\int_{\bbB(\boundary(f),\epsilon)^\perp} [\1\{\sign(f(\x))=-1\}(2\eta(\x)-1)+(1-\eta(\x))]d \Pr\nolimits_{\X}(\x).
\end{split}
\end{equation*}
We have
\begin{equation*}
\begin{split}
&\quad\cR_\adv(f)-\cR_\nat(f^*)\\
&=\Pr(\X\in \bbB(\boundary(f),\epsilon))+\int_{\bbB(\boundary(f),\epsilon)^\perp} [\1\{\sign(f(\x))=-1\}(2\eta(\x)-1)+(1-\eta(\x))]d \Pr\nolimits_{\X}(\x)\\
&\quad-\int_{\bbB(\boundary(f),\epsilon)^\perp} [\1\{\sign(f^*(\x))=-1\}(2\eta(\x)-1)+(1-\eta(\x))]d \Pr\nolimits_{\X}(\x)\\
&\quad-\int_{\bbB(\boundary(f),\epsilon)} [\1\{\sign(f^*(\x))=-1\}(2\eta(\x)-1)+(1-\eta(\x))]d \Pr\nolimits_{\X}(\x)\\
&=\Pr(\X\in \bbB(\boundary(f),\epsilon))-\int_{\bbB(\boundary(f),\epsilon)} [\1\{\sign(f^*(\x))=-1\}(2\eta(\x)-1)+(1-\eta(\x))]d \Pr\nolimits_{\X}(\x)\\
&\quad+\bbE[\1\{\sign(f(\X))\not=\sign(\eta(\X)-1/2),\X\in\bbB(\boundary(f),\epsilon)^\perp\}|2\eta(\X)-1|]\\
&=\Pr(\X\in \bbB(\boundary(f),\epsilon))-\bbE[\1\{\X\in \bbB(\boundary(f),\epsilon)\}\min\{\eta(\X),1-\eta(\X)\}]\\
&\quad+\bbE[\1\{\sign(f(\X))\not=\sign(\eta(\X)-1/2),\X\in\bbB(\boundary(f),\epsilon)^\perp\}|2\eta(\X)-1|]\\
&=\bbE[\1\{\X\in \bbB(\boundary(f),\epsilon)\}\max\{\eta(\X),1-\eta(\X)\}]\\
&\quad +\bbE[\1\{\sign(f(\X))\not=\sign(\eta(\X)-1/2),\X\in\bbB(\boundary(f),\epsilon)^\perp\}|2\eta(\X)-1|]\\
&=\Pr[\X\in \bbB(\boundary(f),\epsilon),\sign(f^*(\X))=Y]\\
&\quad +\bbE[\1\{\sign(f(\X))\not=\sign(f^*(\X)),\X\in\bbB(\boundary(f),\epsilon)^\perp\}|2\eta(\X)-1|].
\end{split}
\end{equation*}
\end{proof}

\section{Extra Experimental Results}
In this section, we provide extra experimental results to verify the effectiveness of our proposed method TRADES.
\subsection{Experimental setup in Section~\ref{subsec:white-box}}
We use the same model, i.e., the WRN-34-10 architecture in~\cite{zagoruyko2016wide}, to implement the methods in \cite{zheng2016improving}, \cite{kurakin2016adversarial} and \cite{ross2017improving}. The experimental setup is the same as TRADES, which is specified in the beginning of Section~\ref{sec:experiments}. For example, we use the same batch size and learning rate for all the methods. More specifically, we find that using one-step adversarial perturbation method like FGSM in the regularization term, defined in \cite{kurakin2016adversarial}, cannot defend against FGSM$^k$ (white-box) attack. Therefore, we use FGSM$^k$ with the cross-entropy loss to calculate the adversarial example $\X'$ in the regularization term, and the perturbation step size $\eta_1$ and number of iterations $K$ are the same as in the beginning of Section~\ref{sec:experiments}. 

As for defense models in Table~\ref{table: icml best paper defense}, we implement the `TRADES' models, the models trained by using other regularization losses in \cite{kurakin2016adversarial,ross2017improving,zheng2016improving}, and the defense model `Madry' in the antepenultimate line of Table~\ref{table: icml best paper defense}. We evaluate \cite{wong1805scaling}'s model based on the checkpoint provided by the authors. The rest of the models in Table~\ref{table: icml best paper defense} are reported in \cite{athalye2018obfuscated}.

\subsection{Extra attack results in Section~\ref{subsec:white-box}}

Extra white-box attack results are provided in Table \ref{table: supplementary-icml best paper defense}.

\begin{table*}[ht]
	\caption{Results of our method TRADES under different white-box attacks.}
	\label{table: supplementary-icml best paper defense}
	\centering
	\begin{tabular}{c||c|c|c|c|c}%
		\hline
		Defense & Under which attack & Dataset & Distance & $\cA_\nat(f)$   & $\cA_\adv(f)$  
		\\
		\hline
		{TRADES} ($1/\lambda=1.0$) & FGSM  & CIFAR10 &  $0.031$ ($\ell_\infty$) & 88.64\% & 56.38\% \\
		{TRADES} ($1/\lambda=1.0$) & DeepFool ($\ell_2$)  & CIFAR10 &  $0.031$ ($\ell_\infty$) & 88.64\% & 84.49\% \\
		\hline
		{TRADES} ($1/\lambda=6.0$) & FGSM  & CIFAR10 &  $0.031$ ($\ell_\infty$) & 84.92\% & 61.06\% \\
		{TRADES} ($1/\lambda=6.0$)  & DeepFool ($\ell_2$) & CIFAR10 &  $0.031$ ($\ell_\infty$) & 84.92\% & 81.55\% \\
		\hline
	\end{tabular}
\end{table*}

The attacks in Table~\ref{table: icml best paper defense} and Table~\ref{table: supplementary-icml best paper defense} include FGSM$^k$~\cite{kurakin2016adversarial}, DeepFool ($\ell_\infty$)~\cite{moosavi2016deepfool},  LBFGSAttack~\cite{tabacof2016exploring}, MI-FGSM~\cite{dong2018boosting}, C\&W~\cite{carlini2017towards}, FGSM~\cite{kurakin2016adversarial}, and DeepFool ($\ell_2$)~\cite{moosavi2016deepfool}.

\subsection{Extra attack results in Section~\ref{subsec:black-box}}
Extra black-box attack results are provided in Table \ref{table: MNIST FGSM black-box defense} and Table \ref{table: CIFAR10 FGSM black-box defense}. We apply black-box FGSM attack on the MNIST dataset and the CIFAR10 dataset.

\begin{table*}[ht]
	\caption{Comparisons of TRADES with prior defense models under black-box FGSM attack on the MNIST dataset. The models inside parentheses are source models which provide gradients to adversarial attackers.}
	\label{table: MNIST FGSM black-box defense}
	\centering
	\begin{tabular}{c||c|c}%
		\hline
		Defense Model & \multicolumn{2}{c}{Robust Accuracy $\cA_\adv(f)$}
		\\
		\hline
		{Madry} & 97.68\% (Natural)  & 98.11\% (Ours) \\ 
		\hline
		{TRADES} & \textbf{97.75\%} (Natural) & \textbf{98.44\%} (Madry)  \\
		\hline
	\end{tabular}
\end{table*}

\begin{table*}[ht]
	\caption{Comparisons of TRADES with prior defense models under black-box FGSM attack on the CIFAR10 dataset. The models inside parentheses are source models which provide gradients to adversarial attackers.}
	\label{table: CIFAR10 FGSM black-box defense}
	\centering
	\begin{tabular}{c||c|c}%
		\hline
		Defense Model & \multicolumn{2}{c}{Robust Accuracy $\cA_\adv(f)$}
		\\
		\hline
		{Madry} & 84.02\% (Natural)  & 67.66\% (Ours) \\ 
		\hline
		{TRADES} & \textbf{86.84\%} (Natural) & \textbf{71.52}\% (Madry)  \\
		\hline
	\end{tabular}
\end{table*}

\subsection{Experimental setup in Section~\ref{subsec:black-box}}
The robust accuracy of~\cite{madry2018towards}'s CNN model is $96.01\%$ on the MNIST dataset. The robust accuracy of~\cite{madry2018towards}'s WRN-34-10 model is $47.66\%$ on the CIFAR10 dataset. Note that we use the same white-box attack method introduced in the Section \ref{subsec:white-box}, i.e., FGSM$^{20}$,  to evaluate the robust accuracies of Madry's models.

\subsection{Interpretability of the robust models trained by TRADES}

\subsubsection{Adversarial examples on MNIST and CIFAR10 datasets}
In this section, we provide adversarial examples on MNIST and CIFAR10. We apply \textbf{\texttt{foolbox}}\footnote{Link: \url{https://foolbox.readthedocs.io/en/latest/index.html}}~\cite{rauber2017foolbox} to generate adversarial examples, which is able to return the smallest adversarial perturbations under the $\ell_\infty$ norm distance. The adversarial examples are generated by using FGSM$^{k}$ (white-box) attack on the models described in Section~\ref{sec:experiments}, including $\textit{Natural}$ models, $\textit{Madry}$'s models and $\textit{TRADES}$ models. Note that the FGSM$^{k}$ attack is \texttt{foolbox.attacks.LinfinityBasicIterativeAttack} in \textbf{\texttt{foolbox}}. See Figure~\ref{figure: MNIST-adv-images} and Figure~\ref{figure: CIFAR10-adv-images} for the adversarial examples of different models on  MNIST and CIFAR10 datasets.

\subsubsection{Adversarial examples on Bird-or-Bicycle dataset}
We find that the robust models trained by TRADES have strong interpretability. To see this, we apply a (spatial-tranformation-invariant) version of TRADES to train ResNet-50 models in response to the unrestricted adversarial examples of the bird-or-bicycle dataset~\cite{brown2018unrestricted}. The dataset is \emph{bird-or-bicycle}, which consists of ~30,000 pixel-$224\times 224$ images with label either `bird' or `bicycle'. The unrestricted threat models include structural perturbations, rotations, translations, resizing, 17+ common corruptions, etc.

We show in Figures \ref{figure: interpretability bike} and \ref{figure: interpretability bird} the adversarial examples by the boundary attack with random spatial transformation on our robust model trained by the variant of TRADES. The boundary attack~\cite{brendel2017decision} is a black-box attack method which searches for data points near the decision boundary and attack robust models by these data points. Therefore, the adversarial images obtained by boundary attack characterize the images around the decision boundary of robust models. We attack our model by boundary attack with random spatial transformations, a baseline in the competition. The classification accuracy on the adversarial test data is as high as $95\%$ (at $80\%$ coverage), even though the adversarial corruptions are perceptible to human. We observe that the robust model trained by TRADES has strong interpretability: in Figure \ref{figure: interpretability bike} all of adversarial images have obvious feature of `bird', while in Figure \ref{figure: interpretability bird} all of adversarial images have obvious feature of `bicycle'. This shows that images around the decision boundary of truly robust model have features of both classes.

\newpage

\begin{figure}
	\centering
	\subfigure[adversarial examples of class `3']{
		\fbox{\includegraphics[width=7.5cm]{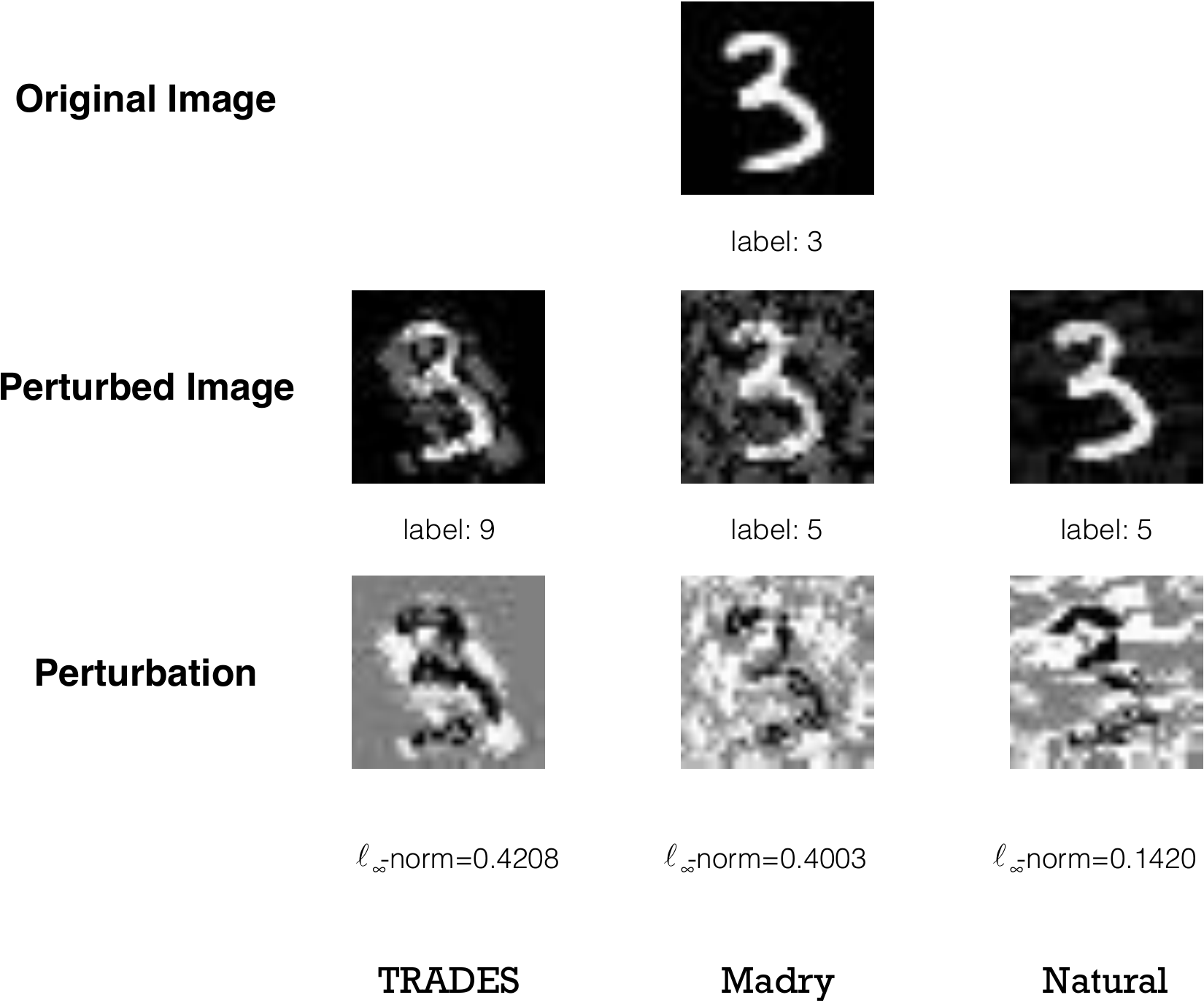}}
	}
	\quad
	\subfigure[adversarial examples of class `4']{
		\fbox{\includegraphics[width=7.5cm]{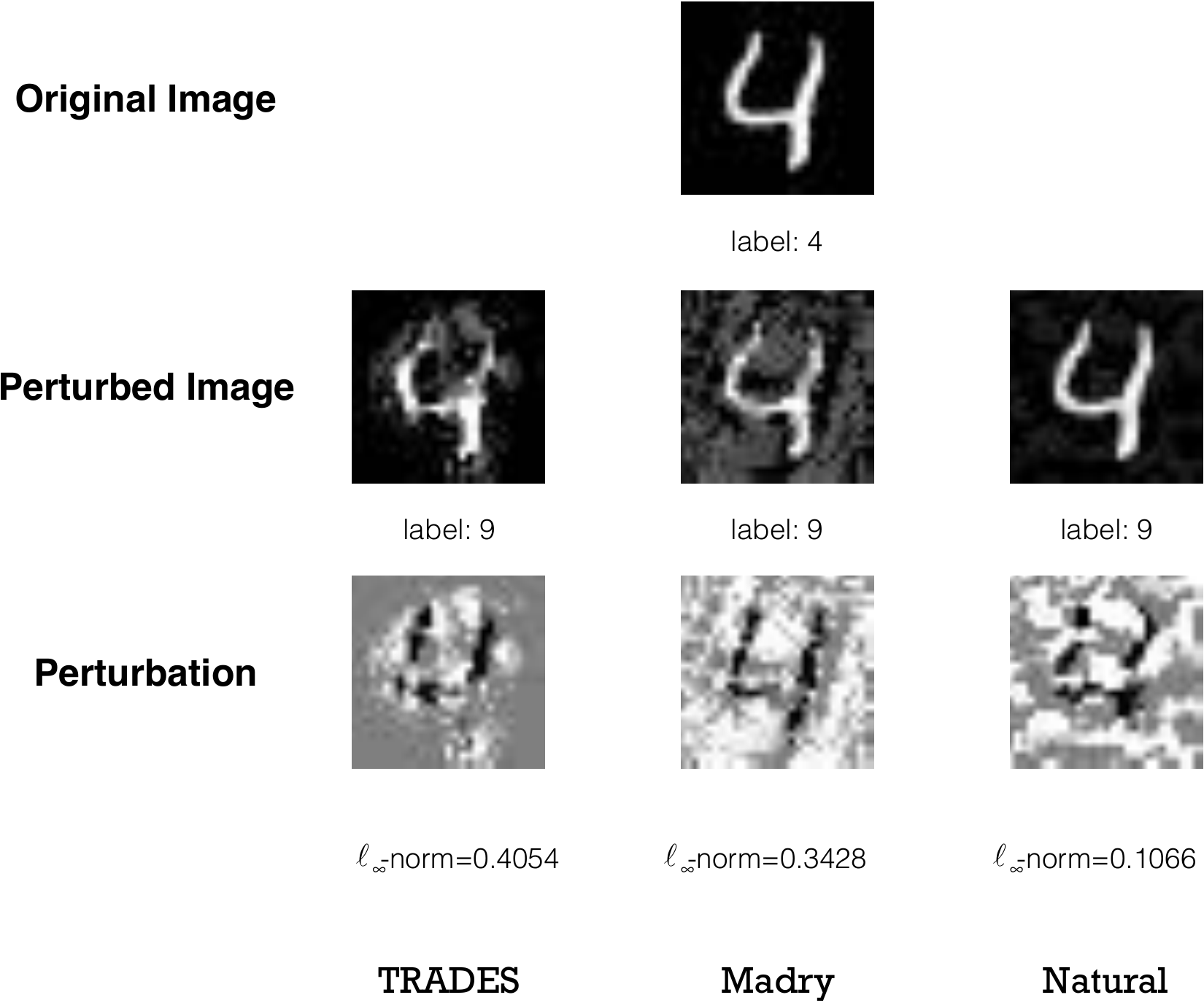}}
	}
	\quad
	\subfigure[adversarial examples of class `5']{
		\fbox{\includegraphics[width=7.5cm]{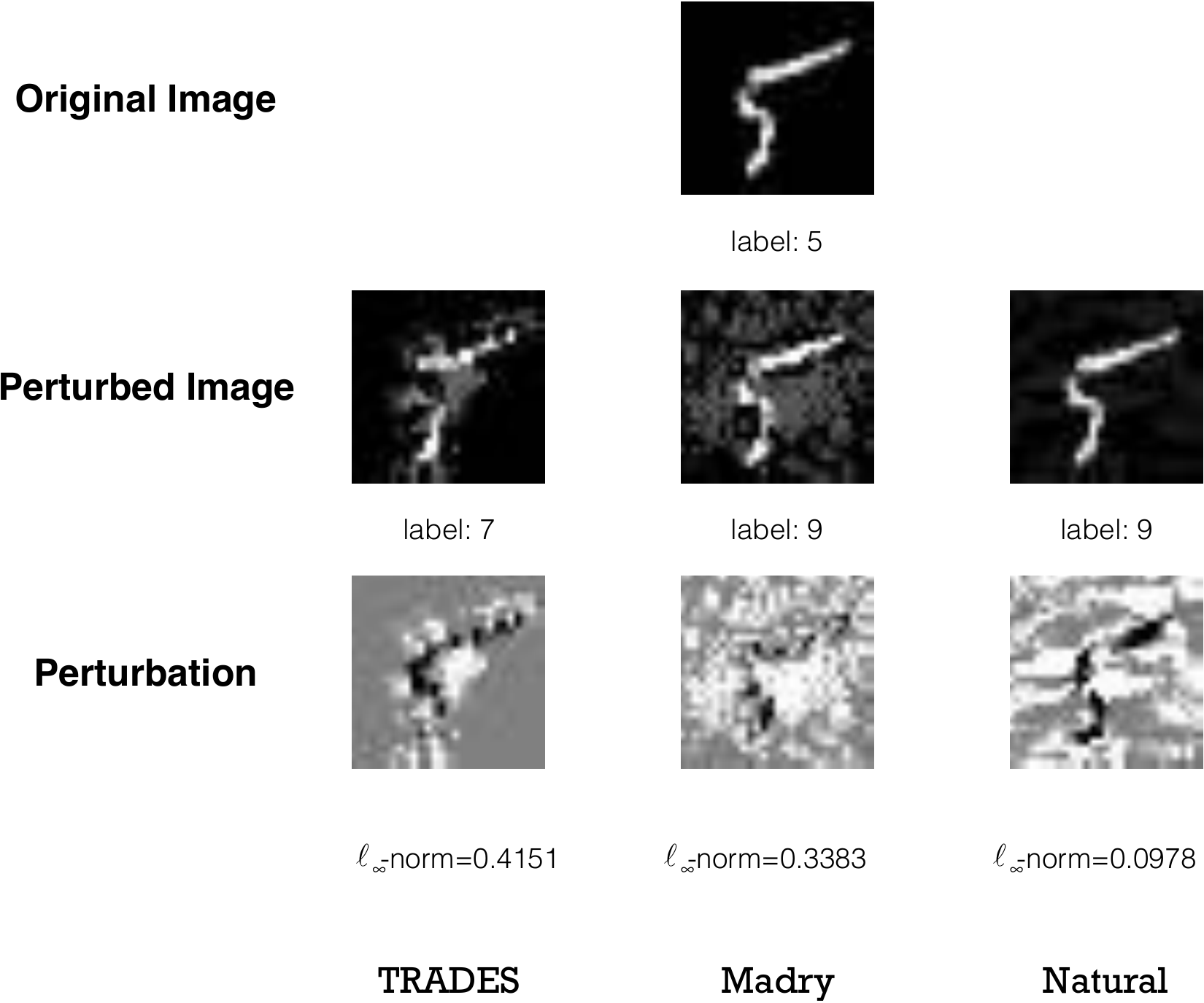}}
	}
	\quad
	\subfigure[adversarial examples of class `7']{
		\fbox{\includegraphics[width=7.5cm]{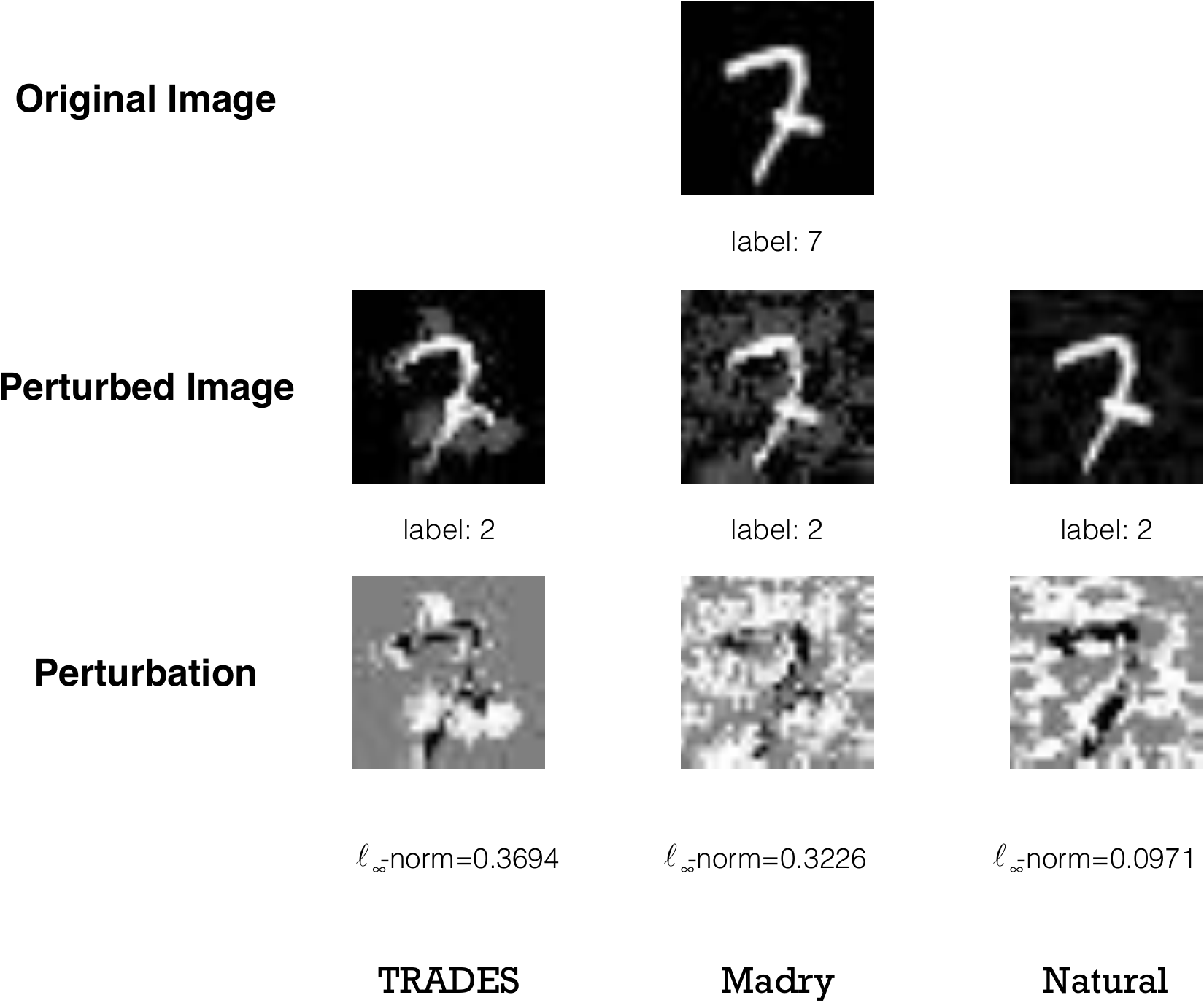}}
	}
	\caption{Adversarial examples on MNIST dataset. In each subfigure, the image in the first row is the original image and we list the corresponding correct label beneath the image. We show the perturbed images in the second row. The differences between the perturbed images and the original images, i.e., the perturbations, are shown in the third row. In each column, the perturbed image and the perturbation are generated by FGSM$^{k}$ (white-box) attack on the model listed below. The labels beneath the perturbed images are the predictions of the corresponding models, which are different from the correct labels. We record the smallest perturbations in terms of $\ell_\infty$ norm that make the models predict a wrong label.}
	\label{figure: MNIST-adv-images}
\end{figure}

\begin{figure}[htbp]
	\centering
	\subfigure[adversarial examples of class `automobile']{
		\fbox{\includegraphics[width=7.5cm]{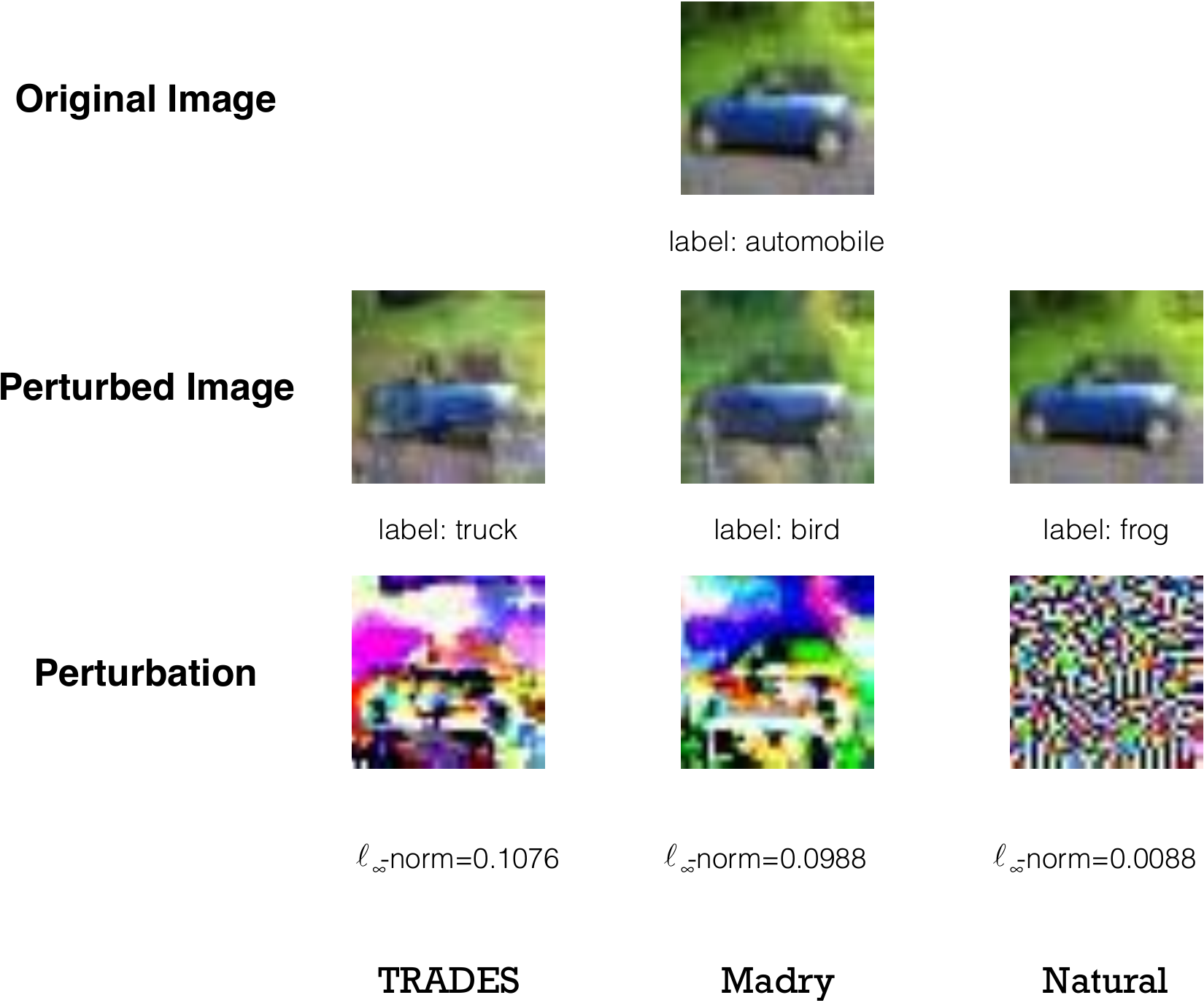}}
	}
	\quad
	\subfigure[adversarial examples of class `deer']{
		\fbox{\includegraphics[width=7.5cm]{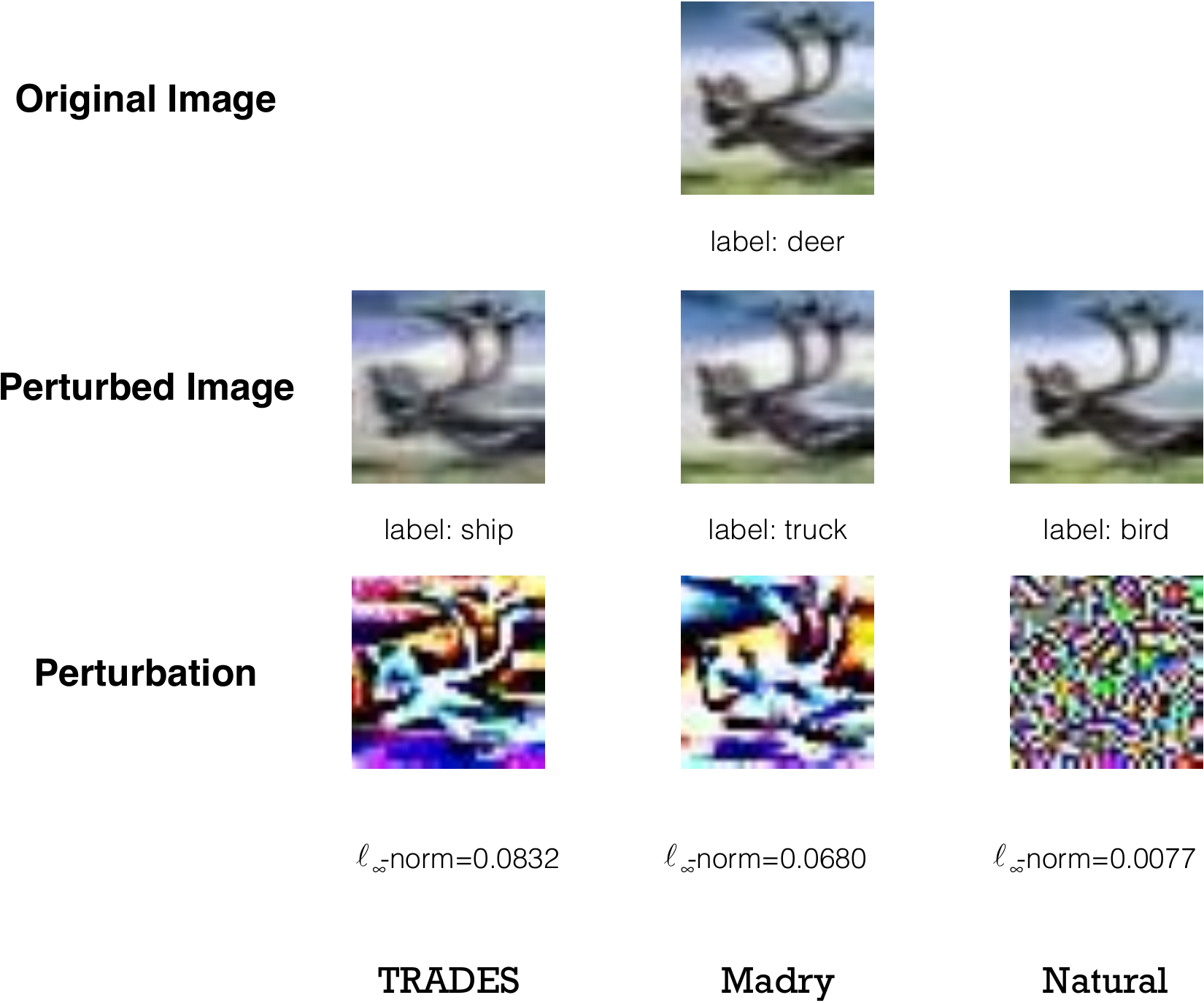}}
	}
	\quad
	\subfigure[adversarial examples of class `ship']{
		\fbox{\includegraphics[width=7.5cm]{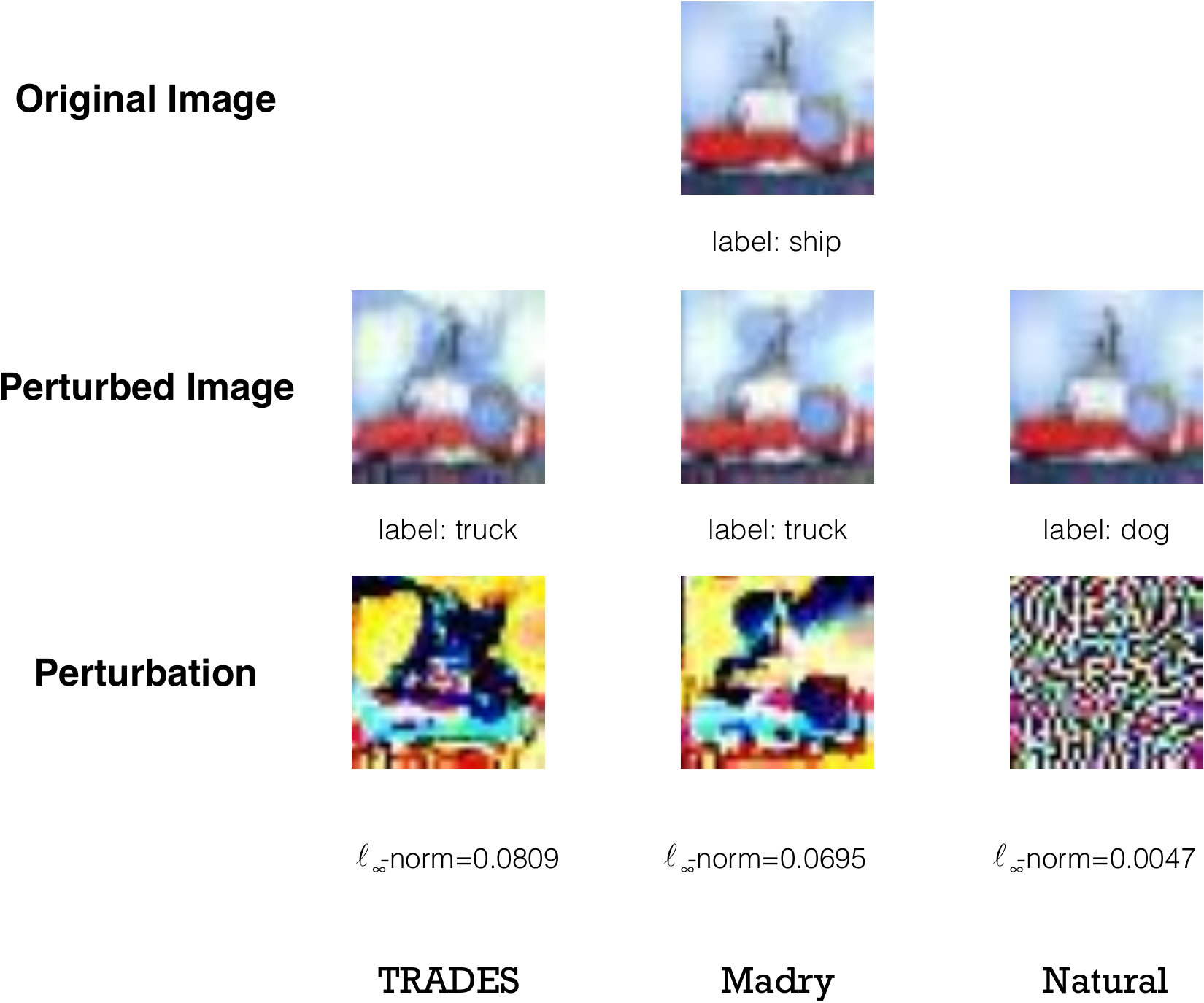}}
	}
	\quad
	\subfigure[adversarial examples of class `truck']{
		\fbox{\includegraphics[width=7.5cm]{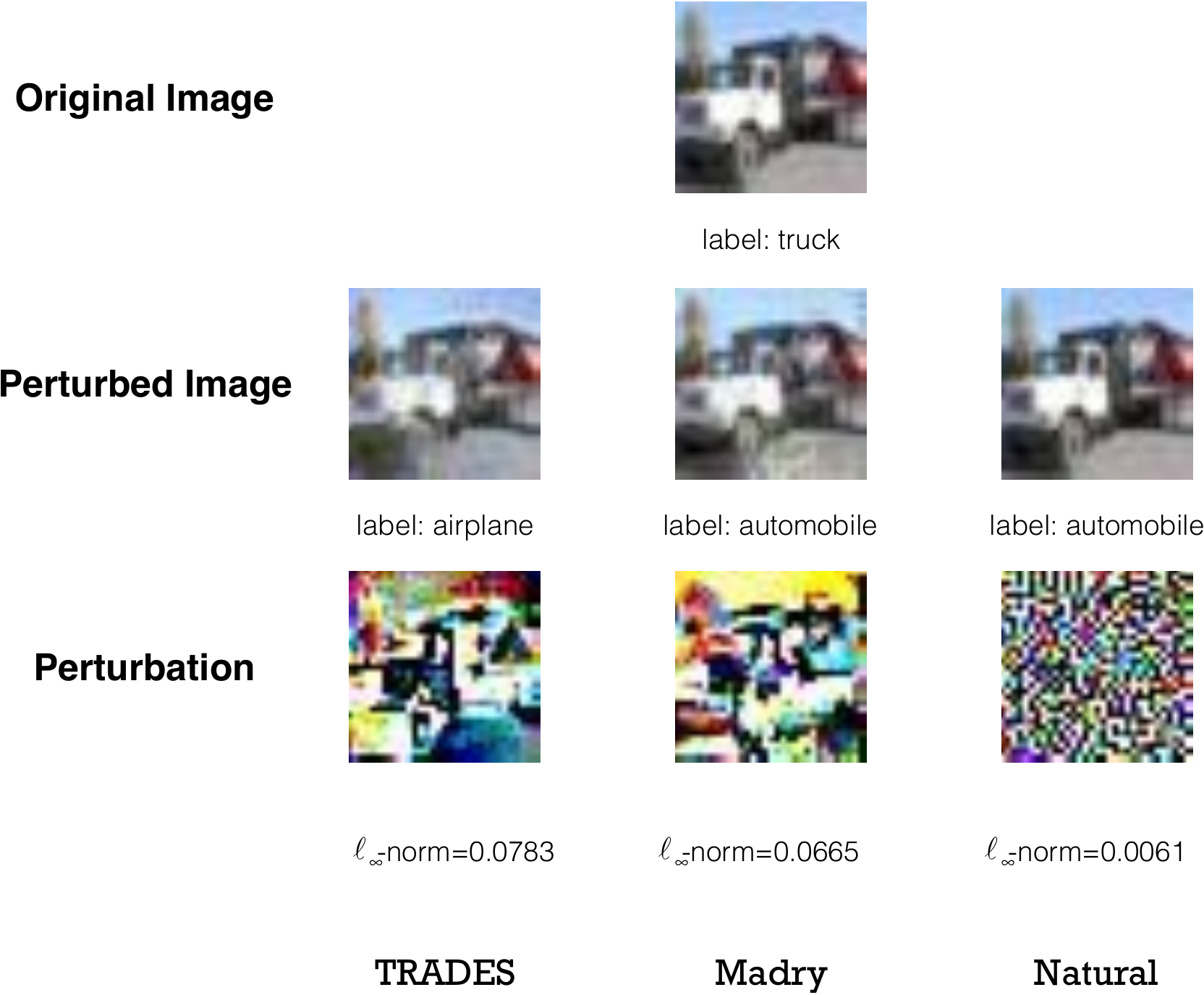}}
	}
	\caption{Adversarial examples on CIFAR10 dataset. In each subfigure, the image in the first row is the original image and we list the corresponding correct label beneath the image. We show the perturbed images in the second row. The differences between the perturbed images and the original images, i.e., the perturbations, are shown in the third row. In each column, the perturbed image and the perturbation are generated by FGSM$^{k}$ (white-box) attack on the model listed below. The labels beneath the perturbed images are the predictions of the corresponding models, which are different from the correct labels. We record the smallest perturbations in terms of $\ell_\infty$ norm that make the models predict a wrong label \textbf{(best viewed in color)}.}
	\label{figure: CIFAR10-adv-images}
\end{figure}

\begin{figure}[htbp]
\centering
	\subfigure[clean example]{
	\includegraphics[width=5cm]{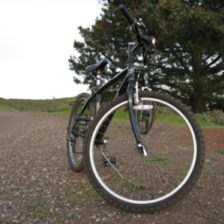}}
	\subfigure[adversarial example by boundary attack with random spatial transformation]{
	\includegraphics[width=5cm]{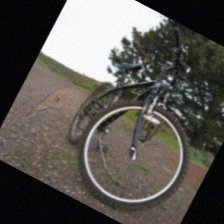}}\\
	\subfigure[clean example]{
	\includegraphics[width=5cm]{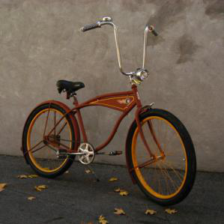}}
	\subfigure[adversarial example by boundary attack with random spatial transformation]{
	\includegraphics[width=5cm]{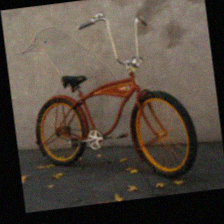}}\\
	\subfigure[clean example]{
	\includegraphics[width=5cm]{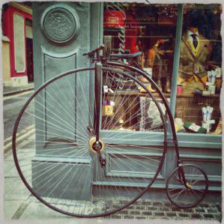}}
	\subfigure[adversarial example by boundary attack with random spatial transformation]{
	\includegraphics[width=5cm]{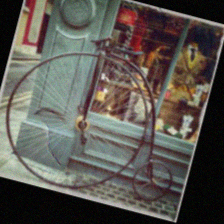}}
	\caption{Adversarial examples by boundary attack with random spatial transformation on the ResNet-50 model trained by a variant of TRADES. The ground-truth label is `bicycle', and our robust model recognizes the adversarial examples correctly as `bicycle'. It shows in the second column that all of adversarial images have obvious feature of `bird' \textbf{(best viewed in color)}.}
	\label{figure: interpretability bike}
\end{figure}

\begin{figure}[htbp]
\centering
	\subfigure[clean example]{
	\includegraphics[width=5cm]{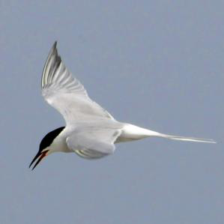}}
	\subfigure[adversarial example by boundary attack with random spatial transformation]{
	\includegraphics[width=5cm]{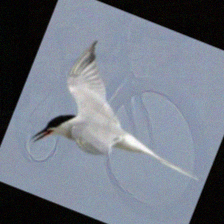}}\\
	\subfigure[clean example]{
	\includegraphics[width=5cm]{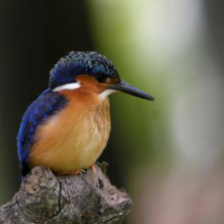}}
	\subfigure[adversarial example by boundary attack with random spatial transformation]{
	\includegraphics[width=5cm]{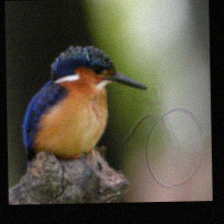}}\\
	\subfigure[clean example]{
	\includegraphics[width=5cm]{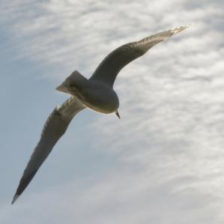}}
	\subfigure[adversarial example by boundary attack with random spatial transformation]{
	\includegraphics[width=5cm]{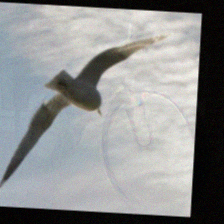}}
	\caption{Adversarial examples by boundary attack with random spatial transformation on the ResNet-50 model trained by a variant of TRADES. The ground-truth label is `bird', and our robust model recognizes the adversarial examples correctly as `bird'. It shows in the second column that all of adversarial images have obvious feature of `bicycle' \textbf{(best viewed in color)}.}
	\label{figure: interpretability bird}
\end{figure}

\end{document}